\documentclass[a4paper,12pt]{article}

\usepackage{a4wide}

\hyphenation{analysis onemax Doerr parameter leadingones Hoeffding develop-ment Krejca Carola}

\usepackage[utf8]{inputenc}
\usepackage[unicode=true,hypertexnames=false]{hyperref}
\usepackage{amsmath,amssymb,amsthm,amsfonts}
\usepackage{dsfont} 
\usepackage{mathtools} 
\usepackage[linesnumbered,ruled]{algorithm2e}
\let\oldnl\nl
\newcommand{\nonl}{\renewcommand{\nl}{\let\nl\oldnl}}
\usepackage{xspace}
\usepackage[dvipsnames,table]{xcolor}
\usepackage{soul} 
\definecolor{lightblue}{rgb}{0.7,0.7,1}
\sethlcolor{lightblue} 
\usepackage{tikz}
\usetikzlibrary{decorations.pathreplacing,calc,tikzmark}
\usepackage{multirow}

\usepackage{pgfplots}
\pgfplotsset{compat=newest}
\pgfplotscreateplotcyclelist{myplotcycle}{%
blue\\
red!80!black\\
violet!80!black\\
orange\\
green!70!black\\
cyan!80!black\\
black\\
very thick,blue,dashed\\ 
}
\pgfplotscreateplotcyclelist{threeconfigs}{%
red!80!black\\
green!70!black\\
black\\
}
\pgfplotscreateplotcyclelist{twoconfigsandea}{%
red!80!black\\
green!70!black\\
very thick,blue,dashed\\ 
}

\clubpenalty=10000
\widowpenalty=10000
\allowdisplaybreaks

\newcommand{\oea}{\mbox{$(1 + 1)$~EA}\xspace}
\newcommand{\oplea}{\mbox{$(1+\lambda)$~EA}\xspace}
\newcommand{\mpoea}{\mbox{$(\mu+1)$~EA}\xspace}
\newcommand{\mplea}{\mbox{$(\mu+\lambda)$~EA}\xspace}
\newcommand{\mclea}{\mbox{$(\mu,\lambda)$~EA}\xspace}

\newcommand{\ollga}{${(1 + (\lambda , \lambda))}$~GA\xspace}

\newcommand{\onemax}{\textsc{OneMax}\xspace}
\newcommand{\leadingones}{\textsc{Leading\-Ones}\xspace}
\newcommand{\om}{\textsc{OM}\xspace}
\newcommand{\jump}{\textsc{Jump}\xspace}
\newcommand{\N}{{\mathbb N}}
\newcommand{\R}{{\mathbb R}}
\newcommand{\eps}{\varepsilon}

\DeclareMathOperator{\Bin}{Bin}
\DeclareMathOperator{\Geom}{Geom}
\DeclareMathOperator{\pow}{pow}
\DeclareMathOperator*{\argmax}{arg\,max}
\DeclareMathOperator{\total}{total}

\newtheorem{theorem}{Theorem}
\newtheorem{lemma}[theorem]{Lemma}
\newtheorem{corollary}[theorem]{Corollary}

\let\originalleft\left
\let\originalright\right
\renewcommand{\left}{\mathopen{}\mathclose\bgroup\originalleft}
\renewcommand{\right}{\aftergroup\egroup\originalright}

\begin{document}

\title{Lazy Parameter Tuning and Control: Choosing All Parameters Randomly From a Power-Law Distribution}

\author{Denis Antipov\thanks{Corresponding author} \\
		ITMO University \\
  		St. Petersburg, Russia \\
    \and
    Maxim Buzdalov \\
		ITMO University \\
  		St. Petersburg, Russia \\
		\and
		Benjamin Doerr \\
		Laboratoire d'Informatique (LIX), \\
		CNRS, \'Ecole Polytechnique, \\ 
		Institut Polytechnique de Paris \\
		Palaiseau, France \\
} 

\maketitle
{\sloppy

\begin{abstract}
    Most evolutionary algorithms have multiple parameters and their values drastically affect the performance. Due to the often complicated interplay of the parameters, setting these values right for a particular problem (parameter tuning) is a challenging task. This task becomes even more complicated when the optimal parameter values change significantly during the run of the algorithm since then a dynamic parameter choice (parameter control) is necessary.
    
    In this work, we propose a lazy but effective solution, namely choosing all parameter values (where this makes sense) in each iteration randomly from a suitably scaled power-law distribution. To demonstrate the effectiveness of this approach, we perform runtime analyses of the $(1+(\lambda,\lambda))$ genetic algorithm with all three parameters chosen in this manner. We show that this algorithm on the one hand can imitate simple hill-climbers like the $(1+1)$ EA, giving the same asymptotic runtime on problems like OneMax, LeadingOnes, or Minimum Spanning Tree. On the other hand, this algorithm is also very efficient on jump functions, where the best static parameters are very different from those necessary to optimize simple problems. We prove a performance guarantee that is comparable to the best performance known for static parameters. For the most interesting case that the jump size $k$ is constant, we prove that our performance is asymptotically better than what can be obtained with any static parameter choice. We complement our theoretical results with a rigorous empirical study confirming what the asymptotic runtime results suggest.
\end{abstract}

\section{Introduction}
\label{sec:intro}

Evolutionary algorithms (EAs) are general-purpose randomized search heuristics. They are adapted to the particular problem to be solved by choosing suitable values for their parameters. This flexibility is a great strength on the one hand, but a true challenge for the algorithm designer on the other. Missing the right parameter values can lead to catastrophic performance losses.

Despite being a core topic of both theoretical and experimental research, general advice on how to set the parameters of an EA are still rare. The difficulty stems from the fact that different problems need different parameters, different instances of the same problem may need different parameters, and even during the optimization process on one instance the most profitable parameter values may change over time. 

In an attempt to design a simple one-size-fits-all solution, Doerr, Le, Makhmara, and Nguyen~\cite{DoerrLMN17} proposed to use random parameter values chosen independently in each iteration from a power-law distribution (note that random mutation rates were used before~\cite{DoerrDK18,DoerrDK19}, but with different distributions and for different reasons). Mostly via mathematical means, this was shown to be highly effective for the choice of the mutation rate of the \oea when optimizing the jump benchmark, which has the property that the optimal mutation rate depends strongly on the problem instance. More precisely, for a jump function with representation length $n$ and jump size $2 \le k = o(\sqrt n)$, the standard mutation rate $p=1/n$ gives an expected runtime of $(1+o(1)) e n^k$, where $e \approx 2.718$ is Euler's number. The asymptotically optimal mutation rate $p = k/n$ leads to a runtime of $(1+o(1)) n^k (e/k)^k$. Deviating from the optimal rate by a small constant factor increases the runtime by a factor exponential in~$k$. When using the mutation rate $\alpha/n$, where $\alpha \in [1..n/2]$ is sampled independently in each iteration from a power-law distribution with exponent ${\beta > 1}$, the runtime becomes $\Theta(k^{\beta-0.5} n^k (e/k)^k)$, where the constants hidden by the asymptotic notation are independent of $n$ and $k$. Consequently, apart from the small polynomial factor $\Theta(k^{\beta-0.5})$, this randomized mutation rate gives the performance of the optimal mutation rate and in particular also achieves the super-exponential runtime improvement by a factor of $(e/k)^{\Theta(k)}$ over the standard rate $1/n$.

The idea of choosing parameter values randomly according to a power-law distribution was quickly taken up by other works. In~\cite{FriedrichQW18,QuinzanGWF21},
variants of the heavy-tailed mutation operator were proposed and analyzed on TwoMax, Jump, MaxCut, and several sub-modular problems. In~\cite{WuQT18,DoerrZ21aaai,DoerrQ22ppsn}, power-law mutation in multi-objective optimization was studied. In~\cite{CorusOY21}, the authors compared power-law mutation and artificial immune systems. In~\cite{AntipovBD22}, heavy-tailed mutation was regarded for the~\ollga, however again only for a single parameter and this parameter being the mutation rate. Very recently, the first analysis of a heavy-tailed choice of a parameter of the selection operator was conducted~\cite{DangELQ22}.

While optimizing a single parameter is already non-trivial (and the latest work~\cite{AntipovBD22} showed that the heavy-tailed mutation rate can even give results better than any static mutation rate, that is, it can inherit advantages of dynamic parameter choices), the really difficult problem is finding good values for several parameters of an algorithm. Here the often intricate interplay between the different parameters can be a true challenge (see, e.g.,~\cite{Doerr16} for a theory-based determination of the optimal values of three parameters). 

The only attempt to choose randomly more than one parameter was made in~\cite{AntipovD20ppsn} for the \ollga having a three-dimen\-sio\-nal parameter space spanned by the parameters population size $\lambda$, mutation rate $p$, and crossover bias $c$. For this algorithm, first proposed in~\cite{DoerrDE15}, the product $d = pcn$ 
of mutation rate, crossover bias, and representation length describes the expected distance of an offspring from the parent. It was argued heuristically in~\cite{AntipovD20ppsn} that a reasonable parameter setting should have $p = c$, that is, the same mutation rate and crossover bias. With this reduction of the parameter space to two dimensions, the parameter choice in~\cite{AntipovD20ppsn} was made as follows. Independently (and independently in each iteration), both $\lambda$ and $d$ were chosen from a power-law distribution. Mutation rate and crossover bias were both set to $\sqrt{d/n}$ to ensure $p=c$ and $pcn = d$. When using unbounded power-law distributions with exponents $\beta_\lambda=2+\eps$ and $\beta_d=1+\eps'$ with $\eps, \eps' > 0$ any small constants, this randomized way of setting the parameters gave an expected runtime of $e^{O(k)} (\frac nk)^{(1+\eps)k/2}$ on jump functions with jump size $k \ge 3$. This is very similar (slightly better for $k < \frac 1 \eps$, slightly worse for $k>\frac 1 \eps$) to the runtime of $(\frac nk)^{(k+1)/2} e^{O(k)}$ obtainable with the optimal static parameters. This is a surprisingly good performance for a parameter-less approach, in particular, when compared to the runtime of $\Theta(n^k)$ of many classic evolutionary algorithms. Note that both for the static and dynamic parameters only upper bounds were proven\footnote{A lower bound of $(\frac nk)^{k/2} e^{\Theta(k)}$ fitness evaluations on the runtime of the \ollga with static parameters was shown in~\cite{AntipovDK22}, but this bound was proven for the initialization in the local optimum of $\jump_k$ and it does not include the runtime until the algorithm gets to the local optimum from a random solution.}, hence we cannot make a rigorous conclusion on which algorithm performs better on jump. The proofs of these upper bounds however suggest to us that they are tight.

\emph{Our results:} While the work~\cite{AntipovD20ppsn} showed that in principle it can be profitable to choose more than one parameter randomly from a power-law distribution, it relied on the heuristic assumption that one should take the mutation rate equal to the crossover bias. There is nothing wrong with using such heuristic insight, however, one has to question if an algorithm user (different from the original developers of the \ollga) would have easily found this relation $p=c$. 

In this work, we show that such heuristic preparations are not necessary: One can simply choose all three parameters of the \ollga from (scaled) power-law distributions and obtain a runtime comparable to the ones seen before. More precisely, when using the power-law exponents $2+\eps$ for the distribution of the population size and $1 + \eps'$ for the distributions of the parameters $p$ and $c$ and scaling the distributions for $p$ and $c$ by dividing by $\sqrt n$ (to obtain a constant distance of parent and offspring with constant probability), we obtain the same $e^{O(k)} (\frac nk)^{(1+\eps)k/2}$ runtime guarantee as in~\cite{AntipovD20ppsn}.
From our theoretical results one can see that the exact choice of $\eps'$ does not affect the asymptotical runtime neither on easy functions such as \onemax, nor on hard functions such as $\jump_k$. Hence if an algorithm user would choose all exponents as $2+\eps$, which is a natural choice as it leads to a constant expectation and a super-constant variance as usually desired from a power-law distribution, the resulting runtimes would still be $O(n \log n)$ for \onemax and $e^{O(k)} (\frac nk)^{(1+\eps)k/2}$ for jump functions with gap size~$k$.

With this approach, the only remaining design choice is the scaling of the distributions. It is clear that this cannot be completely avoided simply because of the different scales of the parameters (mutation rates are in $[0,1]$, population sizes are positive integers). However, we argue that here very simple heuristic arguments can be employed. For the population size, being a positive integer, we simply use a power-law distribution on the non-negative integers. For the mutation rate and the crossover bias, we definitely need some scaling as both number have to be in $[0,1]$. Recalling that (and this is visible right from the algorithm definition) the expected distance of offspring from their parents in this algorithm is $d = pcn$ and recalling further the general recommendation that EAs should generate offspring with constant Hamming distance from the parent with reasonable probability (this is, for example, implicit both in the general recommendation to use a mutation rate of $1/n$ and in the heavy-tailed mutation operator proposed in~\cite{DoerrLMN17}), a scaling leading to a constant expected value of $d$ appears to be a good choice. We obtain this by taking both $p$ and $c$ from power-law distributions on the positive integers scaled down by a factor of $\sqrt n$. This appears again to be the most natural choice. We note that if an algorithm user would miss this scaling and scale down both $p$ and $c$ by a factor of $n$ (e.g., to obtain an expected constant number of bits flipped in the mutation step), then our runtime estimates would increase by a factor of $n^{\frac{\beta_p + \beta_c}{2} - 1}$, which is still not much compared to the roughly $n^{k/2}$ runtimes we have and the $\Theta(n^k)$ runtimes of many simple evolutionary algorithms. 

Our precise result is a mathematical runtime analysis of this heavy-tailed algorithm for arbitrary parameters of the three heavy-tailed distributions (power-law exponent and upper bound on the range of positive integers it can take, including the case of no bound) on a set of ``easy'' problems (\onemax, \leadingones, the minimum spanning tree and the partition problem) and on \jump function. We show that on easy problems the heavy-tailed \ollga asymptotically is not worse than the \oea, and on \jump it significantly outperforms the \oea for a wide range of the parameters of power-law distributions.
These results show that the absolutely best performance can be obtained by guessing correctly suitable upper bounds on the ranges. Since guessing these parameters wrong can lead to significant performance losses, whereas the gains from these optimal parameter values are not so high, we would rather advertise our parameter-less ``layman recommendation'' to use unrestricted ranges and power-law exponents slightly more than two for the population size and slightly more than one for other parameters.
These recommendations are supported by the empirical study shown in Section~\ref{sec:experiments}.

Our work also provides an example where a dynamic (here simply randomized) parameter choice provably gives an asymptotic runtime improvement. This improvement is significantly more pronounced than the $o(\sqrt{\log n})$ factor speed-up observed in~\cite{DoerrD18,AntipovBD22} for the optimization of \onemax via the \ollga.

We note that our situation  is different, e.g., from the optimization of jump functions via the \oea. Here the mutation rate $\frac{k}{n}$ is asymptotically optimal~\cite{DoerrLMN17} for $\jump_k$. Clearly, for the easy \onemax-type part of the optimization process, the mutation rate $\frac 1n$ would be superior, but the damage from using the larger rate $\frac kn$ only leads to a lower-order increase of the runtime.

We prove that this is different for the optimization of the jump functions via the \ollga. Since this effect is already visible for constant values of $k$, and in fact strongest visible, to ease the presentation, we assume that $k$ is constant. We note that only for constant $k$ the different variants of the \ollga had a polynomial runtime, so clearly, $k$ constant (and not too large) is the most interesting case.

For constant $k$, our result is $e^{O(k)} (\frac nk)^{(1+\eps)k/2}$. The best runtime that could be obtained with a static mutation rate was $e^{O(k)} n^{(k + 1)/2} k^{-k/2}$. Hence by choosing $\eps$ sufficiently small, our upper bound is asymptotically smaller than the best upper bound for static parameters. Unfortunately, no lower bounds were proven in~\cite{AntipovDK22} for static parameters. To rigorously support our claim that dynamic parameter choices can asymptotically outperform static ones when optimizing jump functions via the \ollga, in Section~\ref{sec:recommendations} we prove such a lower bound. Since this is not the main topic of this work, we shall not go as far as proving that the upper bound for static parameters is tight, but we content ourselves with a simpler proof of a weaker lower bound, which however suffices to support our claim of the superiority of dynamic parameter choices.

In summary, our results demonstrate that choosing all parameters of an algorithm randomly according to a simple (scaled) power-law can be a good way to overcome the problem of choosing appropriate fixed or dynamic parameter values. We are optimistic that this approach will lead to a good performance also for other algorithms and other optimization problems.

\section{Preliminaries}
\label{sec:preliminaries}

In this section we collect definitions and tools which we use in the paper. To avoid misreading of our results, we note that we use the following notation. By $\N$ we denote the set of all positive integer numbers and by $\N_0$ we denote the set of all non-negative integer numbers. We write $[a..b]$ to denote an integer interval including its borders and $[a, b]$ to denote a real-valued interval including its borders. 
For any probability distribution $\mathcal{L}$ and random variable $X$, we write $X\sim\mathcal{L}$ to indicate that $X$ follows the law $\mathcal{L}$.
We denote the binomial law with parameters $n \in \N$ and $p \in [0,1]$ by $\Bin\left(n, p\right)$. We denote the geometric distribution taking values in $\{1, 2, \dots\}$ with success probability $p \in [0,1]$ by $\Geom(p)$. We denote by $T_I$ and $T_F$ the number of iterations and the number of fitness evaluations performed until some event holds (which is always specified in the text). 

\subsection{Objective Functions} 

In this paper we consider five benchmark functions and problems, namely \onemax, \leadingones, the minimum spanning tree problem, the partition problem and $\jump_k$.
All of them are pseudo-Boolean functions, that is, they are defined on the set of bit strings of length $n$ and return a real number.  


\textbf{\onemax} returns the number of one-bits in its argument, that is,
    $\onemax(x) = \om(x) = \sum_{i = 1}^n x_i$.
It is one of the most intensively studied benchmarks in evolutionary computation. Many evolutionary algorithms can find the optimum of \onemax in time $O(n \log n)$~\cite{Muhlenbein92,JansenJW05,Witt06,AntipovD21algo}. The \ollga with a fitness-dependent or self-adjusting choice of the population size~\cite{DoerrDE15,DoerrD18} or with a heavy-tailed random choice of the population size~\cite{AntipovBD21gecco} is capable of solving \onemax  in linear time when the other two parameters are chosen suitably depending on the population size.

\textbf{\leadingones} returns the number of the leading ones in a bit string. In more formal words we maximize function
\begin{align*}
    \leadingones(x) = \sum_{i = 1}^{n} \prod_{j = 1}^i x_j.
\end{align*}
The runtime of the most classic EAs is at least quadratic on \leadingones. More precisely, the runtime of the \oea is $\Theta(n^2)$~\cite{Rudolph97,DrosteJW02}, the runtime of the \mpoea is $\Theta(n^2 + \mu n\log(n))$~\cite{Witt06}, the runtime of the \oplea is $\Theta(n^2 + \lambda n)$~\cite{JansenJW05} and the runtime of the \mplea is $\Omega(n^2 + \frac{\lambda n}{\max\{1,\log(\lambda / n)\}})$~\cite{BadkobehLS14}. It was shown in~\cite{AntipovDK19foga} that the \ollga with standard parameters ($\lambda \in [1..\frac{n}{2}]$, $p = \frac{\lambda}{n}$ and $c = \frac{1}{\lambda}$) also has a $\Theta(n^2)$ runtime on \leadingones.

In the \textbf{minimum spanning tree problem} (MST for brevity) we are given an undirected graph $G = (V, E)$ with positive integer edge weights defined by a weight function $\omega: E \to \N_{\ge 1}$. We assume that this graph does not have parallel edges or loops. The aim is to find a connected subgraph of a minimum weight. By $n$ we denote the number of vertices, by $m$ we denote the number of edges in $G$. 

This problem can be solved by minimizing the following fitness function defined on all subgraphs $G' = (V, E')$ of the given graph $G$.
\begin{align*}
    f(G') = (W_{\total} + 1)^2 cc(G') + (W_{\total}+ 1) |E'| + \sum_{e \in E'} \omega(e),
\end{align*}
where $cc(G')$ is the number of connected components in $G'$ and $W_{\total}$ is the total weight of the graph $G$, that is, the sum of all edge weights. This definition of the fitness guarantees that any connected graph has a better (in this case, smaller) fitness than any unconnected graph and any tree has a better fitness than any graph with cycles.

The natural representation for subgraphs used in~\cite{NeumannW07} is via bit-strings of length $m$, where each bit corresponds to some particular edge in graph $G$. An edge is present in subgraph $G'$ if and only if its corresponding bit is equal to one. In~\cite{NeumannW07} it was shown that the \oea solves the MST problem with the mentioned representation and fitness function in expected number of $O(m^2 \log(W_{\total}))$ iterations. 

In the \textbf{partition problem} we have a set of $n$ objects with positive integer weights $w_1, w_2, \dots, w_n$ and our aim is to split the objects into two sets (usually called \emph{bins}) such that the total weight of the heavier bin is minimal. Without loss of generality we assume that the weights are sorted in a non-increasing order, that is, $w_1 \ge w_2 \ge \dots \ge w_n$. By $w$ we denote the total weight of all objects, that is, $w = \sum_{i = 1}^n w_i$. By a $(1 + \delta)$ approximation (for any $\delta > 0$) we mean a solution in which the weight of the heavier bin is at most by a factor of $(1 + \delta)$ greater than in an optimal solution. 
    
Each partition into two bins can be represented by a bit string of length $n$, where each bit corresponds to some particular object. The object is put into the first bin if and only if the corresponding bit is equal to one. As fitness $f(x)$ of an individual $x$ we consider the total weight of the objects in the heavier bin in the partition which corresponds to $x$.

In~\cite{Witt05} it was shown that the \oea finds a $(\frac{4}{3} + \eps)$ approximation for any constant $\eps > 0$ of any partition problem in linear time and that it finds a $\frac{4}{3}$ approximation in time $O(n^2)$.

The function \textbf{$\jump_k$} (where $k$ is a positive integer parameter) is defined via \onemax as follows.

\begin{align*}
    \jump_k(x) = 
    \begin{cases}
        \om(x) + k, \text{ if } \om(x) \in [0..n - k] \cup \{n\}, \\
        n - \om(x), \text{ if } \om(x) \in [n - k + 1..n - 1].    
    \end{cases}
\end{align*}

A plot of $\jump_k$ is shown in Figure~\ref{fig:jump}. The main feature of $\jump_k$ is a set of local optima at distance $k$ from the global optimum and a valley of extremely low fitness in between. Most EAs optimizing $\jump_k$ first reach the local optima and then have to perform a jump to the global one, which turns out to be a challenging task for most classic algorithms. In particular, for all values of $\mu$ and $\lambda$ it was shown that \mplea and \mclea have a runtime of $\Omega(n^k)$ fitness evaluations when they optimize $\jump_k$~\cite{DrosteJW02,Doerr22}. Using a mutation rate of $\frac kn$~\cite{DoerrLMN17}, choosing it from a power-law distribution~\cite{DoerrLMN17}, or setting it dynamically with a stagnation detection mechanism~\cite{RajabiW20,RajabiW21evocop,RajabiW21gecco,DoerrR22} reduces the runtime of the \oea by a $k^{\Theta(k)}$ factor, however, for constant $k$ the runtime of the \oea remains $\Theta(n^k)$.  Many crossover-based algorithms have a better runtime on $\jump_k$, see~\cite{JansenW02,FriedrichKKNNS16,DangFKKLOSS16,DangFKKLOSS18,RoweA19,WhitleyVHM18} for results on algorithms different from the \ollga. Those beating the $\tilde O(n^{k-1})$ runtime shown in~\cite{DangFKKLOSS18} may appear somewhat artificial and overfitted to the precise definition of the jump function, see~\cite{Witt21}. Outside the world of genetic algorithms, the estimation-of-distribution algorithm cGA and the ant-colony optimizer $2$-MMAS$_{ib}$ were shown to optimize jump functions with small $k=O(\log n)$ in time $O(n \log n)$~\cite{HasenohrlS18,Doerr21cgajump,BenbakiBD21}. Runtime analyses for artificial immune systems, hyperheuristics, and the Metropolis algorithm exist~\cite{CorusOY17,CorusOY19,LissovoiOW19}, but their runtime guarantees are asymptotically weaker than $O(n^k)$ for constant $k$.


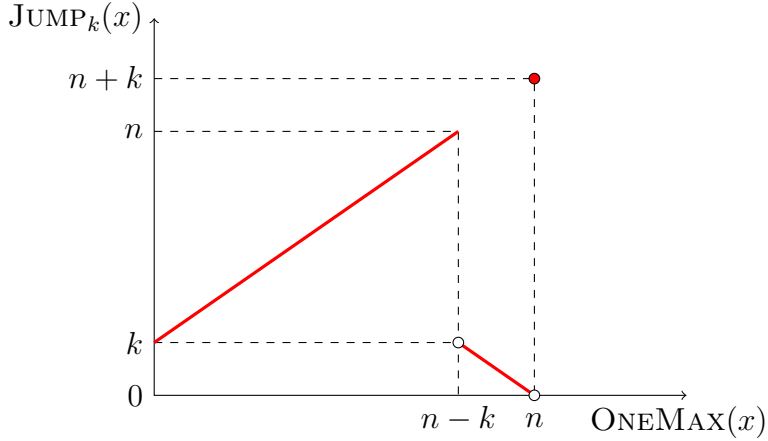
\begin{figure}
    \begin{center}
     \begin{tikzpicture} 
  
      \draw [dashed] (0, 4.2) -- (5, 4.2) -- (5, 0);
      \draw [dashed] (0, 3.5) -- (4, 3.5) -- (4, 0);
      \draw [dashed] (0, 0.7) -- (4, 0.7);
  
      \draw [{<[scale=1.5]}-{>[scale=1.5]}] (0, 5) -- (0, 0) -- (7, 0);
      \draw [very thick, red] (0, 0.7) -- (4, 3.5);
      \draw [very thick, red] (4, 0.7) -- (5, 0);

      \draw [fill=white] (4, 0.7) circle (0.7mm);
      \draw [fill=white] (5, 0) circle (0.7mm);
      \draw [fill=red, draw=black] (5, 4.2) circle (0.7mm);
  
      \node [below] at (6.9, 0) {$\onemax(x)$};
      \node [below] at (5, -0.1) {$n$};
      \node [left] at (0, 0) {$0$};
      \node [below] at (4, 0) {$n - k$};
      \node [left] at (0, 0.7) {$k$};
      \node [left] at (0, 3.5) {$n$};
      \node [left] at (0, 4.2) {$n + k$};
      \node [left] at (0, 5) {$\jump_k(x)$};
  
     \end{tikzpicture}
    \end{center}
    \caption{Plot of the $\jump_k$ function. As a function of unitation, the function value of a search point $x$ depends only on the number of one-bits in~$x$.}
    \label{fig:jump}
  \end{figure}

\subsection{Power-Law Distributions} 

We say that an integer random variable $X$ follows a power-law distribution with parameters $\beta$ and $u$ if 
\begin{align*}
    \Pr[X = i] = 
    \begin{cases}
        C_{\beta, u} i^{-\beta}, \text{ if } i \in [1..u], \\
        0, \text{ else.}
    \end{cases}
\end{align*}
Here $C_{\beta, u} = (\sum_{j = 1}^u j^{-\beta})^{-1}$ denotes the normalization coefficient. We write $X \sim \pow(\beta, u)$ and call $u$ the bounding of $X$ and $\beta$ the power-law exponent. 

The main feature of this distribution is that while having a decent probability to sample $X = \Theta(1)$ (where the asymptotic notation is used for $u \to +\infty$), we also have a good (inverse-polynomial instead of negative-exponential) probability to sample a super-constant value. 
The following lemmas show the well-known properties of the power-law distributions. Their proofs can be found, for example, in~\cite{AntipovD20ppsn}.

\begin{lemma}[Lemma 1 in~\cite{AntipovD20ppsn}]
    \label{lem:sum_estimates}
    For all positive integers $a$ and $b$ such that $b \ge a$ and for all $\beta > 0$, the sum $\sum_{i = a}^b i^{-\beta}$ is
    \begin{itemize}
        \item $\Theta((b + 1)^{1 - \beta} - a^{1 - \beta})$, if $\beta < 1$,
        \item $\Theta(\log(\frac{b + 1}{a}))$, if $\beta = 1$, and
        \item $\Theta(a^{1 - \beta} - (b + 1)^{1 - \beta})$, if $\beta > 1$,
    \end{itemize}
    where $\Theta$ notation is used for $b \to +\infty$.
\end{lemma}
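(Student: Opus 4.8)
The statement to prove is Lemma~\ref{lem:sum_estimates}, which gives asymptotic estimates for $\sum_{i=a}^b i^{-\beta}$ in three regimes of $\beta$.

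\textbf{Proof plan.} The natural approach is to sandwich the sum between integrals, using monotonicity of $x \mapsto x^{-\beta}$. Since this function is decreasing on $[1,\infty)$ for $\beta > 0$, for each integer $i \ge 1$ we have $\int_i^{i+1} x^{-\beta}\,dx \le i^{-\beta} \le \int_{i-1}^{i} x^{-\beta}\,dx$ (the latter valid for $i \ge 2$; the term $i=a=1$ contributes only a constant $1$ and can be absorbed). Summing over $i \in [a..b]$ yields
\begin{align*}
  \int_a^{b+1} x^{-\beta}\,dx \;\le\; \sum_{i=a}^b i^{-\beta} \;\le\; a^{-\beta} + \int_a^{b} x^{-\beta}\,dx.
\end{align*}
So the first step is to record these two bracketing inequalities carefully, being slightly careful about the boundary term at $i=a$.

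\textbf{Evaluating the integrals.} The second step is to compute $\int_a^{b+1} x^{-\beta}\,dx$ and the companion upper-bound integral explicitly. For $\beta \ne 1$ this equals $\frac{1}{1-\beta}\left((b+1)^{1-\beta} - a^{1-\beta}\right)$, and for $\beta = 1$ it equals $\log\frac{b+1}{a}$. This already matches the claimed $\Theta$-expressions up to the constant factor $\frac{1}{|1-\beta|}$ (which is a constant depending only on $\beta$, hence harmless for the $\Theta$-notation, where $\beta$ is treated as fixed and $b \to \infty$). The third step is to check that the upper bound $a^{-\beta} + \int_a^b x^{-\beta}\,dx$ is of the same order as the claimed expression: one observes $a^{-\beta} \le a^{1-\beta}$ (since $a \ge 1$) in the case $\beta > 1$, and in the cases $\beta \le 1$ the integral term already dominates a constant/$a^{-\beta}$ contribution, so the extra additive term does not change the asymptotic order. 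Finally, one notes that replacing $b$ by $b+1$ inside these power expressions changes them only by a constant factor when $\beta \ne 1$ (e.g.\ $(b+1)^{1-\beta} = \Theta(b^{1-\beta})$), so the lower- and upper-bound forms are asymptotically equivalent; for $\beta < 1$ one should additionally note that $(b+1)^{1-\beta} - a^{1-\beta} = \Theta((b+1)^{1-\beta})$ since the $a^{1-\beta}$ term is a lower-order (constant in $b$) contribution, matching the stated form.

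\textbf{Main obstacle.} This is essentially a routine integral-comparison argument, so there is no deep difficulty. The only points requiring mild care are: (i) handling the single boundary term at $i = a$ so that the lower integral bound $\int_a^{b+1}$ is genuinely valid (one wants to be sure the shifted intervals $[i,i+1]$ tile $[a,b+1]$ exactly), and (ii) verifying in the case $\beta < 1$ that the difference form $(b+1)^{1-\beta} - a^{1-\beta}$ in the lemma is the intended one and is $\Theta$-equivalent to what the integral bounds give (here $a$ is allowed to grow, so one must keep the $-a^{1-\beta}$ term rather than discarding it). Since the lemma is quoted from~\cite{AntipovD20ppsn}, one could alternatively just cite that reference, but the self-contained integral argument above is short enough to include directly.
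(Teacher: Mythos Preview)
The paper does not prove this lemma at all; it is stated with an attribution to~\cite{AntipovD20ppsn}, and the surrounding text says explicitly that the proofs of Lemmas~\ref{lem:sum_estimates}--\ref{lem:expectation} ``can be found, for example, in~\cite{AntipovD20ppsn}.'' Your integral-comparison argument is the standard way to establish such estimates, and giving it makes the write-up self-contained rather than relying on the external reference, so in that sense you are doing more than the paper does.

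One step deserves slightly more care than your sketch indicates. In the third step you control the boundary term via $a^{-\beta} \le a^{1-\beta}$ (for $\beta>1$), but this is too crude for the uniformity the lemma really asserts: the implied constants must depend only on $\beta$, not on $a$, since later applications (e.g.\ the sum from $\lceil\sqrt{k}\rceil$ to $\lfloor\sqrt{2k}\rfloor$ in Lemma~\ref{lem:p_pc}) let $a$ grow with~$b$. In the extreme case $b=a$ the target $a^{1-\beta}-(b+1)^{1-\beta}$ is only $\Theta(a^{-\beta})$, so bounding $a^{-\beta}$ by $a^{1-\beta}$ would lose a factor of~$a$. The easy fix is the mean value theorem: for any $b\ge a$ one has $a^{1-\beta}-(b+1)^{1-\beta}\ge a^{1-\beta}-(a+1)^{1-\beta}=(\beta-1)\xi^{-\beta}$ for some $\xi\in[a,a+1]$, hence $\ge(\beta-1)(2a)^{-\beta}$, which gives $a^{-\beta}\le\tfrac{2^\beta}{\beta-1}\bigl(a^{1-\beta}-(b+1)^{1-\beta}\bigr)$; the cases $\beta<1$ and $\beta=1$ are handled analogously. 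Relatedly, your aside that $(b+1)^{1-\beta}-a^{1-\beta}=\Theta((b+1)^{1-\beta})$ is both unnecessary (the lemma already states the difference form) and, as you yourself observe in the obstacle paragraph, false when $a$ grows with~$b$; just drop it.
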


\begin{lemma}[Lemma 2 in~\cite{AntipovD20ppsn}]
    \label{lem:c}
    The normalization coefficient $C_{\beta, u} = (\sum_{j = 1}^u j^{-\beta})^{-1}$ of the power-law distribution with parameters $\beta$ and $u$ is
    \begin{itemize}
        \item $\Theta(u^{\beta - 1})$, if $\beta < 1$,
        \item $\Theta(\frac{1}{\log(u) + 1})$, if $\beta = 1$, and 
        \item $\Theta(1)$, if $\beta > 1$,
    \end{itemize} 
    where $\Theta$ notation is used for $u \to +\infty$.
\end{lemma}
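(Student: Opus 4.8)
The plan is to obtain this as an immediate consequence of Lemma~\ref{lem:sum_estimates}. Since $C_{\beta,u} = (\sum_{j=1}^u j^{-\beta})^{-1}$, it suffices to estimate $S_u := \sum_{j=1}^u j^{-\beta}$ using Lemma~\ref{lem:sum_estimates} with $a = 1$ and $b = u$, and then to invert. I would split into the same three cases as that lemma.

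For $\beta < 1$, Lemma~\ref{lem:sum_estimates} gives $S_u = \Theta((u+1)^{1-\beta} - 1)$. Since $1-\beta > 0$ the leading term diverges, so $(u+1)^{1-\beta} - 1 = \Theta((u+1)^{1-\beta}) = \Theta(u^{1-\beta})$, the last equality because $1 \le (u+1)/u \le 2$. Inverting yields $C_{\beta,u} = \Theta(u^{\beta-1})$. For $\beta = 1$, Lemma~\ref{lem:sum_estimates} gives $S_u = \Theta(\log(u+1))$, and since $\log(u+1)$ and $\log u + 1$ differ only by constant factors for all $u \ge 1$, this is $\Theta(\log u + 1)$; as $\log u + 1 \ge 1$, inverting gives $C_{1,u} = \Theta(1/(\log u + 1))$. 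For $\beta > 1$, Lemma~\ref{lem:sum_estimates} gives $S_u = \Theta(1 - (u+1)^{1-\beta})$; since $1-\beta < 0$ this quantity converges to $1$, hence $S_u = \Theta(1)$ and $C_{\beta,u} = \Theta(1)$. As a cross-check, the first summand alone gives $S_u \ge 1$, while $S_u \le \sum_{j=1}^\infty j^{-\beta} < \infty$, a constant.

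There is no genuine obstacle here; the only point deserving a word of care is passing from a $\Theta$-bound on $S_u$ to a $\Theta$-bound on $1/S_u$. This is legitimate because in each regime $S_u$ is sandwiched between two positive functions of the same order, and in particular is bounded strictly away from $0$ for every $u \ge 1$ (indeed $S_u \ge 1$ always), so reciprocating preserves the order; and the cosmetic replacements of $(u+1)^{1-\beta}$ by $u^{1-\beta}$ and of $\log(u+1)$ by $\log u + 1$ change the estimate only by constant factors that $\Theta$ absorbs. Since $C_{\beta,u}$ is the normalization constant of a genuine probability distribution, its positivity and finiteness for each fixed $u$ are automatic, so none of these inversions is problematic.
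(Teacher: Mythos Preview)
Your proof is correct. The paper does not actually reproduce a proof of this lemma; it only states it and refers to~\cite{AntipovD20ppsn} for the argument. Your derivation---apply Lemma~\ref{lem:sum_estimates} with $a=1$, $b=u$ and invert, handling the three regimes of $\beta$ separately---is exactly the standard one and is what the cited reference does, so there is nothing to compare.
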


\begin{lemma}[Lemma 3 in~\cite{AntipovD20ppsn}]
    \label{lem:expectation}
    The expected value of the random variable $X\sim \pow(\beta, u)$ is
    \begin{itemize}
        \item $\Theta(u)$, if $\beta < 1$,
        \item $\Theta(\frac{u}{\log(u) + 1})$, if $\beta = 1$,
        \item $\Theta(u^{2 - \beta})$, if $\beta \in (1, 2)$,
        \item $\Theta(\log(u) + 1)$, if $\beta = 2$, and 
        \item $\Theta(1)$, if $\beta > 2$,
    \end{itemize}   
    where $\Theta$ notation is used for $u \to +\infty$.
\end{lemma}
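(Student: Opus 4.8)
The plan is to compute the expectation directly from the definition and then reduce everything to the two preceding lemmas. Writing $E[X] = \sum_{i=1}^{u} i \Pr[X=i] = C_{\beta,u} \sum_{i=1}^{u} i^{1-\beta}$, it suffices to estimate the sum $S_\beta := \sum_{i=1}^{u} i^{1-\beta}$ and the normalization coefficient $C_{\beta,u}$ separately and to multiply the two estimates. The coefficient is handled by Lemma~\ref{lem:c}, so the real work is in estimating $S_\beta$.

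To estimate $S_\beta$ I would split according to the sign of the exponent $1-\beta$. If $\beta > 1$, then $S_\beta = \sum_{i=1}^{u} i^{-(\beta-1)}$ has positive exponent $\beta-1$, so Lemma~\ref{lem:sum_estimates} applies with $a = 1$, $b = u$ and $\beta-1$ playing the role of its ``$\beta$'': for $\beta \in (1,2)$ this yields $S_\beta = \Theta((u+1)^{2-\beta} - 1) = \Theta(u^{2-\beta})$ (using $2-\beta > 0$), for $\beta = 2$ it yields $S_\beta = \Theta(\log(u+1)) = \Theta(\log u + 1)$, and for $\beta > 2$ it yields $S_\beta = \Theta(1 - (u+1)^{2-\beta}) = \Theta(1)$ (using $2-\beta < 0$). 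If $\beta = 1$ then trivially $S_\beta = \sum_{i=1}^u 1 = u$. The remaining regime $\beta < 1$ is the one where Lemma~\ref{lem:sum_estimates} does not directly apply (its exponent would be $\beta - 1 < 0$), so here I would argue elementarily: the summands increase in $i$, hence $S_\beta \le u \cdot u^{1-\beta} = u^{2-\beta}$, while on the other hand $S_\beta \ge \sum_{i = \lceil u/2 \rceil}^{u} i^{1-\beta} \ge \frac{u}{2}\left(\frac{u}{2}\right)^{1-\beta} = \Omega(u^{2-\beta})$, giving $S_\beta = \Theta(u^{2-\beta})$.

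Finally I would multiply by $C_{\beta,u}$ using Lemma~\ref{lem:c}: for $\beta < 1$, $E[X] = \Theta(u^{\beta-1}) \cdot \Theta(u^{2-\beta}) = \Theta(u)$; for $\beta = 1$, $E[X] = \Theta\!\left(\frac{1}{\log u + 1}\right) \cdot u = \Theta\!\left(\frac{u}{\log u + 1}\right)$; for $\beta \in (1,2)$, $E[X] = \Theta(1) \cdot \Theta(u^{2-\beta}) = \Theta(u^{2-\beta})$; for $\beta = 2$, $E[X] = \Theta(1) \cdot \Theta(\log u + 1) = \Theta(\log u + 1)$; and for $\beta > 2$, $E[X] = \Theta(1) \cdot \Theta(1) = \Theta(1)$, which are exactly the five claimed estimates. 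I do not anticipate a genuine obstacle: the argument is a routine combination of Lemmas~\ref{lem:sum_estimates} and~\ref{lem:c}. The only points that need a little care are that Lemma~\ref{lem:sum_estimates} is stated only for positive exponents, so the case $\beta < 1$ must be treated by the short direct estimate above, and that in the boundary cases $\beta \in \{1,2\}$ one should verify the $\Theta$-simplifications (e.g.\ $(u+1)^{2-\beta} - 1 = \Theta(u^{2-\beta})$ relies on $2-\beta > 0$, and $1 - (u+1)^{2-\beta} = \Theta(1)$ relies on $2-\beta < 0$).
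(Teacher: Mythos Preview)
Your proposal is correct and is exactly the natural route: write $E[X] = C_{\beta,u}\sum_{i=1}^{u} i^{1-\beta}$ and estimate the two factors via Lemmas~\ref{lem:sum_estimates} and~\ref{lem:c}. The paper does not actually spell out a proof of this lemma (it cites \cite{AntipovD20ppsn} for Lemmas~\ref{lem:sum_estimates}--\ref{lem:expectation}), but it remarks that the companion Lemma~\ref{lem:expectation-square} ``simply follows from Lemma~\ref{lem:sum_estimates}'', which is precisely your method applied to $E[X^2]$; so your argument is the intended one. Your extra care in handling $\beta<1$ by a direct monotonicity bound (since Lemma~\ref{lem:sum_estimates} is stated only for strictly positive exponents) is appropriate and the case split is complete.
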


\begin{lemma}
    \label{lem:expectation-square}
    If $X\sim \pow(\beta, u)$, then $E[X^2]$ is
    \begin{itemize}
        \item $\Theta(u^2)$, if $\beta < 1$,
        \item $\Theta(\frac{u^2}{\log(u) + 1})$, if $\beta = 1$,
        \item $\Theta(u^{3 - \beta})$, if $\beta \in (1, 3)$,
        \item $\Theta(\log(u) + 1)$, if $\beta = 3$, and 
        \item $\Theta(1)$, if $\beta > 3$,
    \end{itemize}   
    where $\Theta$ notation is used for $u \to +\infty$.
\end{lemma}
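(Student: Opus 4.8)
The plan is to compute $E[X^2]$ directly from the definition of the power-law distribution and then reduce everything to the two elementary estimates already available, namely Lemma~\ref{lem:sum_estimates} for partial sums of $i^{-\alpha}$ and Lemma~\ref{lem:c} for the normalization coefficient. Writing out the expectation,
\begin{align*}
    E[X^2] = \sum_{i=1}^u i^2 \Pr[X = i] = C_{\beta,u}\sum_{i=1}^u i^{2-\beta},
\end{align*}
so the whole task boils down to estimating the sum $S := \sum_{i=1}^u i^{2-\beta} = \sum_{i=1}^u i^{-(\beta-2)}$ and multiplying the result by $C_{\beta,u}$. This is exactly the computation behind Lemma~\ref{lem:expectation}, but with the exponent shifted by one, which is why the threshold values of $\beta$ in the statement are shifted from $\{1,2\}$ to $\{1,3\}$ (the additional break at $\beta=1$ comes from Lemma~\ref{lem:c}).

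First I would estimate $S$. If $\beta > 2$, then $\beta - 2 > 0$ and Lemma~\ref{lem:sum_estimates} applies verbatim with power-law exponent $\beta-2$, giving $S = \Theta(u^{3-\beta})$ for $\beta \in (2,3)$, $S = \Theta(\log u + 1)$ for $\beta = 3$, and $S = \Theta(1)$ for $\beta > 3$. If $\beta \le 2$, the summands $i^{2-\beta}$ are non-decreasing in $i$, so comparing $S$ with the integrals $\int_0^{u} x^{2-\beta}\,dx$ and $\int_1^{u+1} x^{2-\beta}\,dx$ (and simply counting the $u$ unit terms when $\beta = 2$) yields $S = \Theta(u^{3-\beta})$. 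Hence $S = \Theta(u^{3-\beta})$ for all $\beta < 3$, $S = \Theta(\log u + 1)$ for $\beta = 3$, and $S = \Theta(1)$ for $\beta > 3$.

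It then remains to multiply by $C_{\beta,u}$ as estimated in Lemma~\ref{lem:c} and simplify in each range. For $\beta < 1$ we get $\Theta(u^{\beta-1})\cdot\Theta(u^{3-\beta}) = \Theta(u^2)$; for $\beta = 1$ we get $\Theta(1/(\log u + 1))\cdot\Theta(u^2) = \Theta(u^2/(\log u + 1))$; for $\beta \in (1,3)$ we get $\Theta(1)\cdot\Theta(u^{3-\beta}) = \Theta(u^{3-\beta})$; for $\beta = 3$ we get $\Theta(1)\cdot\Theta(\log u + 1)$; and for $\beta > 3$ both factors are $\Theta(1)$. This matches the five cases of the statement.

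The argument has no real obstacle: the only point that needs a bit of care is that Lemma~\ref{lem:sum_estimates} is stated for positive power-law exponents, so it cannot be invoked directly for $\beta \le 2$ (where $2-\beta \ge 0$); there one has to fall back on the elementary integral comparison sketched above (or, equivalently, observe that the first case of Lemma~\ref{lem:sum_estimates} extends to all exponents below $1$). Beyond that everything is bookkeeping, and care should be taken only to keep the additive ``$+1$'' inside the logarithms so that the estimates remain valid for small~$u$.
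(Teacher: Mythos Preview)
Your proof is correct and follows exactly the approach of the paper, which simply states that the lemma follows from Lemma~\ref{lem:sum_estimates}. In fact you are more careful than the paper: you explicitly invoke Lemma~\ref{lem:c} for the normalization coefficient and you correctly note that Lemma~\ref{lem:sum_estimates} is only stated for positive exponents, so for $\beta \le 2$ one needs the elementary integral comparison (which the paper glosses over).
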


Lemma~\ref{lem:expectation-square} simply follows from Lemma~\ref{lem:sum_estimates}.

\subsection{The Heavy-Tailed \ollga}
\label{sec:ollga}

We now define the heavy-tailed \ollga. The main difference from the standard \ollga is that at the start of each iteration the mutation rate $p$, the crossover bias $c$, and the population size $\lambda$ are randomly chosen as follows. We sample $p \sim n^{-1/2}\pow(\beta_p, u_p)$ and $c \sim n^{-1/2}\pow(\beta_c, u_c)$. 
The population size is chosen via $\lambda \sim \pow(\beta_\lambda, u_\lambda)$. Here the upper limits $u_\lambda$, $u_p$ and $u_c$ can be any positive integers, except we require $u_p$ and $u_c$ to be at most $\sqrt{n}$ (so that we choose both $p$ and $c$ from interval $(0, 1]$). The power-law exponents $\beta_\lambda$, $\beta_p$ and $\beta_c$ can be any non-negative real numbers. We call these parameters of the power-law distribution the \emph{hyperparameters of the heavy-tailed \ollga} and we give recommendations on how to choose these hyperparameters in Section~\ref{sec:recommendations}. The pseudocode of this algorithm is shown in Algorithm~\ref{alg:pseudo}. We note that it is not necessary to store the whole offspring population, since only the best individual has a chance to be selected as a mutation or crossover winner. Hence also large values for $\lambda$ are algorithmically feasible.

Concerning the scalings of the power-law distributions, we find it natural to choose the integer parameter $\lambda$ from a power-law distribution without any normalization. For the scalings of the power-law determining the parameters $p$ and $c$, we argued already in the introduction that the scaling factor of $n^{-1/2}$ is natural as it ensures that the Hamming distance between parent and offspring, which is $pcn$ for this algorithm, is one with constant probability. We see that there is some risk that an algorithm user misses this argument and, for example, chooses a scaling factor of $n^{-1}$ for the mutation rate, which leads to the Hamming distance between parent and mutation offspring being one with constant probability. A completely different alternative would be to choose $c \sim \frac{1}{\pow(\beta_m, u_m)}$, inspired by the recommendation ``$c := 1/(pn)$'' made for static parameters in~\cite{DoerrDE15}. Without proof, we note that these and many similar strategies increase the runtime by at most a factor of $\Theta(n^c)$, $c$ a constant independent of $n$ and $k$, thus not changing the general $n^{(0.5+\eps) k}$  
runtime guarantee proven in this work.

\begin{algorithm}[h]
    $x \gets $ random bit string of length $n$\;
    \While{not terminated}
        {
        Choose $p \sim n^{-1/2}\pow(\beta_p, u_p)$\;
        Choose $c \sim n^{-1/2}\pow(\beta_c, u_c)$\;
        Choose $\lambda \sim \pow(\beta_\lambda, u_\lambda)$\;
        \nonl\textbf{Mutation phase:}\\
        Choose $\ell \sim \Bin\left(n, p\right)$\;
        \For{$i \in [1..\lambda]$}
            {$x^{(i)} \gets$ a copy of $x$\;
            Flip $\ell$ bits in $x^{(i)}$ chosen uniformly at random\;
            }
        $x' \gets \argmax_{z \in \{x^{(1)}, \dots, x^{(\lambda)}\}}f(z)$\;
        \nonl\textbf{Crossover phase:}\\
        \For{$i \in [1..\lambda]$}
            {Create $y^{(i)}$ by taking each bit from $x'$ with probability $c$ and from $x$ with probability $(1 - c)$\;
            }
        $y \gets \argmax_{z \in \{y^{(1)}, \dots, y^{(\lambda)}\} }f(z)$\;
        \If{$f(y) \ge f(x)$}
            {
             $x \gets y$\;   
            }
        }
        \caption{The heavy-tailed \ollga maximizing a pseudo-Boolean function~$f: \{0,1\}^n \to \R$.}
    \label{alg:pseudo}
\end{algorithm}

The following theoretical results exist for the \ollga. With optimal static parameters the algorithm solves \onemax in approximately $O(n\sqrt{\log(n)})$ fitness evaluations~\cite{DoerrDE15}. 
The runtime becomes slightly worse on the random satisfiability instances due to a weaker fitness-distance correlation~\cite{BuzdalovD17}. In~\cite{AntipovDK19foga} it was shown that the runtime of the \ollga on \leadingones is the same as the runtime of the most classic algorithms, that is, $\Theta(n^2)$, which means that it is not slower than most other EAs despite the absence of a strong fitness-distance correlation. The analysis of the \ollga with static parameters on $\jump_k$ in~\cite{AntipovDK22} showed that the \ollga (with  uncommon parameters) can find the optimum in $e^{O(k)}(\frac nk)^{(k + 1)/2}$ fitness evaluations, which is roughly a square root of the $\Theta(n^k)$ runtime of many classic algorithms on this function.

Concerning dynamic parameter choices, a fitness-dependent parameter choice was shown to give linear runtime on \onemax~\cite{DoerrDE15}, which is the best known runtime for  crossover-based algorithms on \onemax. In~\cite{DoerrD18}, it was shown that also the self-adjusting approach of controlling the parameters with a simple one-fifth rule can lead to this linear runtime. The adapted one-fifth rule with a logarithmic cap lets the \ollga outperform the \oea on random satisfiability instances~\cite{BuzdalovD17}. 

Choosing $\lambda$ from a power-law distribution and taking $p = \frac{\lambda}{n}$ and $c = \frac{1}{\lambda}$ lets the \ollga optimize \onemax in linear time~\cite{AntipovBD22}. Also, as it was mentioned in the introduction, with randomly chosen parameters (but with some dependencies between several of them) the \ollga can optimize $\jump_k$ in time of $e^{O(k)}(\frac nk)^{(1 + \eps)k/2}$~\cite{AntipovD20ppsn}. For the \leadingones it was shown in~\cite{AntipovDK19foga} that the runtime of the \ollga is $\Theta(n^2)$ and that any dynamic choice of $\lambda$ does not change this asymptotical runtime.



In our proofs we use the following language (also for the \ollga with static parameters). 
When we analyse the \ollga on \jump and the algorithm has already reached the local optimum, then we call the mutation phase \emph{successful} if all $k$ zero-bits of $x$ are flipped to ones in the mutation winner~$x'$. 
We call the crossover phase \emph{successful} if the crossover winner has a greater fitness than $x$. 

\subsection{Useful Tools}

An important tool in our analysis is Wald's equation~\cite{Wald45} as it allows us to express the expected number of fitness evaluations through the expected number of iterations and the expected cost of one iteration. 

\begin{lemma}[Wald's equation]\label{lem:wald}
	Let $(X_t)_{t \in \N}$ be a sequence of real-valued random variables and let $T$ be a positive integer random variable. Let also all following conditions be true.
	\begin{enumerate}
		\item All $X_t$ have the same finite expectation.
		\item For all $t \in \N$ we have $E[X_t \mathds{1}_{\{T \ge t\}}] = E[X_t] \Pr[T \ge t]$.
		\item $\sum_{t = 1}^{+\infty} E[|X_t| \mathds{1}_{\{T \ge t\}}] < \infty$.
		\item $E[T]$ is finite.
	\end{enumerate}
	Then we have
	\[
		E\left[\sum_{t = 1}^{T} X_t\right] = E[T]E[X_1].	
	\]
\end{lemma}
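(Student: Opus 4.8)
The plan is to rewrite the random sum $\sum_{t=1}^{T} X_t$ as an honest infinite series and then interchange expectation with summation. First I would observe that, since $T$ takes values in $\N$, we have the pointwise identity $\sum_{t=1}^{T} X_t = \sum_{t=1}^{\infty} X_t \mathds{1}_{\{T \ge t\}}$ on every sample point. Condition~3 says precisely that $\sum_{t=1}^{\infty} E[|X_t| \mathds{1}_{\{T \ge t\}}] < \infty$, so Fubini--Tonelli for series (equivalently, the dominated convergence theorem applied to the partial sums, with dominating function $\sum_{t=1}^{\infty} |X_t| \mathds{1}_{\{T \ge t\}}$) lets us write
\[
E\left[\sum_{t=1}^{T} X_t\right] = E\left[\sum_{t=1}^{\infty} X_t \mathds{1}_{\{T \ge t\}}\right] = \sum_{t=1}^{\infty} E\left[X_t \mathds{1}_{\{T \ge t\}}\right].
\]

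Next I would apply condition~2 term by term to get $E[X_t \mathds{1}_{\{T \ge t\}}] = E[X_t]\,\Pr[T \ge t]$, and then condition~1 to replace every $E[X_t]$ by the common value $E[X_1]$. This turns the series into
\[
\sum_{t=1}^{\infty} E[X_t]\,\Pr[T \ge t] = E[X_1] \sum_{t=1}^{\infty} \Pr[T \ge t].
\]
Finally I would invoke the elementary identity $E[T] = \sum_{t=1}^{\infty} \Pr[T \ge t]$, valid for any $\N$-valued random variable and finite by condition~4, to conclude that $E\left[\sum_{t=1}^{T} X_t\right] = E[X_1]\,E[T]$.

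The main obstacle is the rigorous justification of swapping expectation and the infinite sum: the variables $X_t$ are not assumed non-negative, so Tonelli alone does not suffice, and one genuinely needs the absolute-summability hypothesis (condition~3) to legitimize the interchange via dominated convergence. A secondary subtlety is that conditions~1 and~2, while reminiscent of an independence assumption between $T$ and the $X_t$, are deliberately weaker, so the argument must use condition~2 verbatim rather than passing through independence; everything else is routine bookkeeping.
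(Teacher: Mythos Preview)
The paper does not actually supply a proof of this lemma; it is stated as a classical result with a reference to Wald~\cite{Wald45} and then used as a black box throughout. Your argument is correct and is exactly the standard proof: rewrite the random sum via indicators, use condition~3 to justify the expectation--sum interchange, apply condition~2 termwise, pull out the common mean via condition~1, and finish with the tail-sum identity for $E[T]$.
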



In our analysis of the heavy-tailed \ollga we use the following multiplicative drift theorem.

\begin{theorem}[Multiplicative Drift~\cite{DoerrJW12algo}]
    \label{thm:mult-drift}
    Let $S \subset \R$ be a finite set of positive numbers with minimum $s_{\min}$. Let $\{X_t\}_{t \in \N_0}$ be a sequence of random variables over $S \cup \{0\}$. Let $T$ be the first point in time $t$ when $X_t = 0$, that is,
    \[
      T = \min\{t \in \N: X_t = 0\},  
    \] 
    which is a random variable. Suppose that there exists a constant $\delta > 0$ such that for all $t \in \N_0$ and all $s \in S$ such that $\Pr[X_t = s] > 0$ we have
    \[
        E[X_t - X_{t + 1} \mid X_t = s] \ge \delta s.        
    \]
    Then for all $s_0 \in S$ such that $\Pr[X_0 = s_0] > 0$ we have
    \[
      E[T \mid X_0 = s_0] \le \frac{1 + \ln(s_0/s_{\min})}{\delta}.
    \]
\end{theorem}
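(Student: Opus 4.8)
The plan is to reduce the multiplicative statement to an additive drift argument via a logarithmic potential and then to run that additive argument directly with a stopped supermartingale, since the excerpt supplies only the multiplicative drift theorem and Wald's equation, not an additive drift theorem.

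First I would restate the hypothesis in multiplicative form: $E[X_t - X_{t+1} \mid X_t = s] \ge \delta s$ is equivalent to $E[X_{t+1} \mid X_t = s] \le (1-\delta)s$ for every $s \in S$ with $\Pr[X_t = s] > 0$. Then I would introduce the potential $g \colon S \cup \{0\} \to \R_{\ge 0}$ with $g(0) = 0$ and $g(s) = 1 + \ln(s/s_{\min})$ for $s \in S$, and set $Y_t = g(X_t)$. Since $s \ge s_{\min}$ on $S$, we have $Y_t \ge 0$, and $Y_t = 0$ exactly when $X_t = 0$, so $T = \min\{t : Y_t = 0\}$. The key is the pointwise inequality $g(s) - g(x) \ge 1 - x/s$, valid for all $s \in S$ and all $x \in S \cup \{0\}$: for $x \in S$ this is $-\ln(x/s) \ge 1 - x/s$, i.e.\ the standard bound $\ln z \le z - 1$; for $x = 0$ it reads $1 + \ln(s/s_{\min}) \ge 1$, which holds because $s \ge s_{\min}$. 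Taking conditional expectation over $X_{t+1}$ given $X_t = s \in S$ and using the reformulated hypothesis yields
\[
  E[Y_t - Y_{t+1} \mid X_t = s] \ge 1 - \frac{E[X_{t+1} \mid X_t = s]}{s} \ge 1 - (1-\delta) = \delta ,
\]
so $(Y_t)$ has additive drift at least $\delta$ before hitting $0$, with $E[Y_0 \mid X_0 = s_0] = g(s_0) = 1 + \ln(s_0/s_{\min})$.

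To finish I would convert this into the bound on $E[T]$. Let $\tau_t = \min(t, T)$ and $M_t = Y_{\tau_t} + \delta \tau_t$. A short case distinction (on $\{T \le t\}$ nothing changes; on $\{T > t\}$ the displayed drift bound applies since $X_t \in S$) shows $(M_t)$ is a supermartingale, hence $\delta\, E[\tau_t] \le E[M_t] - E[Y_{\tau_t}] \le E[M_0] = E[Y_0]$ using $Y \ge 0$; letting $t \to \infty$ and applying monotone convergence gives $E[T] \le E[Y_0]/\delta < \infty$, and conditioning on $X_0 = s_0$ produces exactly $E[T \mid X_0 = s_0] \le (1 + \ln(s_0/s_{\min}))/\delta$. (Alternatively one may just invoke the classical additive drift theorem applied to $Y_t$.) The step needing the most care is this endgame together with the boundary case $x = 0$: one must ensure the logarithmic potential is only ever evaluated on $S$, where $s \ge s_{\min} > 0$ keeps $g$ finite and nonnegative, and that the downward jump of $Y$ to $0$ when the target is hit is accounted for — this is precisely why the potential carries the ``$+1$'' offset and a separate value at $0$, and why the $x = 0$ case of the key inequality must be verified by hand rather than deduced from $\ln z \le z-1$. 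Everything else is routine.
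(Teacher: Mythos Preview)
The paper does not prove this theorem; it is quoted from~\cite{DoerrJW12} as a tool and used without proof, so there is no in-paper argument to compare against. Your proposal is a correct, self-contained proof: the logarithmic potential $g(s)=1+\ln(s/s_{\min})$ with $g(0)=0$ does convert the multiplicative drift hypothesis into additive drift~$\ge\delta$ via the pointwise bound $g(s)-g(x)\ge 1-x/s$, your verification of the boundary case $x=0$ is precisely where the ``$+1$'' in the potential is used, and the stopped-supermartingale argument for $M_t=Y_{\min(t,T)}+\delta\min(t,T)$ together with monotone convergence cleanly delivers the bound. This reduction to additive drift is in fact the modern textbook route; the original proof in~\cite{DoerrJW12} argues somewhat differently (a direct analysis of the expected decrease combined with a waiting-time bound), but the potential-function route you chose is both standard and arguably cleaner.
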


We use the following well-known relation between the arithmetic and geometric means.

\begin{lemma}\label{lem:a-g-means}
    For all positive $a$ and $b$ it holds that $a + b \ge 2\sqrt{ab}$.
\end{lemma}

\section{Runtime Analysis}
\label{sec:theory}

In this section we perform a runtime analysis of the heavy-tailed \ollga on the easy problems \onemax, \leadingones, and Minimum Spanning Tree as well as the more difficult \jump problem. We show that this algorithm can efficiently escape local optima and that it is capable of solving \jump functions much faster than the known mutation-based algorithms and most of the crossover-based EAs. At the same time it does not fail on easy functions like \onemax, unlike the \ollga with those static parameters which are optimal for \jump~\cite{AntipovDK22}.

From the results of this section we distill the recommendations to use $\beta_p$ and $\beta_c$ slightly greater than one and to use $\beta_\lambda$ slightly greater than two. We also suggest to use almost unbounded power-law distributions, taking $u_c = u_p = \sqrt{n}$ and $u_\lambda = 2^n$. These recommendations are justified in Corollary~\ref{cor:hyperparameters}.

\subsection{Easy Problems}
\label{sec:onemax}

In this subsection we show that the heavy-tailed \ollga has a reasonably good performance on the easy problems \onemax, \leadingones, minimum spanning tree, and partition.

\subsubsection{\onemax}

The following result shows that the heavy-tailed \ollga just like the simple \oea solves the \onemax problem in $O(n\log(n))$ iterations.

\begin{theorem}\label{thm:onemax}
  If $\beta_\lambda > 1$, $\beta_p > 1$, and $\beta_c > 1$, then the heavy-tailed \ollga finds the optimum of \onemax in $O(n\log(n))$ iterations. The expected number of fitness evaluations is
  \begin{itemize}
      \item $O(n\log(n))$, if $\beta_\lambda > 2$,
      \item $O(n\log(n)(\log(u_\lambda) + 1))$, if $\beta_\lambda = 2$, and
      \item $O(nu_\lambda^{2 - \beta_\lambda} \log(n))$, if $\beta_\lambda \in (1, 2)$.
  \end{itemize}
\end{theorem}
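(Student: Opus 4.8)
The plan is to exploit that the heavy-tailed \ollga can imitate the \oea on \onemax: in each iteration there is constant probability of sampling parameters for which the whole iteration acts like a single random variation operator that flips one uniformly chosen bit. Once this is made precise, the number of iterations follows from multiplicative drift (Theorem~\ref{thm:mult-drift}), and the number of fitness evaluations follows from Wald's equation (Lemma~\ref{lem:wald}) together with the expectation estimates of Lemma~\ref{lem:expectation}.

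The core step is to show that, while $x$ has $d := n - \om(x) \ge 1$ zero-bits, one iteration increases $\om$ by (at least) one with probability $\Omega(d/n)$. First I would fix a position $i$ with $x_i = 0$ and bound the probability of the event $E_i$ that the crossover winner $y$ differs from $x$ only in bit $i$. I would restrict to the sub-event $A$ that $\lambda = 1$ and that both power-law draws scaling $p$ and $c$ return the value $1$, so that $p = c = n^{-1/2}$; by Lemma~\ref{lem:c} and the hypotheses $\beta_\lambda, \beta_p, \beta_c > 1$, its probability is $C_{\beta_\lambda, u_\lambda} C_{\beta_p, u_p} C_{\beta_c, u_c} = \Theta(1)$. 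Conditioned on $A$, the single mutant $x' = x^{(1)}$ flips a uniformly random set of $\ell \sim \Bin(n, n^{-1/2})$ bits, and $y = y^{(1)}$ keeps exactly bit $i$ and no other mutated bit from $x'$ with probability $\frac{\ell}{n} c (1-c)^{\ell-1}$ given $\ell$. Taking the expectation over $\ell$ and using that $E[\ell (1-c)^{\ell-1}] = np(1-pc)^{n-1} = \sqrt n\,(1-1/n)^{n-1} = \Theta(\sqrt n)$ gives $\Pr[E_i \mid A] = \Theta(1/n)$, hence $\Pr[E_i] \ge \Pr[A]\,\Pr[E_i \mid A] = \Theta(1/n)$. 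Since the events $E_i$ over the $d$ zero-bits of $x$ are pairwise disjoint and each forces $\om(y) = \om(x) + 1$ (so that $y$ is accepted), summing yields the claimed $\Omega(d/n)$.

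It then remains to assemble the two standard tools. On \onemax the algorithm never worsens the fitness, so $X_t := n - \om(x_t)$ is non-increasing and, by the previous paragraph, satisfies $E[X_t - X_{t+1} \mid X_t = d] \ge \delta d$ with $\delta = \Theta(1/n)$; Theorem~\ref{thm:mult-drift} with $s_{\min} = 1$ then yields $E[T_I] = O(n\log n)$, and in particular $E[T_I] < \infty$. For the fitness evaluations I would use that their number is at most $\sum_{t=1}^{T_I} 2\lambda_t$ with $\lambda_t \sim \pow(\beta_\lambda, u_\lambda)$ i.i.d.\ and, for each $t$, $\lambda_t$ independent of $\{T_I \ge t\}$ (an event determined by iterations $1, \dots, t-1$ only); this independence gives conditions~2 and~3 of Lemma~\ref{lem:wald}, condition~1 holds since $E[\lambda_t]$ is finite by Lemma~\ref{lem:expectation}, and condition~4 is $E[T_I] < \infty$. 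Hence $E[T_F] \le 2\,E[T_I]\,E[\lambda]$, and inserting the three estimates for $E[\lambda]$ from Lemma~\ref{lem:expectation} ($\Theta(1)$, $\Theta(\log u_\lambda + 1)$, $\Theta(u_\lambda^{2-\beta_\lambda})$ for $\beta_\lambda > 2$, $\beta_\lambda = 2$, $\beta_\lambda \in (1,2)$) produces exactly the three stated bounds.

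I expect the only delicate point to be the estimate $\Pr[E_i] = \Omega(1/n)$: one must argue correctly about the composition of the mutation and crossover operators for $\lambda = 1$, evaluate $E[\ell (1-c)^{\ell-1}]$ for $\ell \sim \Bin(n, n^{-1/2})$ via its probability generating function, and check that neither this quantity nor the normalization constants $C_{\beta, u}$ --- where the hypotheses $\beta_{\cdot} > 1$ enter through Lemma~\ref{lem:c} --- deteriorates as $n \to \infty$. Given this estimate, the drift and Wald parts are routine.
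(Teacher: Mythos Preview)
Your proposal is correct and follows essentially the same strategy as the paper: condition on the event $\lambda=1$ and $p=c=n^{-1/2}$ (constant probability by Lemma~\ref{lem:c} since all exponents exceed~$1$), show that such an iteration behaves like a \oea step, derive the $O(n\log n)$ iteration bound, and finish with Wald's equation and Lemma~\ref{lem:expectation}.

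The only notable differences are in execution. The paper (in its Lemma~\ref{lem:opo_iteration}) observes directly that, for $\lambda=1$, the composition of the mutation step (equivalent to independent bit flips with rate $p$) and the biased crossover is \emph{distributionally identical} to standard bit mutation with rate $pc=1/n$; this immediately reduces everything to known \oea facts and is reused verbatim for \leadingones, MST, and partition. You instead compute the probability of the specific event ``$y$ differs from $x$ in exactly the chosen zero-bit'' via the generating-function identity $E[\ell(1-c)^{\ell-1}]=np(1-pc)^{n-1}$, which is correct but more work and less reusable. Second, the paper sums over fitness levels to get $E[T_I]\le \sum_i en/(\rho(n-i))$, while you use multiplicative drift on $X_t=n-\om(x_t)$; both yield the same $O(n\log n)$ bound and are interchangeable here.
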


The central argument in the proof of Theorem~\ref{thm:onemax} is the observation that the heavy-tailed \ollga performs an iteration equivalent to one of the \oea with a constant probability, which is shown in the following lemma.

\begin{lemma}\label{lem:opo_iteration}
    If $\beta_p$, $\beta_c$ and $\beta_\lambda$ are all strictly greater than one, then with probability $\rho = \Theta(1)$ the heavy-tailed \ollga chooses $p = c = \frac{1}{\sqrt{n}}$ and $\lambda = 1$ and performs an iteration of the \oea with mutation rate $\frac{1}{n}$.
\end{lemma}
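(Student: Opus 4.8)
The plan is to split the statement into two independent claims: (i) the three independent power-law draws simultaneously take their smallest value with constant probability, and (ii) conditioned on this event, the iteration is distributionally identical to one iteration of the \oea with standard bit mutation of rate $1/n$.

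For (i), since $p$, $c$, and $\lambda$ are sampled independently (and independently of everything else in the iteration), I would write
\[
  \Pr\!\left[p = \tfrac{1}{\sqrt n} \wedge c = \tfrac{1}{\sqrt n} \wedge \lambda = 1\right]
  = \Pr[\pow(\beta_p,u_p)=1]\cdot\Pr[\pow(\beta_c,u_c)=1]\cdot\Pr[\pow(\beta_\lambda,u_\lambda)=1],
\]
noting that the scaling by $n^{-1/2}$ means $p=1/\sqrt n$ exactly when $\pow(\beta_p,u_p)=1$, and likewise for $c$. By the definition of the power-law distribution each factor equals the corresponding normalization constant $C_{\beta,u}=\Pr[\pow(\beta,u)=1]$, and since all three exponents are strictly greater than one, Lemma~\ref{lem:c} gives $C_{\beta_p,u_p},C_{\beta_c,u_c},C_{\beta_\lambda,u_\lambda}=\Theta(1)$. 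Hence the product is some constant $\rho=\Theta(1)$ independent of $n$.

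For (ii), I would trace Algorithm~\ref{alg:pseudo} with $\lambda=1$ and $p=c=1/\sqrt n$. With $\lambda=1$, both the mutation winner $x'$ and the crossover winner $y$ are the unique generated individuals, so no within-phase selection occurs. The individual $x'$ is obtained from $x$ by flipping a uniformly random set of $\ell\sim\Bin(n,1/\sqrt n)$ bits; crossover then forms $y$ by keeping, at each position, the $x'$-value with probability $1/\sqrt n$ and the $x$-value otherwise. Since $x'$ and $x$ agree outside the $\ell$ mutated positions, $y$ differs from $x$ exactly on the subset of those positions selected by crossover. The key observation — and the step needing the most care — is that this two-stage process flips each bit $i$ independently with probability $pc$: flipping a $\Bin(n,p)$-sized uniformly random set is equivalent to flipping each bit independently with probability $p$ (condition on the set size; a uniform set of that size is precisely what independent flips produce), and crossover independently retains each such flip with probability $c$, so the per-bit flip indicators are i.i.d.\ $\Bernoulli(pc)=\Bernoulli(1/n)$. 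Thus $y$ has exactly the distribution of an offspring generated from $x$ by standard bit mutation with rate $1/n$.

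Finally, the replacement rule ``$x\gets y$ iff $f(y)\ge f(x)$'' in the pseudocode is exactly the elitist, ties-accepted selection of the \oea, so conditioned on the $\Theta(1)$-probability event of part (i) the whole iteration coincides in distribution with one iteration of the \oea with mutation rate $1/n$, which is the claim. The main obstacle is purely the bookkeeping of the mutation-then-crossover composition in part (ii); part (i) is an immediate consequence of Lemma~\ref{lem:c} and independence of the three draws.
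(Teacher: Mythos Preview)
Your proof is correct and follows essentially the same approach as the paper's own proof: both split into (i) computing $\rho$ as the product of the three independent $\Theta(1)$ normalization constants via Lemma~\ref{lem:c}, and (ii) arguing that with $\lambda=1$ the composed mutation-then-crossover step is distributionally equivalent to standard bit mutation with rate $pc=1/n$. Your part~(ii) is slightly more self-contained in justifying the equivalence of the $\Bin(n,p)$-size uniform-set flip with independent per-bit flips (the paper simply cites~\cite{DoerrDE15} for this fact), and you additionally make the selection-rule match explicit; otherwise the arguments coincide.
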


\begin{proof}
    Since we choose $p$, $c$ and $\lambda$ independently, then by the definition of the power-law distribution and by Lemma~\ref{lem:c} we have
    \begin{align*}
        \rho &= \Pr\left[p = \frac{1}{\sqrt{n}}\right] \Pr\left[c = \frac{1}{\sqrt{n}}\right] \Pr\left[\lambda = 1\right] \\
             &= C_{\beta_p,u_p} 1^{-\beta_p} \cdot C_{\beta_c,u_c} 1^{-\beta_c} \cdot C_{\beta_\lambda,u_\lambda} 1^{-\beta_\lambda} \\
             &= \Theta(1) \cdot \Theta(1) \cdot \Theta(1) = \Theta(1).
    \end{align*}

    If we have $\lambda = 1$, then we have only one mutation offspring which is automatically chosen as the mutation winner $x'$. Note that although we first choose $\ell \sim \Bin(n, p)$ and then flip $\ell$ random bits in $x$, the distribution of $x'$ in the search space is the same as if we flipped each bit independently with probability $p$ (see Section~2.1 in~\cite{DoerrDE15} for more details).
    
    In the crossover phase we create only one offspring $y$ by applying the biased crossover to $x$ and $x'$. Each bit of this offspring is different from the bit in the same position in $x$ if and only if it was flipped in $x'$ (with probability $p$) and then taken from $x'$ in the crossover phase (with probability $c$). Therefore, $y$ is distributed in the search space as if we generated it by applying the standard bit mutation with mutation rate $pc$ to $x$. Hence, we can consider such iteration of the heavy-tailed \ollga as an iteration of the \oea which uses a standard bit mutation with mutation rate $pc = \frac{1}{n}$.
\end{proof}

We are now in position to prove Theorem~\ref{thm:onemax}.

\begin{proof}[Proof of Theorem~\ref{thm:onemax}]
    By Lemma~\ref{lem:opo_iteration} with probability at least $\rho$, which is at least some constant independent of the problem size $n$, the heavy-tailed \ollga performs an iteration of the \oea. Hence, the probability $P(i)$ to increase fitness in one iteration if we have already reached fitness $i$ is
    \[
        P(i) \ge \rho \cdot \frac{n - i}{en}.
    \] 
    Hence, we estimate the total runtime in terms of iterations as a sum of the expected runtimes until we leave each fitness level.
    \[
        E[T_I] \le \sum_{i = 0}^{n - 1} \frac{1}{P(i)} \le \frac{1}{\rho} \sum_{i = 0}^{n - 1} \frac{ne}{n - i} \le \frac{en\ln(n)}{\rho} = O(n\log(n)).
    \]
    To compute the expected number of fitness evaluations until we find the optimum we use Wald's equation (Lemma~\ref{lem:wald}). Since in each iteration of the heavy-tailed \ollga we make $2\lambda$ fitness evaluations, we have
    \[
      E[T_F] = E[T_I] \cdot E[2\lambda]. 
    \]
    By Lemma~\ref{lem:expectation} we have 
    \begin{align*}
        E[\lambda] = \begin{cases}
                \Theta(1), &\text{ if } \beta_\lambda > 2, \\
                \Theta(\log(u_\lambda) + 1), &\text{ if } \beta_\lambda = 2, \\
                \Theta(u_\lambda^{2 - \beta_\lambda}), &\text{ if } \beta_\lambda \in (1, 2). \\
        \end{cases}
    \end{align*}
    Therefore,
    \begin{align*}
        E[T_F] = \begin{cases}
            O(n\log(n)), &\text{ if } \beta_\lambda > 2, \\
            O(n\log(n)(\log(u_\lambda) + 1)), &\text{ if } \beta_\lambda = 2, \\
            O(n u_\lambda^{2 - \beta_\lambda} \log(n)), &\text{ if } \beta_\lambda \in (1, 2). \\
        \end{cases}
    \end{align*}
\end{proof}

Theorem~\ref{thm:onemax} shows that the heavy-tailed \ollga can fall back to a \oea behavior and turn into a simple hill climber. Since we do not have a matching lower bound, our analysis leaves open the question to what extent the heavy-tailed \ollga benefits from iterations in which it samples parameter values different from the ones used in the lemma above. On the one hand, in~\cite{AntipovBD22} it was shown that if we choose only one parameter $\lambda$ from the power-law distribution and set the other parameters to their optimal values in the \ollga (namely, $p = \frac{\lambda}{n}$ and $c = \frac{1}{\lambda}$~\cite{DoerrDE15}), then we have a linear runtime on \onemax. This indicates that there is a chance that the heavy-tailed \ollga with an independent choice of three parameters can also have a $o(n\log(n))$ runtime on this problem. On the other hand, the probability that we choose $p$ and $c$ close to their optimal values is not high, hence we have to rely on making good progress when using non-optimal parameters values. Our experiments presented in Section~\ref{sec:experiments-onemax} suggest that such parameters do not yield the desired progress speed and that the heavy-tailed \ollga has an $\Omega(n\log(n))$ runtime (see Figure~\ref{plot:om:pc}). For this reason, we rather believe that the heavy-tailed \ollga proposed in this work has an inferior performance on \onemax than the one proposed in~\cite{AntipovBD22}. Since our new algorithm has a massively better performance on jump functions, we feel that losing a logarithmic factor in the runtime on \onemax is not too critical.

Lemma~\ref{lem:opo_iteration} also allows us to transform any upper bound on the runtime of the \oea which was obtained via the fitness level argument or via drift with the fitness into the same asymptotical runtime for the heavy-tailed \ollga. We give three examples in the following subsections.

\subsubsection{\leadingones}

For the \leadingones problem, we now show that arguments analogous to the ones in~\cite{Rudolph97} can be used to prove an $O(n^2)$ runtime guarantee also for the heavy-tailed \ollga. 

\begin{theorem}
    If $\beta_\lambda > 1$, $\beta_p > 1$, and $\beta_c > 1$, then the expected runtime of the heavy-tailed \ollga on \leadingones is $O(n^2)$ iterations. In terms of fitness evaluations the expected runtime is
    \begin{align*}
        E[T_F] = \begin{cases}
            O(n^2), &\text{ if } \beta_\lambda > 2, \\
            O(n^2(\log(u_\lambda) + 1)), &\text{ if } \beta_\lambda = 2, \\
            O(n^2 u_\lambda^{2 - \beta_\lambda}), &\text{ if } \beta_\lambda \in (1, 2). \\
        \end{cases}
    \end{align*}
\end{theorem}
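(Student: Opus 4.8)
The plan is to reuse Lemma~\ref{lem:opo_iteration} together with the classical analysis of the \oea on \leadingones due to Rudolph~\cite{Rudolph97}. By Lemma~\ref{lem:opo_iteration}, in each iteration with probability $\rho = \Theta(1)$ the heavy-tailed \ollga samples $p = c = \frac{1}{\sqrt n}$ and $\lambda = 1$, and then performs exactly one iteration of the \oea with mutation rate $\frac 1n$. Since in every iteration the current search point $x$ is replaced only by an individual $y$ with $f(y) \ge f(x)$, the \leadingones value of $x$ never decreases, so a fitness-level argument is available regardless of what happens in the remaining, ``non-\oea'' iterations, which can only help.

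First I would bound the probability $P(i)$ to strictly increase the \leadingones value from a level $i \in [0..n-1]$ in one iteration. Conditioning on the iteration being a \oea step (probability at least $\rho$), it suffices that standard bit mutation flips the $(i+1)$-st bit (the first zero-bit) and none of the first $i$ bits; this has probability $\frac 1n \left(1 - \frac 1n\right)^i \ge \frac{1}{en}$, and the resulting offspring has \leadingones value at least $i+1$ and is accepted. Hence $P(i) \ge \frac{\rho}{en}$ for all such $i$. Since the \leadingones value has to be increased at most $n$ times and, by monotonicity, leaving level $i$ takes an expected number of at most $1/P(i)$ iterations, summing over the at most $n$ levels gives
\begin{align*}
    E[T_I] \le \sum_{i=0}^{n-1} \frac{1}{P(i)} \le \frac{e n^2}{\rho} = O(n^2).
\end{align*}

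For the number of fitness evaluations I would apply Wald's equation (Lemma~\ref{lem:wald}) exactly as in the proof of Theorem~\ref{thm:onemax}: each iteration uses $2\lambda$ fitness evaluations with $\lambda$ drawn independently, the event $\{T_I \ge t\}$ depends only on the history before iteration $t$ (hence is independent of the $\lambda$ sampled in iteration $t$), and $E[T_I] < \infty$, so $E[T_F] = E[T_I] \cdot E[2\lambda]$. Substituting the three cases for $E[\lambda]$ from Lemma~\ref{lem:expectation} ($\Theta(1)$ if $\beta_\lambda > 2$, $\Theta(\log(u_\lambda)+1)$ if $\beta_\lambda = 2$, $\Theta(u_\lambda^{2-\beta_\lambda})$ if $\beta_\lambda \in (1,2)$) yields the claimed bounds on $E[T_F]$.

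The proof has essentially no hard step: it is a direct combination of Lemma~\ref{lem:opo_iteration}, the standard fitness-level bound for \leadingones, and Wald's equation. The only points needing a little care are observing that the fitness is monotone (so the fitness-level method applies despite the heavy-tailed algorithm occasionally sampling very different parameter values) and verifying the applicability conditions of Wald's equation, which are identical to those already checked in the \onemax case.
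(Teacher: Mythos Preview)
Your proof is correct and follows essentially the same approach as the paper: invoke Lemma~\ref{lem:opo_iteration} to get a $\Theta(1)$ probability of a \oea-like step, lower-bound the per-level improvement probability by $\Omega(1/n)$, apply the fitness-level argument to obtain $E[T_I]=O(n^2)$, and finish with Wald's equation and Lemma~\ref{lem:expectation}. Your version is even a bit more careful (explicitly noting monotonicity of the fitness and the applicability of Wald), but otherwise it matches the paper's argument.
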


\begin{proof}
    The probability that the heavy-tailed \ollga improves the fitness in one iteration is at least the probability that it performs an iteration of the \oea that improves the fitness. By Lemma~\ref{lem:opo_iteration} the probability that the heavy-tailed \ollga performs an iteration of the \oea is $\Theta(1)$. The probability that the \oea increases the fitness in one iteration is at least the probability that it flips the first zero-bit in the string and does not flip any other bit, which is $\frac{1}{n}(1 - \frac{1}{n})^{n - 1} \ge \frac{1}{en}$. Hence, the probability that the heavy-tailed \ollga increases the fitness in one iteration is $\Omega(\frac{1}{n})$.

    Therefore, the expected number of iterations before the heavy-tailed \ollga improves the fitness is $O(n)$ iterations. Since there will be no more than $n$ improvements in fitness before we reach the optimum, the expected total runtime of the heavy-tailed \ollga on \leadingones is at most $O(n^2)$ iterations. Since by Lemma~\ref{lem:expectation} with $\beta_\lambda > 1$ the expected cost of one iteration is 
    \begin{align*}
        E[2\lambda] = \begin{cases}
                \Theta(1), &\text{ if } \beta_\lambda > 2, \\
                \Theta(\log(u_\lambda) + 1), &\text{ if } \beta_\lambda = 2, \\
                \Theta(u_\lambda^{2 - \beta_\lambda}), &\text{ if } \beta_\lambda \in (1, 2), \\
        \end{cases}
    \end{align*}
    by Wald's equation (Lemma~\ref{lem:wald}) the expected total runtime in terms of fitness evaluations is
    \begin{align*}
        E[T_F] = E[2\lambda] E[T_I] =\begin{cases}
            O(n^2), &\text{ if } \beta_\lambda > 2, \\
            O(n^2(\log(u_\lambda) + 1)), &\text{ if } \beta_\lambda = 2, \\
            O(n^2 u_\lambda^{2 - \beta_\lambda}), &\text{ if } \beta_\lambda \in (1, 2). \\
        \end{cases}
    \end{align*}
\end{proof}

\subsubsection{Minimum Spanning Tree Problem}

We proceed with the runtime on the minimum spanning tree problem. Reusing some of the arguments from~\cite{NeumannW07} and some more from the later work~\cite{DoerrJW12algo}, we show that the expected runtime of the heavy-tailed \ollga admits the same upper bound $O(m^2\log(W_{\total}))$ as the \oea.

\begin{theorem}
    If $\beta_\lambda > 1$, $\beta_p > 1$, and $\beta_c > 1$, then the expected runtime of the heavy-tailed \ollga on minimum spanning tree problem is $O(m^2 \log(W_{\total}))$ iterations. In terms of fitness evaluations it is
    \begin{align*}
        E[T_F] = \begin{cases}
            O(m^2\log(W_{\total})), &\text{ if } \beta_\lambda > 2, \\
            O(m^2\log(W_{\total})(\log(u_\lambda) + 1)), &\text{ if } \beta_\lambda = 2, \\
            O(m^2 u_\lambda^{2 - \beta_\lambda}\log(W_{\total})), &\text{ if } \beta_\lambda \in (1, 2). \\
        \end{cases}
    \end{align*}
\end{theorem}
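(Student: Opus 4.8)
The plan is to mimic the structure already used for the \onemax and \leadingones theorems: reduce everything to the known analysis of the \oea on the minimum spanning tree problem via Lemma~\ref{lem:opo_iteration}. By that lemma, with probability $\rho = \Theta(1)$ the heavy-tailed \ollga performs exactly one iteration of the \oea with standard mutation rate $1/m$ (note the strings have length $m$ here, not $n$). Hence, in every iteration, the probability that the heavy-tailed \ollga makes the same fitness improvement that the \oea would make is at least $\rho$ times the corresponding probability for the \oea. This immediately transfers any upper bound on the \oea runtime that was obtained by an argument monotone in the per-iteration improvement probabilities --- in particular, arguments via the fitness-level method or via multiplicative drift with respect to the fitness.

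Next I would recall the actual \oea analysis for MST. The bound $O(m^2\log(W_{\max}))$ of Neumann and Wegener~\cite{NeumannW07} proceeds by showing that, once a spanning tree has been reached, the current weight excess above the optimum, call it $g$, decreases in expectation by a multiplicative factor: there are at most $O(m)$ ``improving 2-exchanges'', and a single such exchange (flip two specific bits, leave the rest untouched) has probability $\Omega(1/m^2)$ in the \oea; summing the expected gains yields $E[\Delta g \mid g] = \Omega(g/m^2)$. Doerr, Johannsen, and Winzen~\cite{DoerrJW12} reformulate exactly this as an application of the multiplicative drift theorem (Theorem~\ref{thm:mult-drift}), giving $E[T] = O(m^2(\log(W_{\max}) + 1))$ iterations to go from an arbitrary spanning tree to the optimum, plus an $O(m^2\log m) = O(m^2\log(W_{\max}))$ phase to first reach a spanning tree (using that edge weights are positive integers, so $W_{\max} \ge m$, actually $W_{\max}\ge m$ need not hold but $\log(W_{\max})\ge\log m$ does not either in degenerate cases; in any case the reaching-a-tree phase is absorbed into the stated bound as in~\cite{NeumannW07}). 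The point for us is: both phases use only lower bounds on single-iteration improvement probabilities that come from flipping one or two designated bits and keeping the rest fixed.

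Concretely, I would argue as follows. For the heavy-tailed \ollga, condition on the $\Theta(1)$-probability event that it performs a \oea iteration with mutation rate $1/m$. On that event, a designated 2-bit flip (and no other bit flipped) has probability $\binom{m}{2}^{-1}\cdot\binom{m}{2}\cdot\frac1{m^2}(1-\frac1m)^{m-2}$-type $= \Omega(1/m^2)$, and a designated 1-bit flip has probability $\Omega(1/m)$. Therefore the unconditional per-iteration drift of the weight-excess random variable $X_t$ (defined as the current fitness minus the optimal fitness, restricted to spanning trees) satisfies $E[X_t - X_{t+1}\mid X_t = s] \ge \rho\cdot\Omega(s/m^2) = \Omega(s/m^2)$, and likewise the per-iteration probability of reducing the number of connected components or the number of edges while not in a tree is $\Omega(1/m^2)$. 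Feeding $\delta = \Omega(1/m^2)$ and $s_{\min}\ge 1$, $s_0 \le (W_{\max}+1)^2 m + \dots \le W_{\max}^{O(1)}$ into Theorem~\ref{thm:mult-drift} gives $E[T_I] = O(m^2\log(W_{\max}))$ iterations (the $\log$ of a polynomial in $W_{\max}$ is $\Theta(\log W_{\max})$, and $\log W_{\max}\ge 1$ once $W_{\max}\ge 2$; the trivial case $W_{\max}=1$ is handled separately). This establishes the iteration bound. The fitness-evaluation bound then follows exactly as in the previous two theorems: each iteration costs $2\lambda$ evaluations, $E[2\lambda]$ is given by Lemma~\ref{lem:expectation} according to whether $\beta_\lambda >2$, $=2$, or $\in(1,2)$, and Wald's equation (Lemma~\ref{lem:wald}) multiplies the two, yielding the three displayed cases.

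The main obstacle is not conceptual but bookkeeping: one must be careful that the MST analysis really is ``monotone'' in the single-iteration improvement probabilities --- i.e., that slowing down every iteration by a constant factor $\rho$ only slows the whole process by the factor $1/\rho$ --- and that the relevant events in~\cite{NeumannW07,DoerrJW12} are all of the form ``flip this specific set of one or two bits, flip nothing else'', whose probability under the \oea with rate $1/m$ is $\Omega(1/m)$ or $\Omega(1/m^2)$ respectively. A secondary subtlety is the phase before a spanning tree is reached, where the drift is additive rather than multiplicative in the component count; this is handled by a separate fitness-level sum of length $O(m)$ with per-step probability $\Omega(1/m^2)$, contributing $O(m^2\cdot m) = O(m^3)$ --- but in fact~\cite{NeumannW07} shows it is only $O(m^2\log m)$, which is $O(m^2\log W_{\max})$ since $W_{\max}\ge m$ as the graph is connected with positive integer weights --- so this term is absorbed. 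Once Wald's equation is invoked in the standard way, verifying its four hypotheses is routine (all $X_t=2\lambda$ are i.i.d.\ and independent of the stopping time's ``$T\ge t$'' events by the same argument used implicitly in the \onemax proof).
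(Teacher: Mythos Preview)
Your proposal is correct and follows essentially the same approach as the paper: invoke Lemma~\ref{lem:opo_iteration} to get a $\Theta(1)$ probability of a \oea step, reuse the Neumann--Wegener/Doerr--Johannsen--Winzen analysis (fitness levels to reach a spanning tree, then multiplicative drift on the weight excess), and finish with Wald's equation and Lemma~\ref{lem:expectation}. The only notable difference is bookkeeping in the pre-spanning-tree phase: the paper separates it into ``reach connectivity'' and ``remove surplus edges'', each via single-bit flips with probability $\Omega(1/m)$, obtaining $O(m\log n)$ iterations rather than your looser $O(m^2\log m)$; since $W_{\max}\ge m$ (positive integer weights on $m$ edges) both are absorbed into $O(m^2\log W_{\max})$, so this does not affect the result.
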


\begin{proof}
    In~\cite{NeumannW07} it was shown that starting with a random subgraph of $G$, the \oea finds a spanning tree graph in $O(m\log(n))$ iterations. We now briefly adjust these arguments to the heavy-tailed \ollga. If $G'$ is disconnected, then the probability to reduce the number of connected components is at most the probability that the heavy-tailed \ollga performs an iteration of the \oea multiplied by the probability that an iteration of the \oea adds an edge which connects two connected components (and does not add or remove other edges from the subgraph $G'$). The latter probability is at least $\frac{cc(G') - 1}{m} (1 - \frac{1}{m})^{m - 1} \ge \frac{cc(G') - 1}{em}$, since there are at least $cc(G') - 1$ edges which we can add to connect a pair of connected components. Therefore, by the fitness level argument we have that the expected number of iterations before the heavy-tailed \ollga finds a connected graph is $O(m \log(n))$.

    If the algorithm has found a connected graph, with probability $\Omega(\frac{|E'| - (n - 1)}{m})$ the heavy-tailed \ollga performs an iteration of the \oea that removes an edge participating in a cycle (since there are at least $(|E'| - (n - 1))$ such edges). Therefore, in $O(m\log(m))$ iterations the heavy-tailed \ollga finds a spanning tree (probably not the minimum one). Note that $O(m\log(m)) = O(m\log(n))$, since we do not have loops and parallel edges and thus $m \le \frac{n(n - 1)}{2}$.

    Once the heavy-tailed \ollga has obtained a spanning tree, it cannot accept any subgraph that is not a spanning tree. Therefore, we can use the multiplicative drift argument from~\cite{DoerrJW12algo}. Namely, we define a potential function $\Phi(G')$ that is equal to the weight of the current tree minus the weight of the minimum spanning tree. In~\cite{DoerrJW12algo} it was shown that for every iteration $t$ of \oea, we have
    \begin{align*}
        E[\Phi(G'_t) - \Phi(G'_{t + 1}) \mid \Phi(G'_t) = W] \ge \frac{W}{em^2},
    \end{align*}
    where $G'_t$ denotes the current graph in the start of iteration $t$. By Lemma~\ref{lem:opo_iteration} and since the weight of the current graph cannot decrease in one iteration, for the heavy-tailed \ollga we have
    \begin{align*}
        E[\Phi(G'_t) - \Phi(G'_{t + 1}) \mid \Phi(G'_t) = W] \ge \frac{\rho W}{em^2}
    \end{align*}
    for some $\rho$, which is a constant independent of $m$ and $W$. Since the edge weights are integers, we have $\Phi(G'_t) \ge 1 \eqqcolon s_{\min}$ for all $t$ such that $G'_t$ is not an optimal solution. We also have $\Phi(G'_0) \le W_{\total}$ by the definition of $W_{\total}$. Therefore, by the multiplicative drift theorem (Theorem~\ref{thm:mult-drift}) we have that the expected runtime until we find the optimum starting from a spanning tree is at most
    \begin{align*}
        \frac{1 + \ln(W_{\total})}{\rho / (e m^2)} = O(m^2\log(W_{\total})).
    \end{align*}
    
    Together with the runtime to find a spanning tree, we obtain a total expected runtime of
    \begin{align*}
        E[T_I] = O(m\log(n)) + O(m\log(n)) + O(m^2\log(W_{\total})) = O(m^2\log(W_{\total}))
    \end{align*}
    iterations. By Lemma~\ref{lem:expectation} and by Wald's equation (Lemma~\ref{lem:wald}) the expected number of fitness evaluations is therefore 
    \begin{align*}
        E[T_F] = E[2\lambda] E[T_I] = \begin{cases}
            O(m^2\log(W_{\total})), &\text{ if } \beta_\lambda > 2, \\
            O(m^2\log(W_{\total})(\log(u_\lambda) + 1)), &\text{ if } \beta_\lambda = 2, \\
            O(m^2 u_\lambda^{2 - \beta_\lambda}\log(W_{\total})), &\text{ if } \beta_\lambda \in (1, 2). \\
        \end{cases}
    \end{align*}
\end{proof}

\subsubsection{Approximations for the Partition Problem}

We finally regard the partition problem. We use similar arguments as in~\cite{Witt05} (slightly modified to exploit multiplicative drift analysis) to show that the heavy-tailed \ollga also finds a $(\frac{4}{3} + \eps)$ approximation in linear time. For $\frac{4}{3}$ approximations we improve the $O(n^2)$ runtime result of~\cite{Witt05} and show that both the \oea and the heavy-tailed \ollga succeed in $O(n\log(w))$ fitness evaluations.

\begin{theorem}
    If $\beta_\lambda > 2$, $\beta_p > 1$, and $\beta_c > 1$, then the heavy-tailed \ollga finds a $(\frac{4}{3} + \eps)$ approximation to the partition problem in an expected number of $O(n)$ iterations. The expected number of fitness evaluations is
    \begin{align*}
        E[T_F] = \begin{cases}
            O(n), &\text{ if } \beta_\lambda > 2, \\
            O(n(\log(u_\lambda) + 1)), &\text{ if } \beta_\lambda = 2, \\
            O(n u_\lambda^{2 - \beta_\lambda}), &\text{ if } \beta_\lambda \in (1, 2). \\
        \end{cases}
    \end{align*}
    The heavy-tailed \ollga and the \oea also find a $\frac{4}{3}$ approximation in an expected number of $O(n\log(w))$ iterations. The expected number of fitness evaluations for the heavy-tailed \ollga is
    \begin{align*}
        E[T_F] = \begin{cases}
            O(n\log(w)), &\text{ if } \beta_\lambda > 2, \\
            O(n\log(w)(\log(u_\lambda) + 1)), &\text{ if } \beta_\lambda = 2, \\
            O(n u_\lambda^{2 - \beta_\lambda}\log(w)), &\text{ if } \beta_\lambda \in (1, 2). \\
        \end{cases}
    \end{align*}
\end{theorem}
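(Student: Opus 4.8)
The plan is to reduce both assertions to the corresponding statements for the \oea and then adapt the arguments of Witt~\cite{Witt05}. By Lemma~\ref{lem:opo_iteration}, in every iteration the heavy-tailed \ollga performs an iteration of the \oea with mutation rate $1/n$ with probability $\rho = \Theta(1)$ (here we use $\beta_p,\beta_c,\beta_\lambda > 1$). Hence every lower bound of the form ``one step of the \oea makes a certain kind of progress with probability $q$'' becomes the bound $\rho q$ for the heavy-tailed \ollga, so any fitness-level or drift argument bounding the expected number of \oea iterations by $N$ also bounds the expected number of iterations of the heavy-tailed \ollga by $N/\rho = O(N)$. Once the iteration bounds are in place, the fitness-evaluation bounds follow from Wald's equation (Lemma~\ref{lem:wald}): an iteration costs $2\lambda$ evaluations, $\lambda$ is resampled independently each iteration and is independent of whether the algorithm has already terminated, so $E[T_F] = E[T_I]\,E[2\lambda]$, and inserting $E[\lambda]$ from Lemma~\ref{lem:expectation} produces the three cases according to the value of $\beta_\lambda$.

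Throughout, I would work with the discrepancy $\Delta = |s(B_1)-s(B_2)|$ of the two bins, so that the fitness (weight of the heavier bin) equals $(w+\Delta)/2$, a single-bit flip moves exactly one object between the bins, and the algorithms (both the \oea and the heavy-tailed \ollga, which never accept a worsening solution) never increase $\Delta$. For the $(\tfrac43+\eps)$-approximation I would recast Witt's linear-time argument as a multiplicative drift computation on $\Delta$: as long as the current partition is not yet a $(\tfrac43+\eps)$-approximation we have $\Delta = \Omega_\eps(w)$ (since $\mathrm{OPT}\ge w/2$ forces $\Delta > (\tfrac13+\eps)w$), and one shows that the improving single-object moves together carry expected drift $\Omega_\eps(\Delta/n)$; multiplicative drift (Theorem~\ref{thm:mult-drift}) from $\Delta_0\le w$ down to the threshold $\Omega_\eps(w)$ then needs only $O_\eps(1)$ halvings, i.e.\ $O(n)$ iterations, and Wald's equation gives the stated evaluation counts.

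For the exact $\tfrac43$-approximation the heart of the matter is the structural fact, going back to the analysis of local search for the two-bin partition problem, that a partition whose makespan exceeds $\tfrac43\,\mathrm{OPT}$ must contain, in the heavier bin $H$, an object of weight strictly less than $\Delta$ (so that moving it is an improvement), and in fact one of weight $\Omega(\Delta)$. One proves this via a case distinction: if $H$ is a singleton the makespan equals $w_1\le\mathrm{OPT}$; if $\Delta\le w/3$ the makespan is at most $\tfrac23 w\le\tfrac43\,\mathrm{OPT}$; and otherwise $\Delta > w/3$, so (since $s(H)<2\Delta$) at most one object of $H$ has weight $\ge\Delta$, removing it still leaves weight $\ge\Delta/4$ among the smaller objects, and because $\mathrm{OPT}\ge\max\{w_1,w/2\}$ a configuration in which all these smaller objects are close to $\Delta$ (hence contribute little to the drift) would already be a $\tfrac43$-approximation. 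Summing the contributions of all improving single-object moves then yields $E[\Delta_t-\Delta_{t+1}\mid\Delta_t]\ge\Omega(\Delta_t/n)$ whenever the current solution is not yet a $\tfrac43$-approximation. I would then apply Theorem~\ref{thm:mult-drift} to the potential $X_t$ equal to $\Delta_t$ while the current solution is not a $\tfrac43$-approximation and equal to $0$ afterwards (this $X_t$ is non-increasing because the fitness is, and reaching $X_t=0$ is exactly reaching a $\tfrac43$-approximation, which is then retained forever): since the weights are integers, $s_{\min}\ge 1$, and $s_0\le w$, this gives $O(n\log w)$ iterations for both algorithms, and Wald's equation converts this to the claimed evaluation counts.

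The main obstacle is exactly the structural/drift lemma of the previous paragraph: one must verify that in every partition with makespan above $\tfrac43\,\mathrm{OPT}$ (resp.\ $(\tfrac43+\eps)\,\mathrm{OPT}$) the improving single-object moves, taken together, produce a drift proportional to $\Delta$ — equivalently, that the configurations in which all available improving moves are ``small'' are already sufficiently good approximations because then $\mathrm{OPT}$ itself is forced to be large. This requires a careful case analysis on the size of the largest object relative to $\Delta$ and on how the weight of the heavier bin is distributed among its objects; once it is established, everything else is routine bookkeeping with Lemma~\ref{lem:opo_iteration}, Lemma~\ref{lem:expectation}, Theorem~\ref{thm:mult-drift} and Wald's equation.
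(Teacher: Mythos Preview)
Your high-level plan is exactly the paper's: couple to the \oea via Lemma~\ref{lem:opo_iteration}, establish a multiplicative drift on a suitable potential, apply Theorem~\ref{thm:mult-drift}, and convert iterations to evaluations with Wald's equation and Lemma~\ref{lem:expectation}. The difference is entirely in the potential and the structural argument feeding the drift.

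The paper does \emph{not} work with the discrepancy $\Delta$. It first runs a short preliminary phase (at most $O(n)$ expected iterations) to guarantee that the \emph{critical object} in the heavier bin --- the first object whose prefix sum exceeds $\ell$ --- has weight at most $w/3$; this is automatic when $w_2\le w/3$ and is achieved by separating $w_1,w_2$ otherwise. It then uses the potential $\Phi(x)=\max\{f(x)-\ell-w/6,0\}$ (and $\Phi_\eps$ with threshold $\eps w/2$ for the $(\tfrac43+\eps)$ part). Moving any ``light'' object (one at or after the critical position, hence of weight $\le w/3$) is always accepted and either drops $\Phi$ to $0$ or decreases it by \emph{exactly} the object's weight; since the light objects have total weight $\ge f(x)-\ell\ge\Phi(x)$, the multiplicative drift is immediate. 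This sidesteps precisely the difficulty you identify.

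Your route via $X_t=\Delta_t\cdot\mathds{1}[\text{not }4/3\text{-approx}]$ is a genuinely different decomposition. The obstacle you flag is real: a single accepted move of an object of weight $v<\Delta$ reduces $\Delta$ by $2\min(v,\Delta-v)$, not by $2v$, so objects with $v$ close to $\Delta$ contribute almost nothing to the drift on $\Delta$ itself, and you must argue that such moves instead land inside the $4/3$-approximation region (so $X$ drops to $0$). Your sketch gestures at this (``configurations in which all improving moves are small are already good approximations''), and it can be made to work --- e.g.\ using $f(x)>\tfrac43\ell\ge\tfrac43 v^*$ one gets that the sub-$\Delta$ objects in $H$ have total weight $>\Delta/3$, and a further split at $\Delta/2$ together with $\ell\ge w_1$ handles the large-$v$ case --- but the case analysis is noticeably more delicate than the paper's, and the specific claim ``removing it still leaves weight $\ge\Delta/4$'' as stated needs the extra input $\ell\ge w_1$ to be correct. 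The paper's critical-object potential buys you a drift computation that is essentially a one-liner; your potential buys a more self-contained statement (no preliminary phase, no reference to the critical object) at the cost of a trickier structural lemma.
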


\begin{proof}

    We first recall the definition of a \emph{critical object} from~\cite{Witt05}. Let $\ell \ge \frac{w}{2}$ be the fitness of the optimal solution. Let $i_1 < i_2 < \dots < i_k$ be the indices of the objects in the heavier bin. Then we call the object $r$ in the heavier bin the critical one if it is the object with the smallest index such that
    \begin{align*}
        \sum_{j: i_j \le r} w_{i_j} > \ell.
    \end{align*}
    In other words, the critical object is the object in the heavier bin such that the total weight of all previous (non-lighter) objects in that bin is not greater than $\ell$, but the total weight of all previous objects together with the weight of this object is greater than~$\ell$. We call the weight of the critical object the \emph{critical weight}. We also call the objects in the heavier bin which have index at least $r$ the \emph{light objects}. This notation is illustrated in Figure~\ref{fig:critical-object}.
    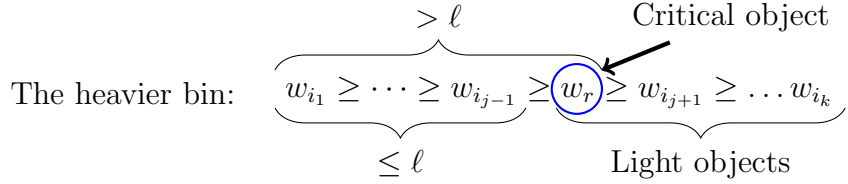
\begin{figure}
        \begin{center}
            \begin{tikzpicture}
                \node (bin) at (0, 0) {The heavier bin:};
                \node [right = 2ex] at (bin.east) {$w_{i_1} \ge \dots \ge w_{i_{j - 1}} \ge w_r \ge w_{i_{j + 1}} \ge \dots w_{i_k}$};

                \draw [decorate,decoration={brace, amplitude=0.4cm}] (5.3, -0.2) -- (2, -0.2) node [pos=0.5,below=1em] {$\le \ell$};
                \draw [decorate,decoration={brace, amplitude=0.4cm}] (2, 0.3) -- (6.3, 0.3) node [pos=0.5,above=1em] {$> \ell$};

                \draw [decorate,decoration={brace, amplitude=0.4cm}] (9.5, -0.2) -- (5.7, -0.2) node [pos=0.5,below=1em] {Light objects};

                \draw [blue, thick] (5.97, 0.05) circle (0.33cm);

                \node (crobj) at (8, 1) {Critical object};
                \draw [->, ultra thick] (crobj) -- (6.3, 0.3);

            \end{tikzpicture}
        \end{center}
        \caption{Illustration of the definition of the critical object.}
        \label{fig:critical-object}
    \end{figure}

    We now show that at some moment the critical weight becomes at most $\frac{w}{3}$ and does not exceed this value in the future. For this we consider two cases.
    
    \textbf{Case 1: $w_2 > \frac{w}{3}$.} Note that in this case we also have $w_1 > \frac{w}{3}$, since $w_1 \ge w_2$ and the weight of all other objects is $w - w_1 - w_2 < \frac{w}{3}$. If the two heaviest objects are in the same bin, then the weight of this (heavier) bin is at least $\frac{2w}{3}$. In any partition in which these two objects are separated the weight of the heavier bin is at most $\max\{w - w_1, w - w_2\} < \frac{2w}{3}$, therefore if the algorithm generates such a partition it would replace a partition in which the two heaviest objects are in the same bin. For the same reason, once we have a partition with the two heaviest objects in different bins, we cannot accept a partition in which they are in the same bin.

    The probability of separating the two heaviest objects into two different bins is at least the probability that the heavy-tailed \ollga performs an iteration of the \oea (which by Lemma~\ref{lem:opo_iteration} is $\Theta(1)$) multiplied by the probability that in this iteration we move one of these two objects into a different bin and do not move the second object. This is at least
    \begin{align*}
        \Theta(1) \cdot \frac{2}{n} \left(1 - \frac{1}{n}\right) = \Theta\left(\frac{1}{n}\right).
    \end{align*}  
    Consequently, in an expected number of $O(n)$ iterations the two heaviest objects will be separated into different bins. 
    
    Note that the weight of the heaviest object cannot be greater than the weight of the heavier bin (even in the optimal solution), hence we have $w_2 \le w_1 \le \ell$. Therefore, when the two heaviest objects are separated into different bins neither of them can be the critical one. Hence, the critical weight is now at most $w_3 < \frac{w}{3}$.  
    
    \textbf{Case 2: $w_2 \le \frac{w}{3}$.} Since the heaviest object can never be the critical one, the critical weight is at most $w_2 \le \frac{w}{3}$.
    
    Once the critical weight is at most $\frac{w}{3}$, we define a potential function 
    \begin{align*}
        \Phi(x_t) = \max\{f(x_t) - \ell - \tfrac{w}{6}, 0\},
    \end{align*}
    where $x_t$ is the current individual of the heavy-tailed \ollga at the beginning of iteration $t$. Note that this potential function does not increase due to the elitist selection of the \ollga.

    We now show that as long as $\Phi(x_t) > 0$, any iteration which moves any light object to the lighter bin and does not move other objects reduces the fitness (and the potential). Recall that the weight of each light object is at most $\frac{w}{3}$. Then the weight of the bin which was heavier before the move is reduced by the weight of the moved object. The weight of the other bin becomes at most 
    \begin{align*}
        w - f(x_t) + \frac{w}{3} < \ell - \frac{w}{6} + \frac{w}{3} \le \ell + \frac{w}{6}.
    \end{align*}
    Therefore, the weight of both bins becomes smaller than the weight of the bin which was heavier before the move, hence such a partition is accepted by the algorithm.

    Now we estimate the expected decrease of the potential in one iteration. Recall that by Lemma~\ref{lem:opo_iteration} the probability that the heavy-tailed \ollga performs an iteration of the \oea is at least some $\rho = \Theta(1)$. The probability that in such an iteration we move only one particular object is $\frac{1}{n}(1 - \frac{1}{n})^{n - 1} \ge \frac{1}{en}$. Hence we have two options.

    \begin{itemize}
        \item If there is at least one light object with weight at least $\Phi(x_t)$, then moving it we decrease the potential to zero, since the wight of the heavier bin becomes not greater than $\ell + \frac{w}{6}$ and the weight of the lighter bin also cannot become greater than $\ell + \frac{w}{6}$ as it was shown earlier. Hence, we have
        \begin{align*}
            E\left[\Phi(x_t) - \Phi(x_{t + 1}) \mid \Phi(x_t) = s\right] \ge \frac{s\rho}{en}.
        \end{align*}
        \item Otherwise, the move of any light object decreases the potential by the weight of the moved object, since the heavy bin will remain the heavier one after such a move. The total weight of the light objects is at least $f(x_t) - \ell \ge \Phi(x_t)$. Let $L$ be the set of indices of the light objects. Then we have
        \begin{align*}
            E\left[\Phi(x_t) - \Phi(x_{t + 1}) \mid \Phi(x_t) = s\right] \ge \rho \sum_{i \in L} \frac{w_i}{en} \ge \frac{s\rho}{en}.
        \end{align*}
    \end{itemize}

    Now we are in position to use the multiplicative drift theorem (Theorem~\ref{thm:mult-drift}). Note that the maximum value of potential function is $\frac{w}{2}$ and its minimum positive value is $\frac{1}{6}$ (since $f(x_t)$ and $\ell$ are integer values and $\frac{w}{6}$ is divided by $\frac{1}{6}$). Therefore, denoting $T_I$ as the smallest $t$ such that $\Phi(x_t) = 0$, we have
    \begin{align*}
        E[T_I] \le \frac{1 + \ln\left(3w\right)}{\rho/(en)} = \Theta(n\log(w)).
    \end{align*}
    When $\Phi(x_t) = 0$, we have
    \begin{align*}
        f(x_t) \le \ell + \tfrac{w}{6} \le \ell + \tfrac{\ell}{3} = \tfrac{4}{3} \ell,
    \end{align*} 
    which means that $x_t$ is a $\frac{4}{3}$ approximation of the optimal solution.

    To show that we obtain a $(\frac{4}{3} + \eps)$ approximation in expected linear time for all constants $\eps > 0$, we use a modified potential function $\Phi_\eps$, which is defined by
    \begin{align*}
        \Phi_\eps(x_t) = \begin{cases}
            0, & \text{ if } \Phi(x_t) \le \frac{\eps w}{2}, \\
            \Phi(x_t), & \text{ otherwise.}
        \end{cases}
    \end{align*}
    For this potential function the drift is at least as large as for $\Phi$, but its smallest non-zero value is $\frac{\eps w}{2}$. Hence, by the multiplicative drift theorem (Theorem~\ref{thm:mult-drift}) the expectation of the first time $T_I(\eps)$ when $\Phi_\eps$ turns to zero is at most
    \begin{align*}
        E[T_I(\eps)] \le \frac{1 + \ln\left(\frac{w}{6} / \frac{\eps w}{2}\right)}{\rho/(en)} = O(n).
    \end{align*} 
    When $\Phi(x_t) = 0$, we have
    \begin{align*}
        f(x_t) \le \ell + \frac{w}{6} + \frac{\eps w}{2} \le \ell + \frac{\ell}{3} + \eps \ell = \left(\frac{4}{3} + \eps\right) \ell,
    \end{align*} 
    therefore $x_t$ is a $(\frac{4}{3} + \eps)$ approximation.

    By Lemma~\ref{lem:expectation} and by Wald's equation (Lemma~\ref{lem:wald}) we also have the following estimates on the runtimes $T_F$ and $T_F(\eps)$ in terms of fitness evaluations.
    \begin{align*}
        E[T_F] &= E[2\lambda] \cdot E[T_I] = \begin{cases}
            O(n\log(w)), &\text{ if } \beta_\lambda > 2, \\
            O(n\log(w)(\log(u_\lambda) + 1)), &\text{ if } \beta_\lambda = 2, \\
            O(n u_\lambda^{2 - \beta_\lambda}\log(w)), &\text{ if } \beta_\lambda \in (1, 2), \\
        \end{cases} \\
        E[T_F(\eps)] &= E[2\lambda] \cdot E[T_I(\eps)] = \begin{cases}
            O(n), &\text{ if } \beta_\lambda > 2, \\
            O(n(\log(u_\lambda) + 1)), &\text{ if } \beta_\lambda = 2, \\
            O(n u_\lambda^{2 - \beta_\lambda}), &\text{ if } \beta_\lambda \in (1, 2). \\
        \end{cases} \\
    \end{align*}

\end{proof}


\subsection{\jump Functions}
\label{sec:jump}

In this subsection we show that the heavy-tailed \ollga performs well on jump functions, hence there is no need for the informal argumentation~\cite{AntipovD20ppsn} to choose mutation rate $p$ and crossover bias $c$ identical. The main result is the following theorem, which estimates the expected runtime until we leave the local optimum of $\jump_k$.

\begin{theorem}
    \label{thm:jump}
    Let $k \in [2..\frac n4]$, $u_p \ge \sqrt{2k}$, and $u_c \ge \sqrt{2k}$. Assume that we use the heavy-tailed \ollga (Algorithm~\ref{alg:pseudo}) to optimize $\jump_k$, starting already in the local optimum. Then the expected number of fitness evaluations until the optimum is found is shown in Table~\ref{tbl:runtime}, where $p_{pc}$ denotes the probability that both $p$ and $c$ are in $[\sqrt{\frac kn}, \sqrt{\frac {2k}n}]$. Table~\ref{tbl:ppc} shows estimates for $p_{pc}$.
\end{theorem}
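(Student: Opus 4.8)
The plan is to estimate the probability $\pi$ that the heavy-tailed \ollga leaves the local optimum of $\jump_k$ in a single iteration and then convert this into a bound on the expected number of fitness evaluations via Wald's equation (Lemma~\ref{lem:wald}). Two structural observations come first. By elitism, until the global optimum is found the current search point always has exactly $n-k$ one-bits, and since the algorithm treats all coordinates symmetrically, $\pi$ is the same in every iteration, so the number of iterations until escape is $\Geom(\pi)$ and $E[T_I]=1/\pi$. Moreover, escaping in one iteration happens exactly when the mutation phase is \emph{successful} (the winner $x'$ has all $k$ zero-bits of $x$ set to one) and the crossover phase is \emph{successful} (some offspring is the all-ones string): if $x'$ misses even one of the $k$ zero-bits then no crossover offspring can have more than $n-1$ one-bits and hence cannot beat $x$, while if $x'$ has all of them, the only offspring beating $x$ is the optimum.

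To lower-bound $\pi$ I would condition on the event $E_{pc}$ that $p,c\in[\sqrt{k/n},\sqrt{2k/n}]$, which has probability $p_{pc}$; the hypotheses $u_p,u_c\ge\sqrt{2k}$ ensure the corresponding integer windows $[\lceil\sqrt k\rceil,\lfloor\sqrt{2k}\rfloor]$ for $p\sqrt n,c\sqrt n$ are non-empty and admissible, and by Lemmas~\ref{lem:sum_estimates} and~\ref{lem:c} the quantity $p_{pc}$ is a product of two power-law partial sums, evaluated in Table~\ref{tbl:ppc}. Given $E_{pc}$ one has $\ell\sim\Bin(n,p)$ with $E[\ell]=pn=\Theta(\sqrt{kn})$, so by a Chernoff bound, with probability $\Omega(1)$, $\ell$ lies in a window with $2k\le\ell=\Theta(\sqrt{kn})$ (using $n\ge 4k$); on this event a short estimate of $\prod_{i=0}^{k-1}\frac{\ell-i}{n-i}$ shows that the probability $m$ that one fixed mutant flips all $k$ zero-bits — and thus, since $\ell\ge 2k$, becomes a mutation winner — satisfies $m=2^{\pm\Theta(k)}(k/n)^{k/2}$, while the probability that one fixed crossover offspring is the optimum, which equals $c^{k}(1-c)^{\ell-k}$ up to the precise shape of $x'$, is $q=2^{\pm\Theta(k)}(k/n)^{k/2}$ with $q\le m$ (the factor $(1-c)^{\ell-k}$ contributing an $e^{-\Theta(k)}$). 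Since the $\lambda$ mutants resp.\ offspring are independent given $\ell$, the mutation phase succeeds with probability $1-(1-m)^{\lambda}=\Theta(\min\{1,\lambda m\})$ and, given that, the crossover phase with probability $1-(1-q)^{\lambda}=\Theta(\min\{1,\lambda q\})$, so
\[
  \pi\ \ge\ \Omega(1)\cdot p_{pc}\cdot E\big[\min\{1,\lambda m\}\min\{1,\lambda q\}\big],
\]
where the expectation is over $\lambda\sim\pow(\beta_\lambda,u_\lambda)$.

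It remains to evaluate $g:=E[\min\{1,\lambda m\}\min\{1,\lambda q\}]$ and $E[2\lambda]$ and to assemble $E[T_F]=E[2\lambda]\cdot E[T_I]=E[2\lambda]/\pi$; the remaining hypotheses of Wald's equation hold because the per-iteration cost $2\lambda_t$ is i.i.d., independent of $\{T_I\ge t\}$ (which only looks at earlier iterations), and $E[T_I]<\infty$. Now $\lambda\mapsto\min\{1,\lambda m\}\min\{1,\lambda q\}$ equals $\lambda^{2}mq$ for $\lambda\le 1/m$, equals $\lambda q$ for $1/m\le\lambda\le 1/q$, and equals $1$ for $\lambda\ge 1/q$; substituting $\Pr[\lambda=i]=\Theta(C_{\beta_\lambda,u_\lambda}i^{-\beta_\lambda})$, splitting the resulting sum at the breakpoints $1/m$ and $1/q$, and comparing these with $u_\lambda$, reduces $g$ (and, by Lemma~\ref{lem:expectation}, $E[\lambda]$) to partial sums of $i^{-\beta_\lambda}$, $i^{1-\beta_\lambda}$ and $i^{2-\beta_\lambda}$, all handled by Lemmas~\ref{lem:sum_estimates} and~\ref{lem:c}. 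The case distinction this creates — according to whether $u_\lambda$ lies below $1/m$, between $1/m$ and $1/q$, or above $1/q$, and according to the range of $\beta_\lambda$ (with the usual boundary values $1,2,3$) — is exactly what yields the rows of Table~\ref{tbl:runtime}.

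The hard part will be this last bookkeeping: one must carry the $2^{\pm\Theta(k)}$ (equivalently $e^{\pm\Theta(k)}$) factors through $m$, $q$ and then through the power-law sums so that they collapse into the single $e^{O(k)}$ factor stated in the table, while $1/m=2^{\pm\Theta(k)}(n/k)^{k/2}$ and $1/q=2^{\pm\Theta(k)}(n/k)^{k/2}$ generate the two breakpoints. A minor wrinkle is the regime $k=\Theta(n)$, where $\ell$ is only $\Omega(1)$-likely (not with high probability) to exceed $2k$ and where the identification of the mutation winner and of the exponent in $q=c^{k}(1-c)^{\ell-k}$ needs slightly more care; this affects only the constants inside the $\Theta(k)$ exponents, not the form of the bound. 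Should Table~\ref{tbl:runtime} also assert matching lower bounds, one additionally needs the converse estimates $m,q\le 2^{O(k)}(k/n)^{k/2}$ uniformly over all admissible $p,c,\ell$, together with the necessity of a successful mutation and a successful crossover, after which the same splitting of the power-law sum gives the matching upper bound on $\pi$.
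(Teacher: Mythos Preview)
Your proposal is correct and follows essentially the same route as the paper: condition on $p,c\in[\sqrt{k/n},\sqrt{2k/n}]$ (the paper's Lemma~\ref{lem:p_pc}), lower-bound the per-iteration success probability by $e^{-\Theta(k)}\min\{1,(k/n)^{k}\lambda^{2}\}$ (the paper's Lemma~\ref{lem:successful_iter}, which cites \cite{AntipovDK20} for the $p_M,p_C$ estimates you rederive), average over $\lambda$ with a single breakpoint at $(n/k)^{k/2}$, and finish with Wald's equation. Your two breakpoints $1/m$ and $1/q$ differ only by an $e^{\Theta(k)}$ factor and hence collapse to the paper's single breakpoint, and the table records only upper bounds, so your lower-bound discussion is not needed.
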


\begin{table}[t]
    \caption{Influence of the hyperparameters $\beta_\lambda, u_\lambda$ on the expected number $E[T_F]$ of fitness evaluations the heavy-tailed \ollga starting in the local optimum takes to optimize $\jump_k$. Since all runtime bounds are of type $E[T_F] = F(\beta_\lambda,u_\lambda) / p_{pc}$, where $p_{pc} = \Pr[p \in [\sqrt{\frac kn}, \sqrt{\frac {2k}n}] \wedge c \in [\sqrt{\frac kn}, \sqrt{\frac {2k}n}]]$ and ${F(\beta_\lambda,u_\lambda)}$ is some function of $\beta_\lambda$ and $u_\lambda$, to ease reading we only state $F(\beta_\lambda,u_\lambda) = E[T_F] p_{pc}$ and show the influence of the hyperparameters on $p_{pc}$ in Table~\ref{tbl:ppc}. Asymptotical notation is used for $n \to +\infty$. The highlighted cell shows the result for the hyperparameters suggested in Corollary~\ref{cor:hyperparameters}.}
	\label{tbl:runtime}
	\begin{center}
		\begin{tabular}{|c||c|c|}
			\hline
            & \multicolumn{2}{c|}{$E[T_F]p_{pc}$} \\ \cline{2-3}

             & 
            if $u_\lambda < \left(\frac{n}{k}\right)^{k/2}$ &  
            if $u_\lambda \ge \left(\frac{n}{k}\right)^{k/2}$ \\ \hline

            $\beta_\lambda \in [0, 1)$ & 
            \multirow{3}{*}{$e^{O(k)} \frac{1}{u_\lambda}\left(\frac{n}{k}\right)^k$}  & 
            $u_\lambda e^{O(k)}$ \\ \cline{1-1}\cline{3-3}
            
            $\beta_\lambda = 1$ &  & 
            $\frac{u_\lambda e^{O(k)}}{1 + \ln\left(u_\lambda\left(\frac{n}{k}\right)^{k/2}\right)}$ \\ \cline{1-1}\cline{3-3}

            $\beta_\lambda \in (1, 2)$ &  &  
            $e^{O(k)}u_\lambda^{2 - \beta_\lambda} \left(\frac{n}{k}\right)^{(\beta_\lambda - 1)k/2}$ \\ \hline

            $\beta_\lambda = 2$ & 
            $e^{O(k)} \frac{\ln(u_\lambda) + 1}{u_\lambda} \left(\frac{n}{k}\right)^k$ & 
            $e^{O(k)} \ln(u_\lambda) \left(\frac{n}{k}\right)^{k/2}$ \\ \hline

            $\beta_\lambda \in (2, 3)$ & 
            $e^{O(k)} \frac{1}{u_\lambda^{3 - \beta_\lambda}} \left(\frac{n}{k}\right)^k$ & 
            \cellcolor{green!20!lightgray!60} $e^{O(k)} \left(\frac{n}{k}\right)^{(\beta_\lambda - 1)k/2}$ \\ \hline
            
            $\beta_\lambda = 3$ & 
            $e^{O(k)} \frac{1}{\ln(u_\lambda + 1)} \left(\frac{n}{k}\right)^k$ & 
            $e^{O(k)} \left(\frac{n}{k}\right)^k / \ln\left(\left(\frac{n}{k}\right)^k\right)$ \\ \hline

            $\beta_\lambda > 3$ & 
            \multicolumn{2}{c|}{$e^{O(k)} \left(\frac{n}{k}\right)^k$} \\ \hline
		\end{tabular}
	\end{center}
\end{table}

\begin{table}[t]
    \caption{Influence of the hyperparameters $\beta_p$ and $\beta_c$ on  $p_{pc} = \Pr[p \in [\sqrt{\frac kn}, \sqrt{\frac {2k}n}] \wedge c \in [\sqrt{\frac kn}, \sqrt{\frac {2k}n}]]$ when both $u_p$ and $u_c$ are at least $\sqrt{2k}$. Asymptotical notation is used for $n \to +\infty$. The highlighted cell shows the result for the hyperparameters suggested in Corollary~\ref{cor:hyperparameters}.}
	\label{tbl:ppc}
	\begin{center}
        \small
        \begin{tabular}{|c||l|l|l|}
        \hline
        & $0 \le \beta_p < 1$ & $\beta_p = 1$ & $\beta_p > 1$ \\
        \hline
        $\beta_c < 1$ & 
        $\Theta\left(\frac{k^{(1 - (\beta_p + \beta_c)/2)}}{u_p^{1 - \beta_p}u_c^{1 - \beta_c}}\right)$ & 
        $\Theta\left(\frac{k^{(1 - \beta_c)/2}}{u_c^{1 - \beta_c}\log(u_p)}\right)$ & 
        $\Theta\left(\frac{k^{(1 - (\beta_p + \beta_c)/2)}}{u_c^{1 - \beta_c}} \right)$ \\
        \hline
        $\beta_c = 1$ & 
        $\Theta\left(\frac{k^{(1 - \beta_p)/2}}{u_p^{1 - \beta_p}\log(u_c)}\right)$ & 
        $\Theta\left(\frac{1}{\log(u_p)\log(u_c)}\right)$ & 
        $\Theta\left(\frac{k^{(1 - \beta_p)/2}}{\log(u_c)}\right)$ \\
        \hline
        $\beta_c > 1$ & 
        $\Theta\left(\frac{k^{(1 - (\beta_p + \beta_c)/2)}}{u_p^{1 - \beta_p}} \right)$ & 
        $\Theta\left(\frac{k^{(1 - \beta_c)/2}}{\log(u_p)}\right)$ & 
        \cellcolor{green!20!lightgray!60} $\Theta\left(k^{(1 - (\beta_p + \beta_c)/2)}\right)$ \\
        \hline
            
		\end{tabular}
	\end{center}
\end{table}



The proof of Theorem~\ref{thm:jump} follows from similar arguments as in~\cite[Theorem 6]{AntipovD20ppsn}, the main differences being highlighted in the following two lemmas.

\begin{lemma}\label{lem:p_pc}
    Let $k \le \frac{n}{4}$. If $u_p \ge \sqrt{2k}$ and $u_c \ge \sqrt{2k}$, then the probability $p_{pc} = \Pr[p \in [\sqrt{\frac kn}, \sqrt{\frac {2k}n}] \wedge c \in [\sqrt{\frac kn}, \sqrt{\frac {2k}n}]]$ is as shown in Table~\ref{tbl:ppc}.
\end{lemma}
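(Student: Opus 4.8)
The plan is to compute $p_{pc}$ by exploiting the independence of $p$ and $c$: writing $p_{pc} = \Pr[p \in [\sqrt{k/n},\sqrt{2k/n}]] \cdot \Pr[c \in [\sqrt{k/n},\sqrt{2k/n}]]$, it suffices to estimate a single factor, say $\Pr[p \in [\sqrt{k/n},\sqrt{2k/n}]]$, and then the symmetric expression for $c$ gives the product, which after combining the $k$-powers and the normalization constants yields the entries of Table~\ref{tbl:ppc}. So first I would reduce the event on $p$ to an event on the underlying power-law variable. Since $p \sim n^{-1/2}\pow(\beta_p,u_p)$, we have $p \in [\sqrt{k/n},\sqrt{2k/n}]$ if and only if the underlying integer $X \sim \pow(\beta_p,u_p)$ satisfies $X \in [\sqrt{k},\sqrt{2k}]$, i.e.\ $X \in [\lceil\sqrt k\rceil .. \lfloor\sqrt{2k}\rfloor]$. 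The hypothesis $u_p \ge \sqrt{2k}$ guarantees this integer interval lies entirely within $[1..u_p]$, so no truncation at the upper end occurs.

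Next I would estimate $\Pr[X \in [\lceil\sqrt k\rceil .. \lfloor\sqrt{2k}\rfloor]] = C_{\beta_p,u_p} \sum_{i = \lceil\sqrt k\rceil}^{\lfloor\sqrt{2k}\rfloor} i^{-\beta_p}$. For the sum I apply Lemma~\ref{lem:sum_estimates} with $a = \lceil\sqrt k\rceil = \Theta(\sqrt k)$ and $b = \lfloor\sqrt{2k}\rfloor = \Theta(\sqrt k)$; here one must check that $b+1 - a = \Theta(\sqrt k)$ so that the interval is genuinely of length $\Theta(\sqrt k)$ and the asymptotic estimates are not degenerate (this holds once $k$ is at least a suitable constant, which is covered by the $n\to\infty$ asymptotics since $k$ may be treated as large; for bounded $k$ the bound holds trivially with adjusted constants). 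The three cases $\beta_p < 1$, $\beta_p = 1$, $\beta_p > 1$ of Lemma~\ref{lem:sum_estimates} give respectively $\Theta((b+1)^{1-\beta_p} - a^{1-\beta_p}) = \Theta(k^{(1-\beta_p)/2})$ (using $(2k)^{(1-\beta_p)/2} - k^{(1-\beta_p)/2} = \Theta(k^{(1-\beta_p)/2})$ since $2^{(1-\beta_p)/2} \ne 1$ for $\beta_p \ne 1$), $\Theta(\log((b+1)/a)) = \Theta(\log\sqrt2) = \Theta(1)$, and $\Theta(a^{1-\beta_p} - (b+1)^{1-\beta_p}) = \Theta(k^{(1-\beta_p)/2})$. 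For the normalization constant $C_{\beta_p,u_p}$ I invoke Lemma~\ref{lem:c}, which gives $\Theta(u_p^{\beta_p-1})$ if $\beta_p < 1$, $\Theta(1/(\log u_p + 1))$ if $\beta_p = 1$, and $\Theta(1)$ if $\beta_p > 1$. Multiplying the appropriate sum estimate by the appropriate normalization estimate yields: $\Theta(k^{(1-\beta_p)/2}/u_p^{1-\beta_p})$ for $\beta_p<1$, $\Theta(1/\log(u_p))$ for $\beta_p = 1$ (note $\log(u_p)+1 = \Theta(\log u_p)$ is fine once $u_p$ is super-constant, and the table implicitly assumes this; alternatively write $\log(u_p)+1$), and $\Theta(k^{(1-\beta_p)/2})$ for $\beta_p > 1$.

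Finally I would multiply the $p$-factor by the structurally identical $c$-factor. Taking the product over the $3\times 3$ combinations of $(\beta_p,\beta_c) \in \{<1, =1, >1\}^2$ and collecting the $k$-exponents as $(1-\beta_p)/2 + (1-\beta_c)/2 = 1 - (\beta_p+\beta_c)/2$ (with the convention that an exponent-$1$ coordinate contributes $k^0$ and a $1/\log u$ factor instead) gives exactly the nine entries of Table~\ref{tbl:ppc}; in particular the highlighted cell $\beta_p,\beta_c>1$ becomes $\Theta(k^{1-(\beta_p+\beta_c)/2})$. The only genuinely delicate point is verifying that the difference $(b+1)^{1-\beta}-a^{1-\beta}$ in the $\beta\ne1$ cases of Lemma~\ref{lem:sum_estimates} is $\Theta(k^{(1-\beta)/2})$ rather than something smaller: this relies on $a$ and $b+1$ differing by a constant multiplicative factor bounded away from $1$ (here the factor is $\sqrt 2 + o(1)$), so that $(b+1)^{1-\beta} = \Theta(a^{1-\beta})$ with a different constant, and their difference is still $\Theta(a^{1-\beta}) = \Theta(k^{(1-\beta)/2})$; I expect this interval-length bookkeeping, together with the floor/ceiling handling needed to keep the integer interval nonempty, to be the main (though minor) obstacle. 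The hypothesis $k \le n/4$ is what ensures $\sqrt{2k/n} \le 1/\sqrt 2 < 1$ so that the scaled interval $[\sqrt{k/n},\sqrt{2k/n}]$ is a valid range of mutation rates/crossover biases; this does not enter the estimate itself but makes the statement meaningful.
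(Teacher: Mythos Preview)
Your proposal is correct and follows essentially the same approach as the paper: factor by independence, translate the event on $p$ (resp.\ $c$) into the underlying integer power-law variable landing in $[\lceil\sqrt{k}\rceil..\lfloor\sqrt{2k}\rfloor]$, apply Lemma~\ref{lem:sum_estimates} for the partial sum and Lemma~\ref{lem:c} for the normalization constant, and multiply the two factors case by case. Your treatment is in fact more careful than the paper's about the floor/ceiling bookkeeping and the nondegeneracy of the difference $(b+1)^{1-\beta}-a^{1-\beta}$, but the underlying argument is identical.
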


\begin{proof}
    Since we choose $p$ and $c$ independently, we have 
    \[
        p_{pc} = \Pr\left[p \in \left[\sqrt{\frac kn}, \sqrt{\frac {2k}n}\right]\right] \cdot \Pr\left[c \in \left[\sqrt{\frac kn}, \sqrt{\frac {2k}n}\right]\right].
    \]
    By the definition of the power-law distribution and by Lemmas~\ref{lem:sum_estimates} and~\ref{lem:c}, we have 
    \begin{align*}
        \Pr&\left[p \in \left[\sqrt{\frac kn}, \sqrt{\frac {2k}n}\right]\right] = C_{\beta_p, u_p} \sum_{i = \lceil\sqrt{k}\rceil}^{\lfloor\sqrt{2k}\rfloor} i^{-\beta_p} \\
        &= \begin{cases}
            \Theta\left(\left(\frac{\sqrt{k}}{u_p}\right)^{1 - \beta_p}\right), &\text{ if } 0 \le \beta_p < 1 \\
            \Theta\left(\frac{1}{\log(u_p)}\right), &\text{ if } \beta_p = 1 \\
            \Theta\left(k^{\frac{1 - \beta_p}{2}}\right), &\text{ if } \beta_p > 1. \\
        \end{cases}
    \end{align*}
    We can estimate $\Pr[c \in [\sqrt{\frac kn}, \sqrt{\frac {2k}n}]]$ in the same manner, which gives us the final estimate of $p_{pc}$ shown in Table~\ref{tbl:ppc}.
\end{proof}

Now we proceed with an estimate of the probability to find the optimum in one iteration after choosing $p$ and $c$.

\begin{lemma}
    \label{lem:successful_iter}
    Let $k \in [2..\frac n4]$. Let $\lambda$, $p$ and $c$ be already chosen in an iteration of the heavy-tailed \ollga and let $p, c \in [\sqrt{\frac kn}, \sqrt{\frac {2k}n}]$. If the current individual $x$ of the heavy-tailed \ollga is in the local optimum of $\jump_k$, then the probability that the algorithm generates the global optimum in one iteration is at least $e^{-\Theta(k)}\min\{1, (\frac{k}{n})^k \lambda^2 \}$.
\end{lemma}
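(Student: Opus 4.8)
To prove Lemma~\ref{lem:successful_iter}, the plan is to lower-bound the probability of one concrete winning route to the optimum: in the mutation phase some offspring flips all $k$ zero-bits of $x$ (so that the mutation winner $x'$ does, too), and in the crossover phase some offspring recombines $x$ and $x'$ so as to restore all $n$ one-bits. I would first condition on the number $\ell$ of bits flipped in the mutation step lying in a convenient window. Since $p \in [\sqrt{k/n},\sqrt{2k/n}]$ we have $E[\ell] = np \in [\sqrt{nk},\sqrt{2nk}]$, and since $np \ge \sqrt{nk} \to \infty$, a Chernoff bound gives $\Pr[\ell \in [\sqrt{nk}, 2\sqrt{nk}]] = \Theta(1)$ (this window contains $[np,\sqrt2\,np]$). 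For every $\ell$ in this window we have, using $k \le n/4$, both $\ell \ge \sqrt{nk} \ge 2k$ and $\ell = \Theta(\sqrt{nk})$; these two facts are what makes the subsequent estimates work.

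For the mutation phase, conditional on such an $\ell$, a single offspring flips exactly the $k$ zero-bits of $x$ (hence also $\ell-k$ of its one-bits) with probability $\binom{n-k}{\ell-k}/\binom{n}{\ell} = \prod_{i=0}^{k-1}\frac{\ell-i}{n-i} \ge \left(\frac{\ell-k}{n}\right)^{k}$, and since $\ell-k \ge \sqrt{nk}-k \ge \tfrac12\sqrt{nk}$ (again using $k\le n/4$), this is at least $q_m := \bigl(\tfrac12\sqrt{k/n}\bigr)^{k} = e^{-\Theta(k)}(k/n)^{k/2}$. As the $\lambda$ mutation offspring are independent given $\ell$, at least one of them flips all $k$ zero-bits with probability $1-(1-q_m)^\lambda \ge \tfrac12\min\{1,\lambda q_m\}$. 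The point needing a short argument is that in this case the mutation winner $x'$ is one of these offspring: for $\ell \ge 2k$ no offspring can be the global optimum (that would require $\ell = k$); an offspring flipping all $k$ zero-bits has $\om = n-\ell+k \le n-k$ and hence $\jump_k$-value $n-\ell+2k$, whereas any offspring flipping fewer zero-bits has strictly smaller $\om$ (so strictly smaller $\jump_k$-value while in $[0..n-k]$) or $\jump_k$-value at most $k-1 < n-\ell+2k$ if it lies in the valley. So a zero-bits-complete offspring has maximal fitness and is selected as $x'$.

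For the crossover phase I would condition additionally on $x'$ being such an offspring. Then a single crossover offspring equals the all-ones string iff it inherits the $k$ formerly-zero bits from $x'$ and the $\ell-k$ newly-zeroed bits from $x$, which has probability $c^k(1-c)^{\ell-k}$. Here $c^k \ge (k/n)^{k/2}$ since $c \ge \sqrt{k/n}$, and $(1-c)^{\ell-k} \ge e^{-c(\ell-k)/(1-c)} = e^{-\Theta(k)}$ since $c(\ell-k) \le \sqrt{2k/n}\cdot 2\sqrt{nk} = 2\sqrt2\,k$ and $c \le \sqrt{2k/n} \le 1/\sqrt2$; hence this probability is at least $q_c := e^{-\Theta(k)}(k/n)^{k/2}$. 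Over the $\lambda$ independent crossover offspring, at least one equals the optimum with probability $1-(1-q_c)^\lambda \ge \tfrac12\min\{1,\lambda q_c\}$, and then the crossover winner $y$ is the global optimum, which is accepted because $f(y)=n+k>n=f(x)$.

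Finally I would multiply the three probabilities. Writing $A := \lambda(k/n)^{k/2}$ and using $\min\{1,A\}^2 = \min\{1,A^2\}$ together with $\min\{1,c^kA\} \ge c^k\min\{1,A\}$ for any constant $c\in(0,1]$, the product becomes $\Theta(1)\cdot\tfrac14\,e^{-\Theta(k)}\min\{1,A\}^2 = e^{-\Theta(k)}\min\{1,\lambda^2(k/n)^k\}$, which is the claimed bound. I expect the only genuinely delicate parts to be the two already flagged: choosing the window for $\ell$ so that $q_m$ and $q_c$ both stay at $e^{-\Theta(k)}(k/n)^{k/2}$ (a larger $\ell$ would destroy $(1-c)^{\ell-k}$, a smaller one would destroy $(\ell/n)^{k}$), and the fitness-ordering argument guaranteeing that the mutation winner is zero-bits-complete; both crucially use the hypothesis $k \le n/4$. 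Everything else is a routine binomial and exponential estimate.
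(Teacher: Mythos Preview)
Your proof is correct and follows the same overall route as the paper: condition on $\ell$ lying in a multiplicative window around $np$, lower-bound the probability that the mutation winner flips all $k$ zero-bits, lower-bound the probability that some crossover offspring restores all ones, and multiply. The paper obtains the two success probabilities by citing Lemmas~3.1 and~3.2 of~\cite{AntipovDK20} as black boxes, whereas you derive them from scratch; your version is thus more self-contained but structurally identical. One minor simplification you could make: with $\ell \ge 2k$ and $j \le k$ flipped zero-bits, every mutation offspring has $\om \le n-k$, so the valley case in your fitness-ordering argument is actually vacuous and the comparison reduces to ``more flipped zero-bits means higher $\om$ means higher $\jump_k$''.
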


\begin{proof}
    The probability $P_{pc}(\lambda)$ that we find the optimum in one iteration is the probability that we have a successful mutation phase and a successful crossover phase in the same iteration. If we denote the probability of a successful mutation phase by $p_M$ and the probability of a successful crossover phase by $p_C$, then we have $P_{pc}(\lambda) = p_M p_C$.
    Then with $q_\ell$ being some constant which denotes the probability that the number $\ell$ of bits we flip in the mutation phase is in $[pn, 2pn]$, by Lemmas~3.1 and~3.2 in~\cite{AntipovDK22} we have
    \begin{align*}
        P_{pc}(\lambda) &= p_M p_C = \frac{q_\ell}{2} \min\left\{1, \lambda \left(\frac{p}{2}\right)^k\right\} \cdot \frac{1}{2} \min\left\{1, \lambda c^k (1 - c)^{2pn - k} \right\} \\
        &\ge \frac{q_\ell}{4} \min\left\{1, \lambda \left(\frac{1}{2}\sqrt{\frac{k}{n}}\right)^k\right\} \min\left\{1, \lambda \sqrt{\frac{k}{n}}^k \left(1 - \sqrt{\frac{2k}{n}}\right)^{2\sqrt{2kn}}\right\} \\
        &= \frac{q_\ell}{4} \min\left\{1, \lambda 2^{-k} \sqrt{\frac{k}{n}}^k\right\} \min\left\{1, \lambda e^{-\Theta(k)} \sqrt{\frac{k}{n}}^k \right\}
    \end{align*}
    If $\lambda \ge \sqrt{\frac{n}{k}}^k$, then we have 
    \begin{align*}
        P_{pc}(\lambda) \ge \frac{q_\ell}{4} 2^{-k} e^{-\Theta(k)} = e^{-\Theta(k)}.
    \end{align*}
    Otherwise, if $\lambda < \sqrt{\frac{n}{k}}^k$, then both minima are equal to their second argument. Thus, we have
    \begin{align*}
        P_{pc}(\lambda) \ge \frac{q_\ell}{4} \lambda^2 2^{-k} e^{-\Theta(k)} \left(\frac{k}{n}\right)^k = e^{-\Theta(k)} \left(\frac{k}{n}\right)^k \lambda^2.
    \end{align*}
    Bringing the two cases together, we finally obtain
    \begin{align*}
        P_{pc}(\lambda) \ge e^{-\Theta(k)}\min\left\{1, \left(\frac{k}{n}\right)^k \lambda^2 \right\}.
    \end{align*}
\end{proof}

Now we are in position to prove Theorem~\ref{thm:jump}.

\begin{proof}[Proof of Theorem~\ref{thm:jump}]
    Let the current individual $x$ of the heavy-tailed \ollga be already in the local optimum. Let $P$ be the probability of event $F$ when the algorithm finds optimum in one iteration. By the law of total probability this probability is at least
    \begin{align*}
        P \ge p_{(F \mid pc)} \cdot p_{pc},
    \end{align*} 
    where $p_{(F \mid pc)}$ denotes $\Pr[F \mid p, c \in [\sqrt{\frac kn}, \sqrt{\frac {2k}n}]]$ and $p_{pc}$ denotes $\Pr[p, c \in [\sqrt{\frac kn}, \sqrt{\frac {2k}n}]]$.

    The number $T_I$ of iterations until we jump to the optimum follows a geometric distribution $\Geom(P)$ with success probability $P$. Therefore,
    \begin{align*}
        E[T_I] = \frac 1P \le \frac{1}{p_{(F \mid pc)} p_{pc}}.
    \end{align*}
    
    Since in each iteration the heavy-tailed \ollga performs $2\lambda$ fitness evaluations (with $\lambda$ chosen from the power-law distribution), by Wald's equation (Lemma~\ref{lem:wald}) the expected number $E[T_F]$ of fitness evaluations the algorithm makes before it finds the optimum is
    \begin{align*}
        E[T_F] = E[T_I]E[2\lambda] \le \frac{2E[\lambda]}{p_{(F \mid pc)} \cdot p_{pc}}.
    \end{align*}

    In the remainder we show how $E[\lambda]$, $p_{(F \mid pc)}$ and $p_{pc}$ depend on the hyperparameters of the algorithm.

    First we note that $p_{pc}$ was estimated in Lemma~\ref{lem:p_pc}. Also, by Lemma~\ref{lem:expectation} the expected value of $\lambda$ is 
    \begin{align*}
        E[\lambda] = \begin{cases}
            \Theta(u_\lambda), & \text{ if } \beta_\lambda < 1, \\
            \Theta(\frac{u_\lambda}{\log(u_\lambda) + 1}), & \text{ if } \beta_\lambda = 1, \\
            \Theta(u_\lambda^{2 - \beta_\lambda}), & \text{ if } \beta_\lambda \in (1, 2), \\
            \Theta(\log(u_\lambda) + 1), & \text{ if } \beta_\lambda = 2, \\
            \Theta(1), & \text{ if } \beta_\lambda > 2.
        \end{cases}
    \end{align*}

    Finally, we compute the conditional probability of $F$ via the law of total probability.
    \begin{align*}
        p_{(F|pc)} = \sum_{i = 1}^{u_\lambda} \Pr[\lambda = i] P_{pc}(i),
    \end{align*}
    where $P_{pc}(i)$ is as defined in Lemma~\ref{lem:successful_iter}, in which it was shown that $P_{pc}(i) \ge e^{-\Theta(k)}\min\{1, (\frac{k}{n})^k i^2 \}$. We consider two cases depending on the value of $u_\lambda$.

    \textbf{Case 1: when $u_\lambda \le (\frac nk)^{k/2}$.} In this case we have $P_{pc}(i) \ge e^{-\Theta(k)}(\frac{k}{n})^k i^2$, hence
    \begin{align*}
        p_{(F|pc)} &\ge \sum_{i = 1}^{u_\lambda} C_{\beta_\lambda,u_\lambda} i^{-\beta_\lambda} e^{-\Theta(k)}\left(\frac{k}{n}\right)^k i^2 \\
        &=  e^{-\Theta(k)}\left(\frac{k}{n}\right)^k C_{\beta_\lambda, u_\lambda} \sum_{i = 1}^{u_\lambda} i^{2 - \beta_\lambda} \\
        &= e^{-\Theta(k)}\left(\frac{k}{n}\right)^k E[\lambda^2].
    \end{align*}
    By Lemma~\ref{lem:expectation-square} we estimate $E[\lambda^2]$ and obtain
    \begin{align*}
        p_{(F|pc)} \ge \begin{cases}
            e^{-\Theta(k)}\left(\frac{k}{n}\right)^k u_\lambda^2, &\text{ if } \beta_\lambda < 1, \\
            e^{-\Theta(k)}\left(\frac{k}{n}\right)^k \frac{u_\lambda^2}{\ln(u_\lambda) + 1}, &\text{ if } \beta_\lambda = 1, \\
            e^{-\Theta(k)}\left(\frac{k}{n}\right)^k u_\lambda^{3 - \beta_\lambda}, &\text{ if } \beta_\lambda \in (1, 3), \\
            e^{-\Theta(k)}\left(\frac{k}{n}\right)^k (\ln(u_\lambda) + 1), &\text{ if } \beta_\lambda = 3, \\
            e^{-\Theta(k)}\left(\frac{k}{n}\right)^k, &\text{ if } \beta_\lambda > 3. \\
        \end{cases} 
    \end{align*}

    \textbf{Case 2: when $u_\lambda > (\frac nk)^{k/2}$.} In this case we have $P_{pc}(i) \ge e^{-\Theta(k)}(\frac{k}{n})^k i^2$, when $i \le (\frac nk)^{k/2}$ and we have $P_{pc}(i) \ge e^{-\Theta(k)}$, when $i > (\frac nk)^{k/2}$. Therefore, we have
    \begin{align*}
        p_{(F \mid s)} &\ge \sum_{i = 1}^{\lfloor \left(\frac{n}{k}\right)^{k/2} \rfloor} C_{\beta_\lambda, u_\lambda} i^{-\beta_\lambda} \left(\frac{k}{n}\right)^k i^2 e^{-\Theta(k)} \\
                        &+   \sum_{i = \lfloor \left(\frac{n}{k}\right)^{k/2} \rfloor + 1}^{u_\lambda} C_{\beta_\lambda, u_\lambda} i^{-\beta_\lambda} e^{-\Theta(k)} \\
                        &=   C_{\beta_\lambda, u_\lambda} e^{-\Theta(k)} \left(\left(\frac{k}{n}\right)^k \sum_{i = 1}^{\lfloor \left(\frac{n}{k}\right)^{k/2} \rfloor} i^{2 - \beta_\lambda} + \sum_{i = \lfloor \left(\frac{n}{k}\right)^{k/2} \rfloor + 1}^{u_\lambda} i^{-\beta_\lambda}\right).
    \end{align*}

    Estimating the sums via Lemma~\ref{lem:sum_estimates}, we obtain
    \begin{align*}
        p_{(F|pc)} \ge \begin{cases}
            e^{-\Theta(k)}, & \text{ if } \beta_\lambda < 1, \\
            e^{-\Theta(k)}\left(1 + \ln\left(u_\lambda \left(\frac kn\right)^{k/2}\right)\right)\frac{1}{\ln(u) + 1}, & \text{ if } \beta_\lambda = 1, \\
            e^{-\Theta(k)} \left(\frac kn\right)^{(\beta - 1)k/2}, & \text{ if } \beta_\lambda \in (1, 3), \\
            e^{-\Theta(k)}\left(\frac kn\right)^k \ln\left(\left(\frac nk\right)^k\right), & \text{ if } \beta_\lambda = 3, \\
            e^{-\Theta(k)}\left(\frac kn\right)^k, & \text{ if } \beta_\lambda > 3, \\
        \end{cases}  
    \end{align*}
    
    Gathering the estimates for the two cases and the estimates of $E[\lambda]$ and $p_{pc}$ together, we obtain the runtimes listed in Table~\ref{tbl:runtime}.
\end{proof}


\subsection{Recommended Hyperparameters}
\label{sec:recommendations}

In this subsection we subsume the results of our runtime analysis to show most preferable parameters of the power-law distributions for the practical use. We point out the runtime with such parameters on \onemax and $\jump_k$ in Corollary~\ref{cor:hyperparameters}. We then also prove a lower bound on the runtime of the \ollga with static parameters to show that when $k$ is constant (that is, the most interesting case, since only then we have a polynomial runtime), then the performance of the heavy-tailed \ollga is asymptotically better than the best performance we can obtain with the static parameters. 

\begin{corollary}\label{cor:hyperparameters}
    Let $\beta_\lambda = 2 + \eps_\lambda$ and $\beta_p = 1 + \eps_p$ and $\beta_c = 1 + \eps_c$, where $\eps_\lambda, \eps_p, \eps_c > 0$ are some constants. Let also $u_\lambda$ be at least $2^n$ and $u_p = u_c = \sqrt{n}$. Then the expected runtime of the heavy-tailed \ollga is $O(n \log(n))$ fitness evaluations on \onemax and $e^{O(k)}(\frac nk)^{(1 + \eps_\lambda)k/2}$ fitness evaluations on $\jump_k$, $k \in [2..\frac{n}{4}]$. 
\end{corollary}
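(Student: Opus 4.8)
The plan is to obtain both statements directly from the runtime results already established, since Corollary~\ref{cor:hyperparameters} is merely the specialization of Theorem~\ref{thm:onemax} and Theorem~\ref{thm:jump} to the recommended hyperparameters. For \onemax, I would simply invoke Theorem~\ref{thm:onemax}: the hypotheses $\beta_\lambda = 2+\eps_\lambda > 2$, $\beta_p = 1+\eps_p > 1$, and $\beta_c = 1+\eps_c > 1$ put us in the first case there, giving $E[T_F] = O(n\log n)$ fitness evaluations irrespective of the (large) value of $u_\lambda$.

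For $\jump_k$ I would first check the hypotheses of Theorem~\ref{thm:jump}: $k \in [2..\frac n4]$ is assumed, and $u_p = u_c = \sqrt n \ge \sqrt{2k}$ since $k \le \frac n4$ implies $2k \le n$ (note also $\sqrt n \le \sqrt n$, so the constraint $u_p,u_c \le \sqrt n$ from the algorithm definition is met). Next I would pin down the relevant entries of Tables~\ref{tbl:runtime} and~\ref{tbl:ppc}. From Table~\ref{tbl:ppc}, since $\beta_p > 1$ and $\beta_c > 1$ we land in the bottom-right (highlighted) cell, so $p_{pc} = \Theta\!\left(k^{1-(\beta_p+\beta_c)/2}\right) = \Theta\!\left(k^{-(\eps_p+\eps_c)/2}\right)$. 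To pick the correct column of Table~\ref{tbl:runtime} I must show $u_\lambda \ge \left(\frac nk\right)^{k/2}$; it suffices to prove $2^n \ge \left(\frac nk\right)^{k/2}$ for all $k \in [2..\frac n4]$. Taking logarithms, $\frac k2 \ln\frac nk$ has derivative $\frac12(\ln\frac nk - 1) > 0$ on this range (because $k \le \frac n4 < \frac ne$), hence it is maximized at $k = \frac n4$, where it equals $\frac n4 \ln 2 < n\ln 2 = \ln(2^n)$. Thus $u_\lambda \ge 2^n \ge \left(\frac nk\right)^{k/2}$ and we are in the right-hand column; note this column carries no $u_\lambda$-dependence for $\beta_\lambda > 2$, so the exact value $u_\lambda = 2^n$ is immaterial.

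Reading off that column for $\beta_\lambda = 2+\eps_\lambda$: if $\eps_\lambda \in (0,1)$ then $E[T_F]\,p_{pc} = e^{O(k)}\left(\frac nk\right)^{(\beta_\lambda-1)k/2} = e^{O(k)}\left(\frac nk\right)^{(1+\eps_\lambda)k/2}$; if $\eps_\lambda = 1$ then $E[T_F]\,p_{pc} = e^{O(k)}\left(\frac nk\right)^{k}/\ln\!\left(\left(\frac nk\right)^k\right) \le e^{O(k)}\left(\frac nk\right)^{k} = e^{O(k)}\left(\frac nk\right)^{(1+\eps_\lambda)k/2}$; and if $\eps_\lambda > 1$ then $E[T_F]\,p_{pc} = e^{O(k)}\left(\frac nk\right)^{k} \le e^{O(k)}\left(\frac nk\right)^{(1+\eps_\lambda)k/2}$ since $\frac{1+\eps_\lambda}{2} > 1$ and $\frac nk \ge 1$. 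In all cases $E[T_F]\,p_{pc} = e^{O(k)}\left(\frac nk\right)^{(1+\eps_\lambda)k/2}$. Dividing by $p_{pc}$ gives
\[
  E[T_F] = \frac{E[T_F]\,p_{pc}}{p_{pc}} = e^{O(k)}\left(\frac nk\right)^{(1+\eps_\lambda)k/2}\cdot \Theta\!\left(k^{(\eps_p+\eps_c)/2}\right),
\]
and since $k^{(\eps_p+\eps_c)/2} = e^{\frac{\eps_p+\eps_c}{2}\ln k} \le e^{\frac{\eps_p+\eps_c}{2}k} = e^{O(k)}$ (using $\ln k \le k$ and that $\eps_p,\eps_c$ are constants), this correction is absorbed into the $e^{O(k)}$ term, yielding $E[T_F] = e^{O(k)}\left(\frac nk\right)^{(1+\eps_\lambda)k/2}$.

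There is no genuinely hard step; the argument is bookkeeping on top of Theorems~\ref{thm:onemax} and~\ref{thm:jump}. The only two points needing a moment of care are (i) verifying $2^n \ge \left(\frac nk\right)^{k/2}$ over the whole range $k \in [2..\frac n4]$, which is what selects the favourable column of Table~\ref{tbl:runtime}, and (ii) observing that every factor polynomial in $k$ — here the $\Theta\!\left(k^{(\eps_p+\eps_c)/2}\right)$ from $1/p_{pc}$, and in the $\beta_\lambda = 3$ case the logarithmic gain — vanishes inside $e^{O(k)}$, so that the three subcases $\eps_\lambda \lessgtr 1$ collapse to the single claimed bound.
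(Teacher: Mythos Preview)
Your approach is essentially the same as the paper's: invoke Theorem~\ref{thm:onemax} for \onemax and Theorem~\ref{thm:jump} (together with Tables~\ref{tbl:runtime} and~\ref{tbl:ppc}) for $\jump_k$, then check that the polynomial-in-$k$ factor $1/p_{pc}$ is absorbed into $e^{O(k)}$. You are in fact more careful than the paper in two places: you give an explicit calculus argument for $2^n \ge (n/k)^{k/2}$ over the full range $k \in [2..n/4]$, and you handle all three subranges $\eps_\lambda \in (0,1)$, $\eps_\lambda = 1$, $\eps_\lambda > 1$, whereas the paper tacitly focuses on the highlighted cell $\beta_\lambda \in (2,3)$.

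There is, however, one genuine omission. Theorem~\ref{thm:jump} bounds the expected number of fitness evaluations \emph{starting from the local optimum}, not from a random initial search point, while the corollary asserts a bound on the total runtime on $\jump_k$. The paper closes this gap explicitly: the fitness-level argument behind Theorem~\ref{thm:onemax} (via Lemma~\ref{lem:opo_iteration}) applies verbatim to the easy part of $\jump_k$ and shows that the heavy-tailed \ollga reaches the local optimum in $O(n\log n)$ expected fitness evaluations, which is dominated by $e^{O(k)}(n/k)^{(1+\eps_\lambda)k/2}$ for every $k \ge 2$ and every constant $\eps_\lambda > 0$. You should add this sentence; without it the $\jump_k$ bound is not actually established.
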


This corollary follows from Theorems~\ref{thm:onemax} and~\ref{thm:jump}. We only note that for the runtime on $\jump_k$ the same arguments as in Theorem~\ref{thm:onemax} show us that the runtime until we reach the local optimum is at most $O(n\log(n))$, which is small compared to the runtime until we reach the global optimum. Also we note that when $\beta_p$ and $\beta_c$ are both greater than one and $u_p = u_c =\sqrt{n} \ge \sqrt{2k}$, by Lemma~\ref{lem:p_pc} we have $p_{pc} = \Theta(k^{-\frac{\eps_p + \eps_c}{2}})$, which is implicitly hidden in the $e^{O(k)}$ factor of the runtime on $\jump_k$. We also note that $u_\lambda = 2^n$ guarantees that $u_\lambda > (\frac{n}{k})^{k/2}$, which yields the runtimes shown in the right column of Table~\ref{tbl:runtime}.

Corollary~\ref{cor:hyperparameters} shows that when we have (almost) unbounded distributions and use power-law exponents slightly greater than one for all parameters except the population size, for which we use a power-law exponent slightly greater than two, we have a good performance both on easy monotone functions, which give us a clear signal towards the optimum, and on the much harder jump functions, without any knowledge of the jump size. 


We now also show that the proposed choice of the hyper-parameters gives us a better performance than any static parameters choice on $\jump_k$ for constant $k$. As we have already noted in the introduction, only for such values of $k$ different variants of the \ollga and many other classic EAs have a polynomial runtime, hence this case is the most interesting to consider. We prove the following theorem which holds for any static parameters choice of the \ollga, even when we use different population sizes $\lambda_M$ and $\lambda_C$ in the mutation and in the crossover phases respectively.

\begin{theorem}\label{thm:static-lower}
    Let $n$ be sufficiently large. Then the expected runtime of the \ollga with any static parameters $p$, $c$, $\lambda_M$ and $\lambda_C$ on $\jump_k$ with $k \le \frac{n}{512}$ is at least $B \coloneqq \frac{1}{91\sqrt{\ln(n/k)}}(\frac{2n}{k})^{(k + 1)/2}$.
\end{theorem}

Before we prove Theorem~\ref{thm:static-lower}, we give a short sketch of the proof to ease the further reading. First we show that with high probability the \ollga with static parameters starts at a point with approximately $\frac{n}{2}$ one-bits. In the second step we handle a wide range of parameter settings and show that for them we cannot obtain a runtime better than $B$ by showing that the probability to find the optimum in one iteration is at most $1/B$. For the remaining settings we then show that we are not likely to observe an $\Omega(n)$ progress in one iteration, hence with high probability there is an iteration when we have a fitness which is $\frac{n}{2} + \Omega(n)$ and at the same time which is $n - k - \Omega(n)$. From that point on the probability that we have a progress which is $\Omega(k\log(\frac{n}{k}))$ is very unlikely to happen hence with high probability the \ollga does not reach the local optima of $\jump_k$ (nor the global one) in $\Omega(\frac{n}{k\log(\frac{n}{k})})$ iterations which is equal to $\Omega(\frac{(\lambda_M + \lambda_C)n}{k\log(\frac{n}{k})})$ fitness evaluations by the definition of the algorithm. For the narrowed range of parameters this yields the lower bound.

To transform these informal arguments into a rigorous proof we use several auxiliary tools. The first of them is Lemma 14 from~\cite{DoerrWY21}, which we formulate as follows\footnote{Note that in~\cite{DoerrWY21} the authors prove upper bounds on both getting a too high and a too low number of one-bits after applying a standard bit mutation. Since we only use the first one, we do not mention the second bound here}.

\begin{lemma}[Lemma 14 in~\cite{DoerrWY21}]\label{lem:mutation-res}
    Let $x$ be a bit string of length $n$ with exactly $m$ one-bits in it. Let $y$ be an offspring of $x$ obtained by flipping each bit independently  with probability $\frac rn$, where $r \le \frac{n}{2}$. Let also $m'$ be a random variable denoting the number of one-bits in $y$. Then for any $\Delta \ge 0$ we have
    \begin{align*}
        \Pr\left[m' - m \ge (n - 2m)\frac{r}{n} + \Delta\right] \le \exp\left(\frac{-\Delta^2}{2(1 - r/n)(r + \Delta/3)}\right).
    \end{align*}
\end{lemma}

We also use the following lemma, which bounds the probability to make a jump to a certain point.

\begin{lemma}\label{lem:prob_jump}
    If we are in distance $d \le \frac{n}{2}$ from the unique optimum of any function, then the probability $P$ that the \ollga with mutation rate $p$, crossover bias $c$ and population sizes for the mutation and crossover phases $\lambda_M$ and $\lambda_C$ respectively finds the optimum in one iteration is at most
    \begin{align*}
        P &\le \min\left\{1, \lambda_M p^d (1 - p)^{n - d} + \lambda_M \lambda_C (pc)^d(1 - pc)^{n - d}\right\} \\
        &\le \min\left\{1, 2\lambda_M \lambda_C \left(\frac{d}{2n}\right)^d\right\}.
    \end{align*}
\end{lemma}

A very similar, but less general result has been proven in~\cite{AntipovDK22} (Theorem~16). 

\begin{proof}[Proof of Lemma~\ref{lem:prob_jump}]
    Without loss of generality we assume that the unique optimum is the all-ones bit string. Hence, the current individual has exactly $d$ zero-bits. Let $p_\ell$ be the probability that we choose $\ell$ as the number of bits to flip at the start of the iteration of the \ollga. Let also $p_m(\ell)$ be the probability (conditional on the chosen $\ell$) that the mutation winner has all zero-bits flipped to ones. Note that this is necessary for crossover to be able to create the global optimum. Let $p_c(\ell)$ be the probability that conditional on the chosen $\ell$ and on that we flip all $d$ zero-bits in the mutation winner, we then flip $\ell - d$ zeros in the mutation winner in at least one crossover offspring. Then by the law of total probability we have
    \begin{align}\label{eq:P}
        P = \sum_{\ell = 0}^n p_\ell p_m(\ell) p_c(\ell).
    \end{align}

    For $\ell < d$ the probability that we flip all $d$ zero-bits in the mutation winner is zero. For $\ell = d$ we flip all $d$ zero-bits in one particular mutation offspring with probability $q_m(\ell) = \binom{n}{d}^{-1}$. Since we create all $\lambda_M$ offspring independently, the probability that we flip all $d$ zero-bits in at least one offspring is
    \begin{align*}
        p_m(\ell) = 1 - (1 - q_m(\ell))^{\lambda_M} \le \lambda_M q_m(\ell) = \lambda_M \binom{n}{d}^{-1},
    \end{align*} 
    where we used Bernoulli inequality. Since when we create such an offspring in the mutation phase, we already find the optimum, we assume that we do not need to perform the crossover and therefore, $p_c(\ell) = 1$ in this case.

    When $\ell > d$, the probability to flip all $d$ zero-bits in one offspring is 
    \begin{align*}
        q_m(\ell) = \binom{n - d}{\ell - d} \binom{n}{\ell}^{-1}.
    \end{align*}
    The probability to do so in one of $\lambda_M$ independently created offspring is thus
    \begin{align*}
        p_m = 1 - (1 - q_m(\ell))^{\lambda_M} \le \lambda_M q_m(\ell) = \lambda_M \binom{n - d}{\ell - d} \binom{n}{\ell}^{-1}.
    \end{align*}
    The probability that in one crossover offspring we take from the current individual all $\ell - d$ bits which are zeros in the mutation winner and take from the mutation winner all $d$ bits which are zeros in the current individual is $q_c(\ell) = c^d (1 - c)^{\ell - d}$. Consequently, the probability that we do this in at least one of $\lambda_C$ independently created individuals is
    \begin{align*}
        p_c(\ell) = 1 - (1 - q_c(\ell))^{\lambda_C} \le \lambda_C q_c(\ell) = \lambda_C c^d (1 - c)^{\ell - d}.
    \end{align*}

    Recall that $\ell$ is chosen from the binomial distribution $\Bin(n, p)$, thus we have $p_\ell = \binom{n}{\ell}p^{\ell}(1 - p)^{n - \ell}$. Putting all the estimates above into~\eqref{eq:P} we obtain
    \begin{align*}
        P &= p_d p_m(d) + \sum_{\ell = d + 1}^n p_\ell p_m(\ell) p_c(\ell) \\
        &\le \binom{n}{d} p^d (1 - p)^{n - d} \lambda_M \binom{n}{d}^{-1} \\
        &+ \sum_{\ell = d + 1}^n \binom{n}{\ell} p^\ell (1 - p)^{n - \ell} \lambda_M \binom{n - d}{\ell - d} \binom{n}{\ell}^{-1} \lambda_C c^d (1 - c)^{\ell - d} \\
        &= \lambda_M p^d (1 - p)^{n - d} + \lambda_M \lambda_C (1 - p)^{n - d} (pc)^d \sum_{\ell = d + 1}^n \binom{n - d}{\ell - d} \left(\frac{p(1 - c)}{1 - p}\right)^{\ell - d} \\
        &= \lambda_M p^d (1 - p)^{n - d} + \lambda_M \lambda_C (1 - p)^{n - d} (pc)^d \sum_{i = 1}^{n - d} \binom{n - d}{i} \left(\frac{p(1 - c)}{1 - p}\right)^{i} \\
        &\le \lambda_M p^d (1 - p)^{n - d} + \lambda_M \lambda_C (1 - p)^{n - d} (pc)^d \left(\frac{p(1 - c)}{1 - p} + 1\right)^{n - d} \\
        &= \lambda_M p^d (1 - p)^{n - d} + \lambda_M \lambda_C (1 - p)^{n - d} (pc)^d \left(\frac{1 - pc}{1 - p}\right)^{n - d} \\
        &= \lambda_M p^d (1 - p)^{n - d} + \lambda_M \lambda_C (1 - pc)^{n - d} (pc)^d.
    \end{align*}
    
    We now consider function $f_d(x) = x^d (1 - x)^{n - d}$ on interval $x \in [0, 1]$. To find its maximum, we consider its value in the ends of the interval (which is zero in both ends) and in the roots of its derivative, which is
    \begin{align*}
        f_d'(x) = dx^{d - 1} (1 - x)^{n - d} - (n - d) x^d (1 - x)^{n - d - 1} = x^{d - 1} (1 - x)^{n - d - 1} (d - nx).
    \end{align*} 
    Hence, the only root of the derivative is in $x = \frac{d}{n}$. Since $f_d(x)$ is a smooth function, it reaches its maximum there, which is,
    \begin{align*}
        f_d\left(\frac{d}{n}\right) = \left(\frac{d}{n}\right)^d \left(1 - \frac{d}{n}\right)^{n - d}.
    \end{align*}
    Since we assume that $d \le \frac{n}{2}$, we conclude that for all $x \in [0, 1]$ we have
    \begin{align*}
        f_d(x) &\le \left(\frac{d}{n}\right)^d \left(1 - \frac{d}{n}\right)^{n - d} = \left(\frac{d}{n}\right)^d \left(\left(1 - \frac{d}{n}\right)^{\frac{n}{d} - 1}\right)^d \le \left(\frac{d}{2n}\right)^d.
    \end{align*}

    Hence we have both $p^d (1 - p)^{n - d} \le (\frac{d}{2n})^d$ and $(1 - pc)^{n - d} (pc)^d \le (\frac{d}{2n})^d$, from which we conclude
    \begin{align*}
        P &\le \lambda_M p^d (1 - p)^{n - d} + \lambda_M \lambda_C (1 - pc)^{n - d} (pc)^d \\
        &\le (\lambda_M + \lambda_M \lambda_C)\left(\frac{d}{2n}\right)^d \le 2\lambda_M \lambda_C \left(\frac{d}{2n}\right)^d.  
    \end{align*}

    Since $P$ cannot exceed one, we also have
    \begin{align*}
        P &\le \min\left\{1, \lambda_M p^d (1 - p)^{n - d} + \lambda_M \lambda_C (1 - pc)^{n - d} (pc)^d\right\} \\
        &\le \min\left\{1, 2\lambda_M \lambda_C \left(\frac{d}{2n}\right)^d\right\}. \qedhere
    \end{align*} 

\end{proof}

An important corollary from Lemma~\ref{lem:prob_jump} is the following lower bound for the case when we use too small population sizes.

\begin{corollary}\label{cor:pop_sizes}
    Consider the run of the \ollga with static parameters on $\jump_k$ with $k < \frac{n}{2}$. Let the population sizes which are used for the mutation and crossover phases be $\lambda_M$ and $\lambda_C$ respectively. Let also the current individual $x$ be a point outside the fitness valley, but with at least $\frac{n}{2}$ one-bits. Then if $\lambda_M\lambda_C < \ln(\frac{n}{k})(\frac{2n}{k})^{k - 1}$, then the expected runtime until we find the global optimum is at least $\frac{1}{2\sqrt{\ln(n/k)}}(\frac{2n}{k})^{\frac{k + 1}{2}}$.
\end{corollary}

\begin{proof}
    Since the algorithm has already found the point outside the fitness valley, it will never accept a point inside it as the current individual $x$. Hence, unless we find the optimum, the distance to it from the current individual is at least $k$ and at most $\frac{n}{2}$.

    We now consider the term $(\frac{d}{2n})^d$, which is used in the bound given in Lemma~\ref{lem:prob_jump}, as a function of $d$ and maximize it for $d \in [k, \frac{n}{2}]$. For this purpose we consider its values in the ends of the interval and in the zeros of its derivative, which is,
    \begin{align*}
        \left(\left(\frac{d}{2n}\right)^d\right)' &= \left(\frac{d}{2n}\right)^d \left(d \ln\left(\frac{d}{2n}\right)\right)' \\
        &= \left(\frac{d}{2n}\right)^d \left(\ln\left(\frac{d}{2n}\right) + 1\right).
    \end{align*}
    Hence, the derivative is equal to zero only when $\frac{d}{2n} = e^{-1}$, that is, when $d = \frac{2n}{e}$. Since we only consider $d$ which are at most $\frac{n}{2}$, the derivative does not have roots in this range. We also note that for $d < \frac{2n}{e}$ the derivative is negative, hence the maximal value of $(\frac{d}{2n})^d$ is reached when $d = k$. Therefore, by Lemma~\ref{lem:prob_jump} we have
    \begin{align}\label{eq:P-general}
        \begin{split}
            P &\le \min\left\{1, 2\lambda_M\lambda_C \left(\frac{d}{2n}\right)^d\right\} \\
              &\le \min\left\{1, 2\lambda_M\lambda_C \left(\frac{k}{2n}\right)^k\right\}.
        \end{split}
    \end{align}

    
    Since $\lambda_M\lambda_C \le \ln(\frac{n}{k})(\frac{2n}{k})^{k - 1}$ and since for all $x \ge 2$ we have $\ln(x) < \frac{x}{2}$, we compute
    \begin{align*}
        2\lambda_M\lambda_C \left(\frac{k}{2n}\right)^k \le 2\ln\left(\frac{n}{k}\right)\left(\frac{2n}{k}\right)^{k - 1} \left(\frac{k}{2n}\right)^k = \ln\left(\frac{n}{k}\right) \cdot \frac{k}{n} \le \frac{1}{2}.
    \end{align*}
    Therefore, the minimum in~\eqref{eq:P-general} is equal to the second argument.
    
    Thus, the runtime $T_I$ (in terms of iterations) is dominated by the geometric distribution with parameter $2\lambda_M\lambda_C \left(\frac{k}{2n}\right)^k \le \frac{1}{2}$. This implies that the expected number of unsuccessful iterations is $E[T_I] - 1 \ge \frac{E[T_I]}{2}$.
    Since in each unsuccessful iteration we have exactly $\lambda_M + \lambda_C$ fitness evaluation, we have
    \begin{align*}
        E[T_F] &= (\lambda_M + \lambda_C) \frac{E[T_I]}{2} \ge \frac{\lambda_M + \lambda_C}{4\lambda_M\lambda_C \left(\frac{k}{2n}\right)^k} = \left(\frac{1}{\lambda_M} + \frac{1}{\lambda_C}\right) \cdot \frac{1}{4} \left(\frac{2n}{k}\right)^k.
    \end{align*}
    By Lemma~\ref{lem:a-g-means} we obtain
    \begin{align*}
        E[T_F] &\ge \left(\frac{1}{\lambda_M} + \frac{1}{\lambda_C}\right) \cdot \frac{1}{4} \left(\frac{2n}{k}\right)^k \ge \frac{1}{2}\sqrt{\frac{1}{\lambda_M\lambda_C}} \left(\frac{2n}{k}\right)^k \\
        &\ge \frac{1}{2}\sqrt{\frac{1}{\ln\left(\frac{n}{k}\right)}\left(\frac{k}{2n}\right)^{k - 1}} \cdot \left(\frac{2n}{k}\right)^k  = \frac{1}{2\sqrt{\ln\left(\frac{n}{k}\right)}}\left(\frac{2n}{k}\right)^{\frac{k + 1}{2}}.
    \end{align*}

\end{proof}

In the following lemma we also show that too large population sizes also yield a too large expected runtime.

\begin{lemma}\label{lem:large-lambda}
    Consider the run of the \ollga with static parameters on $\jump_k$ with $k < \frac{n}{2}$. Let the population sizes which are used for the mutation and crossover phases be $\lambda_M$ and $\lambda_C$ respectively. Let also the current individual $x$ be a point outside the fitness valley, but with at least $\frac{n}{2}$ one-bits. Then if $\lambda_M\lambda_C > \frac{1}{\ln(\frac{n}{k})}(\frac{2n}{k})^{k + 1}$, then the expected runtime until we find the global optimum is at least $\frac{1}{16\sqrt{\ln(n/k)}}(\frac{2n}{k})^{\frac{k + 1}{2}}$.
\end{lemma}
\begin{proof}
    By Lemma~\ref{lem:a-g-means} we have that the cost of one iteration is
    \begin{align*}
        \lambda_M + \lambda_C \ge 2\sqrt{\lambda_M\lambda_C} > \frac{2}{\sqrt{\ln\left(\frac{n}{k}\right)}}\left(\frac{2n}{k}\right)^{\frac{k + 1}{2}},
    \end{align*}
    hence to prove this lemma it is enough to consider only the first iteration of the algorithm, which already takes at least $\frac{2}{\sqrt{\ln(n/k)}}(\frac{2n}{k})^{\frac{k + 1}{2}}$ fitness evaluations plus one evaluation for the initial individual.

    We first show that if $\lambda_M \ge 2^{\frac{n}{2}} - 2$, then we are not likely to sample the optimum before making $2^{\frac{n}{2}} - 1$ fitness evaluations. For this we note that the initial individual and all mutation offspring in the first iteration are sampled independently of the fitness function, thus they are random points in the search space. Therefore, for each of these individuals the probability to be the optimum is $2^{-n}$. Consequently, by the union bound, when we create the initial individual and $2^{\frac{n}{2}} - 2$ mutation offspring, the probability that at least one of them is the optimum is at most 
    \begin{align*}
        \frac{2^{\frac{n}{2}} - 1}{2^n} = 2^{-\frac{n}{2}}(1 - 2^{-n}) \le 2^{-\left(\frac{n}{2} + 1\right)}.
    \end{align*}
    Hence, with probability at least $(1 - 2^{-\left(\frac{n}{2} + 1\right)})$ we have to make $2^\frac{n}{2} - 1$ or more fitness evaluations, which implies that
    \begin{align*}
        E[T_F] \ge \left(1 - 2^{-\left(\frac{n}{2} + 1\right)}\right) \left(2^{\frac{n}{2}} - 1\right) = 2^\frac{n}{2} - \frac{3}{2} + 2^{-\left(\frac{n}{2} + 1\right)} \ge 2^{\frac{n}{2} - 1},
    \end{align*}
    if $n \ge 3$. Without proof we note that $\frac{1}{16\sqrt{\ln(n/k)}}(\frac{2n}{k})^{\frac{k + 1}{2}}$ is increasing in $k$ for $k \le \frac{n}{2}$. Hence, for $k \le \frac{n}{2}$ we have that 
    \begin{align*}
        \frac{1}{16\sqrt{\ln(n/k)}}\left(\frac{2n}{k}\right)^{\frac{k + 1}{2}} \le \frac{1}{16\sqrt{\ln(2)}} \cdot 4^{\left(\frac{n}{4} + \frac{1}{2}\right)} \le 2^{\frac{n}{2} - 2} \le E[T_F].
    \end{align*}

    In the rest of the proof we assume that $\lambda_M < 2^{\frac{n}{2}} - 2$. Since the mutation winner is chosen based on the fitness, we cannot use the same argument with random points in the search space for the crossover phase. However, we can consider an artificial process, which in parallel runs the crossover phase for each mutation offspring seen as winner. If none of these parallel processes has generated the optimum within $m$ crossover offspring samples, then also the true process has not done so within a total of $1 + \lambda_M + m$ fitness evaluations. We note that in the parallel crossover phases, since no selection has been made, again all offspring are uniformly distributed in $\{0,1\}^n$.

    Let us fix $m = 2^{\frac{n}{2} - 1}$. By the union bound, the probability that one of $1 + \lambda_M + m\lambda_M$ individuals generated by the artificial process is the optimum is at most
    \begin{align*}
        \frac{1 + \lambda_M + m\lambda_M}{2^n} &< \frac{1 + \left(2^{\frac{n}{2}} - 2\right)(1 + m)}{2^n} = \frac{1 + \left(2^{\frac{n}{2}} - 2\right)\left(2^{\frac{n}{2} - 1} + 1\right)}{2^n} = \frac{1 + \frac{1}{2}\left(2^n - 4\right)}{2^n} \le \frac{1}{2}.
    \end{align*} 
    At the same time, if the original \ollga creates $1 + \lambda_M + m \ge m$ individuals, it also performs at least $m$ fitness evaluations. Hence, the expected number of fitness evaluations is at least
    \begin{align*}
        E[T_F] \ge \frac{m}{2} = 2^{\frac{n}{2} - 2} \ge \frac{1}{16\sqrt{\ln(n/k)}}\left(\frac{2n}{k}\right)^{\frac{k + 1}{2}}.
    \end{align*}
\end{proof}
 
We are now in position to prove Theorem~\ref{thm:static-lower}.

\begin{proof}[Proof of Theorem~\ref{thm:static-lower}]
    \textbf{Initialization.} Recall that the initial individual is sampled uniformly at random, hence the number of one-bits in it follows a binomial distribution $\Bin(n, \frac{1}{2})$. By the symmetry argument we have that the number of one-bits $X$ in the initial individual is at least $\frac{n}{2}$ with probability at least $\frac{1}{2}$. By Chernoff bounds (see, e.g., Theorem~1.10.1 in~\cite{Doerr20bookchapter}) we also have that the probability that $X$ is greater than $\frac{n}{2} + \frac{n}{8}$ is at most
    \begin{align*}
        \Pr\left[X \ge \left(1 + \frac{1}{4}\right) \frac{n}{2}\right] \le \exp\left(-\frac{n/2}{48}\right) = e^{-\Theta(n)}.
    \end{align*}
    Hence, with probability at least $\frac{1}{2} - e^{-\Theta(n)}$ the initial individual has a number of one-bits (and hence, the fitness) in $[\frac{n}{2}, \frac{5n}{8}]$. We now condition on this event\footnote{Without proof we note that if the initial individual has less than $\frac{n}{2}$ one-bits, our lower bound would also hold. However, the proof of this fact would require more complicated arguments, hence in order to increase the readability of the paper we avoid considering that case.}.

    \textbf{Narrowing the reasonable population sizes.} Since we condition on starting in distance $d \le \frac{n}{2}$ from the optimum of $\jump_k$, by Corollary~\ref{cor:pop_sizes} and Lemma~\ref{lem:large-lambda} we have that if we choose $\lambda_M$ and $\lambda_C$ such that $\lambda_M\lambda_C \ge \frac{1}{\ln(n/k)}(\frac{2n}{k})^{k + 1}$ or $\lambda_M\lambda_C \le \ln(\frac{n}{k})(\frac{2n}{k})^{k - 1}$, then the expected runtime is at least$\frac{1}{16\sqrt{\ln(n/k)}}(\frac{2n}{k})^{\frac{k + 1}{2}} = \frac{91B}{16}$. Hence, in the rest of the proof we assume that $\ln(\frac{n}{k})(\frac{2n}{k})^{k - 1} < \lambda_M \lambda_C < \frac{1}{\ln(n/k)}(\frac{2n}{k})^{k + 1}$.

    We note that by Lemma~\ref{lem:a-g-means} this assumption also implies that the cost of one iteration is
    \begin{align*}
        \lambda_M + \lambda_C \ge 2\sqrt{\lambda_M\lambda_C} \ge 2 \sqrt{\ln\left(\frac{n}{k}\right)}\left(\frac{2n}{k}\right)^{\frac{k - 1}{2}}.
    \end{align*}

    \textbf{Narrowing the reasonable mutation rate and crossover bias.} We now show that using a too large mutation rate or crossover bias also yields a runtime which is greater than $(\frac{2n}{k})^{\frac{k + 1}{2}}$ and therefore greater than $B$. Conditional on the current individual $x$ being in distance $d \le \frac{n}{2}$ from the optimum, by Lemma~\ref{lem:prob_jump} we have that if $pc \ge \frac{1}{2}$ (and therefore, $p \ge \frac{1}{2}$), then we have
    \begin{align*}
        P \le \lambda_M (1 - p)^{n/2} + \lambda_M \lambda_C (1 - pc)^{n/2} \le \lambda_M \left(\frac{1}{2}\right)^\frac{n}{2} + \lambda_M \lambda_C \left(\frac{1}{2}\right)^\frac{n}{2} \le 2\lambda_M \lambda_C \left(\frac{1}{2}\right)^\frac{n}{2}.
    \end{align*}
    Therefore, the expected number of fitness evaluations until we find the optimum is at least
    \begin{align*}
        E[T_F] \ge \frac{\lambda_M + \lambda_C}{P} \ge \left(\frac{1}{\lambda_M} + \frac{1}{\lambda_C}\right) \cdot 2^{\frac{n}{2} - 1}.
    \end{align*}
    Since we already assume that $\lambda_M \lambda_C \le \frac{1}{\ln(n/k)}(\frac{2n}{k})^{k + 1}$, by Lemma~\ref{lem:a-g-means} we have

    \begin{align*}
        \frac{1}{\lambda_M} + \frac{1}{\lambda_C} &\ge 2 \sqrt{\frac{1}{\lambda_M\lambda_C}} \ge  2 \sqrt{\ln\left(\frac{n}{k}\right)}\left(\frac{k}{2n}\right)^{\frac{k + 1}{2}}.
    \end{align*}
    Therefore, we have
    \begin{align*}
        E[T_F] \ge 2^{\frac{n}{2}} \ln\left(\frac{n}{k}\right) \left(\frac{k}{2n}\right)^{\frac{k + 1}{2}} = 2^{\frac{n}{2}} \ln\left(\frac{n}{k}\right) \left(\frac{k}{2n}\right)^{k + 1} \left(\frac{2n}{k}\right)^{\frac{k + 1}{2}}.
    \end{align*}
    We note that for $k \in [1, \frac{n}{512}]$ the term $\left(\frac{k}{2n}\right)^{k + 1}$ is decreasing in $k$ (we avoid the proof of this fact, but note that it trivially follows from considering the derivative). Consequently, if we assume that $k \le \frac{n}{512}$, then we have 
    \begin{align*}
        E[T_F] &\ge 2^\frac{n}{2} \ln\left(512\right) \left(\frac{1}{1024}\right)^{\frac{n}{512} + 1} \left(\frac{2n}{k}\right)^{\frac{k + 1}{2}} = 2^{\frac{n}{2} - 10(\frac{n}{512} + 1)} \cdot \ln(512) \left(\frac{2n}{k}\right)^{\frac{k + 1}{2}} \\
        &= \frac{\ln(512)}{1024} \cdot 2^{\frac{123n}{256}}\left(\frac{2n}{k}\right)^{\frac{k + 1}{2}} \ge \frac{\ln(512)}{1024} \cdot 2^{\frac{246}{256}}\left(\frac{2n}{k}\right)^{\frac{k + 1}{2}} \ge \frac{1}{91}\left(\frac{2n}{k}\right)^{\frac{k + 1}{2}}.
    \end{align*}
    Hence, using $pc \ge \frac{1}{2}$ gives us the expected runtime which is not less than $B$.

    \textbf{Making a linear progress.} For the rest of the proof we assume that we have population sizes such that $\lambda_M \lambda_C \in [\ln(\frac{n}{k})(\frac{2n}{k})^{k - 1}, \frac{1}{\ln(n/k)}(\frac{2n}{k})^{k + 1}]$ and $p$ and $c$ such that $pc \le \frac{1}{2}$. We now show that at some iteration before we have already made at least $(\frac{2n}{k})^{\frac{k + 1}{2}}$ fitness evaluations we get a current individual $x$ with fitness in $[\frac{n}{2} + \frac{n}{8}, \frac{n}{2} + \frac{n}{4}]$. For this we show that conditional on $f(x) \ge \frac{n}{2}$ we are not likely to increase fitness in one iteration by at least $\frac{n}{8}$ in a very long time.
    
    For this purpose we consider a modified iteration of the \ollga, where in the crossover phase we create not only $\lambda_C$ offspring by crossing the current individual $x$ with the mutation winner $x'$, but we create $\lambda_M \cdot \lambda_C$ offspring by performing crossover between $x$ and each mutation offspring $\lambda_C$ times. The best offspring in this modified iteration cannot be worse than the best offspring in a non-modified iteration. Hence the probability that we increase the fitness by a least $\frac{n}{8}$ is at most the probability that the best offspring of this modified iteration is better than the current individual $x$ by at least $\frac{n}{8}$.

    Consider one particular offspring $y'$ created in this modified iteration. Recall that its parent was created by first choosing a number $\ell$ from the binomial distribution $\Bin(n, p)$ and then flipping $\ell$ bits, therefore it is distributed as if we created it by flipping each bit independently with probability $p$. Then when we create $y'$ we take each flipped bit from its parent with probability $c$, hence in the resulting offspring each bit is flipped with probability $pc$, independently of other bits. Consequently the distribution of $y'$ is the same as if we created it via the standard bit mutation with probability of flipping each bit equal to $pc$. Note that this argument works only when we consider one particular individual, since the mutation offspring are dependent on each other (since they have the same number $\ell$ of bits flipped) and therefore, their offspring are all also dependent.

    To estimate the probability that $y'$ has a fitness by $\frac n8$ greater than $x$, we use Lemma~\ref{lem:mutation-res} with $r = pcn$ (note that since $pc < \frac{1}{2}$, we have $r \le \frac{n}{2}$, thus we satisfy the conditions of Lemma~\ref{lem:mutation-res}). Since we are conditioning on $m = f(x) \ge \frac{n}{2}$, we have $(n - 2m)\frac{r}{n} \le 0$. Hence, with $\Delta = \frac{n}{8}$ we obtain
    \begin{align*}
        \Pr\left[f(y') - f(x) \ge \frac{n}{8}\right] &\le \Pr\left[f(y') - f(x) \ge (n - 2f(x))\frac{r}{n} + \Delta\right] \\
        &\le \exp\left(\frac{-\Delta^2}{2(1 - r/n)(r + \Delta/3)}\right) \\
        &\le \exp\left(\frac{-\left(\frac{n}{8}\right)^2}{2\left(pcn + \frac{n}{24}\right)}\right) \le \exp\left(\frac{-\left(\frac{n}{8}\right)^2}{2\left(\frac{n}{2} + \frac{n}{24}\right)}\right)\\
        &\le \exp\left(-\frac{n^2}{64} \cdot \frac{12}{13n}\right) = e^{-\frac{3n}{208}} \le e^{-\frac{n}{70}}.
    \end{align*}

    After $\frac{n}{k}$ modified iterations we create $\frac{\lambda_M \lambda_C n}{k}$ offspring, therefore by the union bound the probability that at least one of them has a fitness by at least $\frac{n}{8}$ greater than the fitness of its parent is at most $\frac{\lambda_M \lambda_C n}{k}e^{-\frac{n}{70}}$. Since we also have $\lambda_M \lambda_C \le \frac{1}{\ln(n/k)}(\frac{2n}{k})^{k + 1}$, this probability is at most
    \begin{align*}
        \frac{1}{\ln\left(\frac{n}{k}\right)} \left(\frac{2n}{k}\right)^{k + 1} \cdot \frac{n}{k} \cdot e^{-\frac{3n}{16}} = \frac{1}{2\ln\left(\frac{n}{k}\right)}\left(\frac{2n}{k}\right)^{k + 2} e^{-\frac{n}{70}} \le \frac{1}{2}\left(\frac{2n}{k}\right)^{k + 2} e^{-\frac{n}{70}}.
    \end{align*}
    We also note that the term $\left(\frac{2n}{k}\right)^{k + 2}$ is increasing in $k$ for $k \in [1, \frac{n}{512}]$ (we omit the proof, since it trivially follows from considering its derivative). Therefore, for such $k$ this probability is at most
    \begin{align*}
        \frac{1}{2} \cdot 1024^{\frac{n}{512} + 2} \cdot e^{-\frac{n}{70}} = 2^{19} \exp\left(\frac{\ln(1024)n}{512} - \frac{n}{70}\right) \le 2^{19} e^{-0.0007n} = e^{-\Theta(n)}.
    \end{align*}

    Let $T_{5n/8}$ be the first iteration when we have $x$ with at least $\frac{5n}{8}$ one-bits. If $T_{5n/8} \le \frac{n}{k}$, then with probability $1 - e^{-\Theta(n)}$ none of the offspring created up to this moment improved the fitness by more than $\frac{n}{8}$. Hence, we have that $x$ has at most $\frac{5n}{8} + \frac{n}{8} = \frac{3n}{4}$ one-bits. Otherwise, if $T_{5n/8} > \frac{n}{k}$, by this iteration we already make at least
    \begin{align*}
        (\lambda_M + \lambda_C) \frac{n}{k} \ge 2 \sqrt{\ln\left(\frac{n}{k}\right)}\left(\frac{2n}{k}\right)^{\frac{k - 1}{2}} \frac{n}{k} \ge \left(\frac{2n}{k}\right)^{\frac{k + 1}{2}}.
    \end{align*}
    fitness evaluations and thus the runtime exceeds $(\frac{2n}{k})^{\frac{k + 1}{2}}$. Hence, in the rest of the proof we assume that at some point we have a current individual $x$ with fitness in $[\frac{5n}{8}, \frac{3n}{4}]$.

    \textbf{Slow progress towards the local optimum.}
    We now show that after reaching fitness at least $\frac{5n}{8}$, the \ollga makes a progress not greater than $\delta \coloneqq \frac{26}{3}(k + 2) \ln(\frac{n}{k})$ per iteration.
    
    For this purpose we again consider an iteration of the modified algorithm, which generates $\lambda_M\lambda_C$ offspring in each iteration. Recall that each offspring created here can be considered as one created by standard bit mutation with mutation rate $pc$. We apply Lemma~\ref{lem:mutation-res} to one particular offspring $y'$ with $r = pcn$ and $\Delta = \frac{r}{4} + \delta$ and obtain
    \begin{align*}
        \Pr\left[f(y') - f(x) \ge \delta\right] &= \Pr\left[f(y') - f(x) \ge (n - 2 f(x)) \frac{r}{n} + (2f(x) - n) \frac{r}{n} + \delta\right] \\
        &\le \Pr\left[f(y') - f(x) \ge (n - 2 f(x)) \frac{r}{n} + \frac{r}{4} + \delta\right] \\
        &\le \exp\left(-\frac{\left(\frac{r}{4} + \delta\right)^2}{2\left(1 - \frac{r}{n}\right)\left(r + \frac{r}{12} + \frac{\delta}{3}\right)}\right) \\
        &\le \exp\left(-\frac{\left(\frac{r}{4} + \delta\right)^2}{2\left(\frac{13r}{12} + \frac{\delta}{3}\right)}\right).
    \end{align*}  
    For all $\delta > 0$ and $r > 0$ we bound the argument of the exponent as follows.
    \begin{align*}
        \frac{\left(\frac{r}{4} + \delta\right)^2}{2\left(\frac{13r}{12} + \frac{\delta}{3}\right)} &= \frac{3}{2} \cdot \frac{\left(\frac{13r}{4} + \delta - 3r\right)\left(\frac{r}{4} + \delta\right)}{\left(\frac{13r}{4} + \delta\right)} = \frac{3}{2} \left(1 - \frac{3r}{\frac{13r}{4} + \delta}\right) \left(\frac{r}{4} + \delta\right) \\
        &\ge \frac{3}{2} \left(1 - \frac{12}{13}\right) \delta = \frac{3}{26}\delta.
    \end{align*}
    Recall that $\delta = \frac{26}{3}(k + 2) \ln(\frac{n}{k})$. Hence, we have 
    \begin{align*}
        \Pr\left[f(y') - f(x) \ge \delta\right] &\le \exp\left(-\frac{\left(\frac{r}{4} + \delta\right)^2}{2\left(\frac{13r}{12} + \frac{\delta}{3}\right)}\right) \le e^{-\frac{3\delta}{26}} \\
        &= \exp\left(-(k + 2) \ln\left(\frac{n}{k}\right)\right) = \left(\frac{n}{k}\right)^{-(k + 2)}.
    \end{align*} 

    By the union bound the probability that we create such offspring in $\frac{n/4 - k}{\delta}$ iterations is at most 
    \begin{align*}
        \frac{\frac{n}{4} - k}{\delta} \lambda_M\lambda_C \left(\frac{n}{k}\right)^{-(k + 2)} \le \frac{n}{4\delta} \cdot \frac{1}{\ln\left(\frac{n}{k}\right)}\left(\frac{n}{k}\right)^{k + 1} \left(\frac{n}{k}\right)^{-(k + 2)} \le \frac{k}{4\delta} = \frac{3k}{26(k + 2)\ln\left(\frac{n}{k}\right)} \le \frac{3}{26}.
    \end{align*}

    If we start at some point $x$ with fitness $f(x) \le \frac{3n}{4}$ and we do not improve fitness by at least $\delta$ for $\frac{n/4 - k}{\delta}$ iterations, then we do not reach the local optima or the global optimum in this number of iterations (note that for the considered values of $k \le \frac{n}{32}$ and $\delta$ the value of $\frac{n/4 - k}{\delta}$ is at least one). During these iterations we do at least $(\lambda_M + \lambda_C)\frac{n/4 - k}{\delta}$ fitness evaluations. Since we have already shown that $\lambda_M + \lambda_C \ge 2\sqrt{\ln(\frac{n}{k})}(\frac{n}{k})^{\frac{k - 1}{2}}$, this is at least
    \begin{align*}
        2\sqrt{\ln\left(\frac{n}{k}\right)}\left(\frac{n}{k}\right)^{\frac{k - 1}{2}} \cdot \frac{\frac{n}{4} - k}{\delta} &= \left(\frac{n}{k}\right)^{\frac{k + 1}{2}} \cdot \frac{(n - 4k)k}{2\delta n}\sqrt{\ln\left(\frac{n}{k}\right)}
    \end{align*}
    fitness evaluations. We now estimate the factor $\frac{(n - 4k)k}{2\delta}\sqrt{\ln(\frac{n}{k})}$. Since by the theorem conditions we have $k \le \frac{n}{32}$, for $n$ large enough we have
    \begin{align*}
        \frac{(n - 4k)k}{2\delta n}\sqrt{\ln\left(\frac{n}{k}\right)} \ge \frac{7nk}{16\delta n} \sqrt{\ln\left(\frac{n}{k}\right)}= \frac{21k\sqrt{\ln\left(\frac{n}{k}\right)}}{416(k + 2) \ln\left(\frac{n}{k}\right)} \ge \frac{k}{20 \cdot 2k \sqrt{\ln\left(\frac{n}{k}\right)}} \ge \frac{1}{40\sqrt{\ln\left(\frac{n}{k}\right)}}.
    \end{align*}

    \textbf{Summary of the proof.} We now bring our arguments together. For the narrowed range of parameters we have shown that (i) with probability $\frac{1}{2} - e^{-\Theta(n)}$ the initial individual has between $\frac{n}{2}$ and $\frac{5n}{8}$ one-bits, (ii) then with probability $1 - e^{-\Theta(n)}$ we reach a point which has between $\frac{5n}{8}$ and $\frac{3n}{4}$ one-bits or exceed $\left(\frac{n}{k}\right)^{\frac{k + 1}{2}}$ fitness evaluations, (iii) then with probability at least $1 - \frac{3}{26} = \frac{23}{26}$ we do not reach the local optima or the global optimum in  $\frac{1}{40\sqrt{\ln(n/k)}}\left(\frac{n}{k}\right)^{\frac{k + 1}{2}}$ fitness evaluations. Hence, with probability at least 
    \begin{align*}
        \left(\frac{1}{2} - e^{-\Theta(n)}\right)\left(1 - e^{-\Theta(n)}\right) \frac{23}{26} = \frac{23}{52} - e^{-\Theta(n)}
    \end{align*}
    we do not find the optimum before making $\frac{1}{40\sqrt{\ln(n/k)}}\left(\frac{n}{k}\right)^{\frac{k + 1}{2}}$ fitness evaluations. Therefore, the expected runtime $T_F$ (in terms of fitness evaluations) is at least
    \begin{align*}
        E[T_F] \ge \left(\frac{23}{52} - e^{-\Theta(n)}\right)\frac{1}{40\sqrt{\ln\left(\frac{n}{k}\right)}}\left(\frac{n}{k}\right)^{\frac{k + 1}{2}} \ge \frac{1}{91\sqrt{\ln\left(\frac{n}{k}\right)}}\left(\frac{n}{k}\right)^{\frac{k + 1}{2}}.
    \end{align*}

\end{proof}

\section{Experiments}
\label{sec:experiments}

As our theoretical analysis gives upper bounds that are precise only up to constant factors, we now use experiments to obtain a better understanding of how the heavy-tailed \ollga performs on concrete problem sizes. We conducted a series of experiments on \onemax and \jump functions with jump sizes $k \in [2..6]$.

Since our theory-based recommendations for $\beta_p$ and $\beta_c$ are very similar, the analysis on the jump functions treats the corresponding distributions very symmetrically, and our preliminary experimentation did not find any significant advantages from using different values for $\beta_p$ and $\beta_c$, we decided to keep them equal in our experiments and denote them together as $\beta_{pc}$, such that $\beta_p = \beta_c = \beta_{pc}$.

In all the presented plots we display average values of 100 independent runs of the considered algorithm, together with the standard deviation.

\subsection{Results for \onemax}
\label{sec:experiments-onemax}

For \onemax we considered the problem sizes $n \in \{ 2^i \mid 3 \le i \le 14 \}$ and all combinations of choices of $\beta_{pc} \in \{ 1.0, 1.2, 1.4, 1.6, 1.8, 2.0, 2.2 \}$ and $\beta_\lambda \in \{ 2.0, 2.2, 2.4, 2.6, 2.8, 3.0, 3.2 \}$. For reasons of space, we only present a selection of these results.

\begin{figure}[!t]
\begin{tikzpicture}
    \begin{loglogaxis}[width=\linewidth, height=0.3\textheight,
                       legend pos=north west, legend columns=2, ymin=0.7,
                       cycle list name=myplotcycle, grid=major, log base x=2, enlarge x limits=0.05,
                       enlarge y limits=false, ymin=1, ymax=100,
                       xlabel={Problem size $n$}, ylabel={Runtime / $n \ln (n)$}]
                       \addplot plot [error bars/.cd, y dir=both, y explicit] coordinates {(8,4.344)+-(0,3.385)(16,5.578)+-(0,3.191)(32,6.254)+-(0,3.101)(64,7.437)+-(0,2.732)(128,8.828)+-(0,2.899)(256,11.84)+-(0,2.813)(512,13.57)+-(0,2.948)(1024,17.33)+-(0,3.011)(2048,20.65)+-(0,3.15)(4096,23.58)+-(0,2.748)(8192,27.68)+-(0,2.835)(16384,32.06)+-(0,2.305)};
        \addlegendentry{$\beta_{pc}=1.0$};
        \addplot plot [error bars/.cd, y dir=both, y explicit] coordinates {(8,4.286)+-(0,3.372)(16,4.928)+-(0,2.564)(32,4.919)+-(0,1.836)(64,6.723)+-(0,2.26)(128,7.48)+-(0,2.141)(256,8.927)+-(0,2.253)(512,9.857)+-(0,2.093)(1024,11.4)+-(0,1.941)(2048,12.39)+-(0,1.575)(4096,14.07)+-(0,1.696)(8192,15.51)+-(0,1.399)(16384,16.51)+-(0,1.205)};
        \addlegendentry{$\beta_{pc}=1.2$};
        \addplot plot [error bars/.cd, y dir=both, y explicit] coordinates {(8,3.941)+-(0,2.315)(16,4.925)+-(0,2.767)(32,4.894)+-(0,2.178)(64,5.676)+-(0,2.002)(128,5.861)+-(0,1.444)(256,6.974)+-(0,1.546)(512,7.603)+-(0,1.504)(1024,8.274)+-(0,1.316)(2048,8.652)+-(0,1.161)(4096,9.251)+-(0,0.8843)(8192,9.805)+-(0,0.8419)(16384,10.16)+-(0,0.6273)};
        \addlegendentry{$\beta_{pc}=1.4$};
        \addplot plot [error bars/.cd, y dir=both, y explicit] coordinates {(8,4.019)+-(0,2.227)(16,3.998)+-(0,1.91)(32,4.418)+-(0,1.883)(64,5.121)+-(0,2.175)(128,5.372)+-(0,1.673)(256,5.608)+-(0,1.306)(512,5.941)+-(0,1.144)(1024,6.471)+-(0,0.8834)(2048,6.634)+-(0,0.8332)(4096,6.82)+-(0,0.7024)(8192,7.003)+-(0,0.6147)(16384,7.202)+-(0,0.4673)};
        \addlegendentry{$\beta_{pc}=1.6$};
        \addplot plot [error bars/.cd, y dir=both, y explicit] coordinates {(8,4.365)+-(0,2.655)(16,4.163)+-(0,2.446)(32,4.376)+-(0,1.827)(64,4.89)+-(0,1.909)(128,4.722)+-(0,1.281)(256,5.009)+-(0,0.985)(512,5.173)+-(0,1.004)(1024,5.272)+-(0,0.8602)(2048,5.417)+-(0,0.6877)(4096,5.482)+-(0,0.5393)(8192,5.505)+-(0,0.5136)(16384,5.564)+-(0,0.3547)};
        \addlegendentry{$\beta_{pc}=1.8$};
        \addplot plot [error bars/.cd, y dir=both, y explicit] coordinates {(8,3.889)+-(0,2.795)(16,4.26)+-(0,3.202)(32,3.847)+-(0,1.604)(64,4.383)+-(0,1.41)(128,4.086)+-(0,1.046)(256,4.356)+-(0,0.919)(512,4.48)+-(0,0.8339)(1024,4.534)+-(0,0.6551)(2048,4.597)+-(0,0.5514)(4096,4.612)+-(0,0.4825)(8192,4.678)+-(0,0.362)(16384,4.732)+-(0,0.3363)};
        \addlegendentry{$\beta_{pc}=2.0$};
        \addplot plot [error bars/.cd, y dir=both, y explicit] coordinates {(8,3.89)+-(0,2.276)(16,3.647)+-(0,1.856)(32,3.827)+-(0,1.383)(64,4.031)+-(0,1.401)(128,4.227)+-(0,1.188)(256,4.095)+-(0,0.8409)(512,4.12)+-(0,0.7246)(1024,4.052)+-(0,0.5681)(2048,4.155)+-(0,0.5134)(4096,4.095)+-(0,0.3546)(8192,4.14)+-(0,0.3234)(16384,4.099)+-(0,0.256)};
        \addlegendentry{$\beta_{pc}=2.2$};
        \addplot plot [error bars/.cd, y dir=both, y explicit] coordinates {(8,1.922)+-(0,1.228)(16,2.203)+-(0,1.156)(32,2.175)+-(0,0.7782)(64,2.323)+-(0,0.7722)(128,2.315)+-(0,0.7005)(256,2.413)+-(0,0.5527)(512,2.468)+-(0,0.6407)(1024,2.356)+-(0,0.4572)(2048,2.423)+-(0,0.4443)(4096,2.435)+-(0,0.3902)(8192,2.515)+-(0,0.4193)(16384,2.474)+-(0,0.3908)};
        \addlegendentry{(1+1) EA};
    \end{loglogaxis}
\end{tikzpicture}
\caption{Running times of the heavy-tailed \ollga on \onemax
         starting from a random point,
         normalized by $n \ln (n)$,
         for different $\beta_{pc} = \beta_p = \beta_c$ and $\beta_\lambda=2.8$
         in relation to the problem size $n$.
         The expected running times of \oea, also starting from a random point, are given for comparison.}
\label{plot:om:pc}
\end{figure}
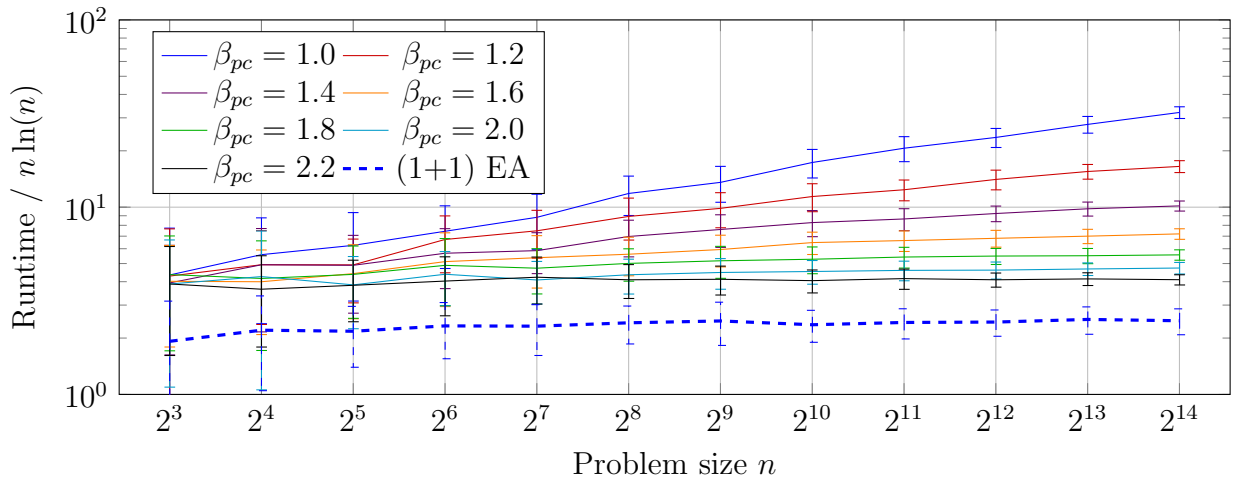

Figure~\ref{plot:om:pc} presents the running times of the heavy-tailed \ollga on \onemax against the problem size $n$ for all considered values of $\beta_{pc}$, whereas a fixed value of $\beta_\lambda = 2.8$ is used. The running times of the \oea are also presented for comparison. Since the runtime is normalized by $n \ln (n)$, the plot of the latter tends to a horizontal line, and so do the plots of the heavy-tailed \ollga with $\beta_{pc} \ge 1.8$. Other plots, after discounting for the noise in the measurements, appear to be convex upwards and, similarly to~\cite{AntipovBD22}, they will likely become horizontal as $n$ grows. For \onemax, bigger values of $\beta_{pc}$ appear to be better. Since greater $\beta_{pc}$ increases the chances of behaving similar to the \oea during an iteration, this fits to the situation discussed in  Lemma~\ref{lem:opo_iteration}.
The plots look similar also for $\beta_\lambda$ different from $2.8$, so we do not present them here.

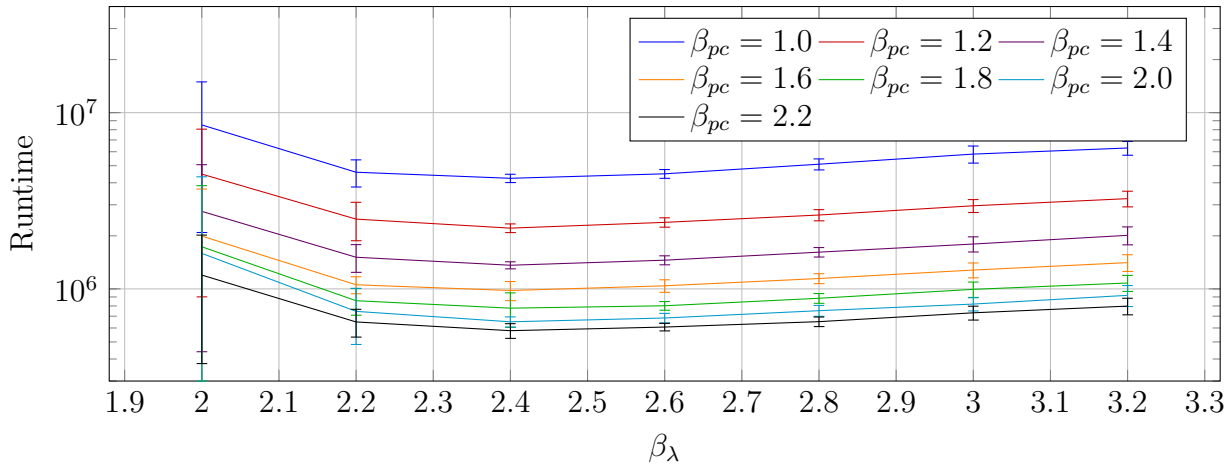
\begin{figure}[!t]
\begin{tikzpicture}
    \begin{semilogyaxis}[width=\linewidth, height=0.3\textheight,
                 legend pos=north east, legend columns=3,
                 cycle list name=myplotcycle, grid=major,
                 xlabel={$\beta_\lambda$}, ylabel={Runtime}, ymin=300000, ymax=40000000]
        \addplot plot [error bars/.cd, y dir=both, y explicit] coordinates {(2.0,8.517e+06)+-(0,6.428e+06)(2.2,4.591e+06)+-(0,8.056e+05)(2.4,4.241e+06)+-(0,2.301e+05)(2.6,4.497e+06)+-(0,2.571e+05)(2.8,5.098e+06)+-(0,3.664e+05)(3.0,5.819e+06)+-(0,6.471e+05)(3.2,6.301e+06)+-(0,5.688e+05)};
        \addlegendentry{$\beta_{pc}=1.0$};
        \addplot plot [error bars/.cd, y dir=both, y explicit] coordinates {(2.0,4.48e+06)+-(0,3.579e+06)(2.2,2.488e+06)+-(0,6.119e+05)(2.4,2.212e+06)+-(0,1.252e+05)(2.6,2.385e+06)+-(0,1.455e+05)(2.8,2.625e+06)+-(0,1.915e+05)(3.0,2.962e+06)+-(0,2.495e+05)(3.2,3.251e+06)+-(0,3.312e+05)};
        \addlegendentry{$\beta_{pc}=1.2$};
        \addplot plot [error bars/.cd, y dir=both, y explicit] coordinates {(2.0,2.753e+06)+-(0,2.313e+06)(2.2,1.512e+06)+-(0,2.708e+05)(2.4,1.362e+06)+-(0,6.273e+04)(2.6,1.455e+06)+-(0,8.552e+04)(2.8,1.616e+06)+-(0,9.973e+04)(3.0,1.797e+06)+-(0,1.776e+05)(3.2,2.013e+06)+-(0,2.348e+05)};
        \addlegendentry{$\beta_{pc}=1.4$};
        \addplot plot [error bars/.cd, y dir=both, y explicit] coordinates {(2.0,1.994e+06)+-(0,1.692e+06)(2.2,1.055e+06)+-(0,1.169e+05)(2.4,9.791e+05)+-(0,1.215e+05)(2.6,1.041e+06)+-(0,8.516e+04)(2.8,1.145e+06)+-(0,7.43e+04)(3.0,1.279e+06)+-(0,1.242e+05)(3.2,1.409e+06)+-(0,1.531e+05)};
        \addlegendentry{$\beta_{pc}=1.6$};
        \addplot plot [error bars/.cd, y dir=both, y explicit] coordinates {(2.0,1733579.7)+=(0,2118321.2)-=(0,1433579.7)(2.2,8.568e+05)+-(0,1.479e+05)(2.4,7.774e+05)+-(0,1.721e+05)(2.6,8.021e+05)+-(0,4.554e+04)(2.8,8.846e+05)+-(0,5.64e+04)(3.0,9.94e+05)+-(0,9.945e+04)(3.2,1.08e+06)+-(0,1.128e+05)};
        \addlegendentry{$\beta_{pc}=1.8$};
        \addplot plot [error bars/.cd, y dir=both, y explicit] coordinates {(2.0,1590446)+=(0,2735310.5)-=(0,1290446)(2.2,7.456e+05)+-(0,2.618e+05)(2.4,6.505e+05)+-(0,4.32e+04)(2.6,6.837e+05)+-(0,4.243e+04)(2.8,7.523e+05)+-(0,5.347e+04)(3.0,8.201e+05)+-(0,7.058e+04)(3.2,9.192e+05)+-(0,1.253e+05)};
        \addlegendentry{$\beta_{pc}=2.0$};
        \addplot plot [error bars/.cd, y dir=both, y explicit] coordinates {(2.0,1.198e+06)+-(0,8.211e+05)(2.2,6.493e+05)+-(0,1.166e+05)(2.4,5.793e+05)+-(0,5.555e+04)(2.6,6.074e+05)+-(0,2.996e+04)(2.8,6.518e+05)+-(0,4.071e+04)(3.0,7.316e+05)+-(0,6.618e+04)(3.2,7.987e+05)+-(0,8.601e+04)};
        \addlegendentry{$\beta_{pc}=2.2$};                 
    \end{semilogyaxis}
\end{tikzpicture}
\caption{Running times of the heavy-tailed \ollga on \onemax
         starting from a random point,
         for $n=2^{14}$ and different $\beta_{pc} = \beta_p = \beta_c$
         depending on $\beta_\lambda$.}
\label{plot:om:lambda}
\end{figure}

To investigate the dependencies on $\beta_\lambda$ and $\beta_{pc}$ more thoroughly, we consider the largest available problem size $n=2^{14}$ and plot the runtimes for all parameters configurations in Figure~\ref{plot:om:lambda}. The general trend of an improving  performance with growing $\beta_{pc}$ can be clearly seen here as well. For $\beta_\lambda$, the picture is less clear. It appears that very small $\beta_\lambda$ also result in larger running times, medium values of roughly $\beta_\lambda = 2.4$ yield the best available runtimes, and a further increase of $\beta_\lambda$ increases the runtime again, but only slightly. As very large $\beta_\lambda$, such as $\beta_\lambda = 3.2$, correspond to regimes similar to the \oea, this might be a sign that some of the working principles of the \ollga are still beneficial on an easy problem like \onemax.

\subsection{Results for \jump Functions}

For \jump functions we used the problem sizes $n \in \{ 2^i \mid 3 \le i \le 7\}$, subject to the condition $k \le \frac{n}{4}$ and hence $n \ge 4k$, as assumed in the theoretical results of this paper. As running times are higher in this setting, 
we consider a smaller set of parameter combinations, $\beta_{pc} \in \{ 1.0, 1.2, 1.4 \}$ and $\beta_\lambda \in \{ 2.0, 2.2, 2.4 \}$.

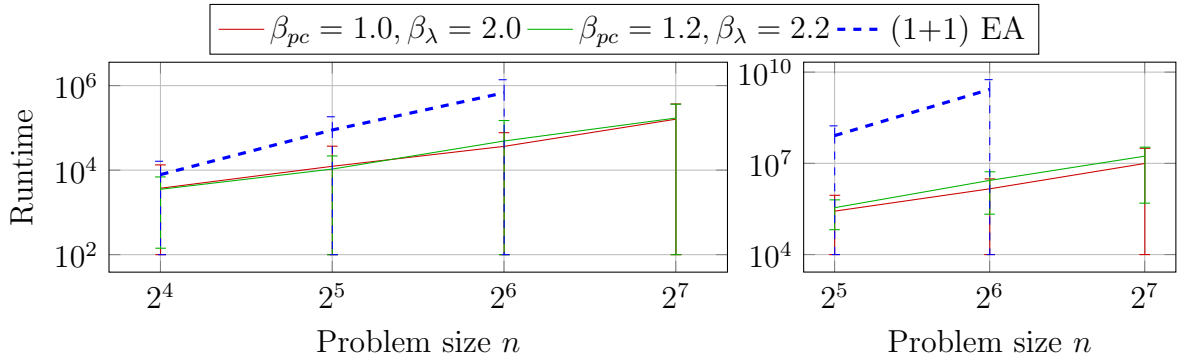
\begin{figure}[!t]
\begin{tikzpicture}
    \begin{loglogaxis}[name=ax1,
                       width=0.6\linewidth, height=0.2\textheight,
                       legend style={at={(0.8333,1)}, anchor=south,legend columns=3, yshift=0.1cm},
                       cycle list name=twoconfigsandea, grid=major, log base x=2,
                       xlabel={Problem size $n$}, ylabel={Runtime}]
        \addplot plot [error bars/.cd, y dir=both, y explicit] coordinates {(16,3726.5)+=(0,9569.256)-=(0,3626.5)(32,12333.34)+=(0,24515.504)-=(0,12233.34)(64,36516.26)+=(0,40366.678)-=(0,36416.26)(128,160458.46)+=(0,206772.6)-=(0,160358.46)};
        \addlegendentry{$\beta_{pc}=1.0, \beta_\lambda=2.0$};
        \addplot plot [error bars/.cd, y dir=both, y explicit] coordinates {(16,3522)+-(0,3380)(32,10595.18)+=(0,11075.924)-=(0,10495.18)(64,48900.78)+=(0,100280.37)-=(0,48800.78)(128,170846.72)+=(0,192408.3)-=(0,170746.72)};
        \addlegendentry{$\beta_{pc}=1.2, \beta_\lambda=2.2$};
        \addplot plot [error bars/.cd, y dir=both, y explicit] coordinates {(16,7832.27)+=(0,8376.3934)-=(0,7732.27)(32,89135.28)+=(0,93856.833)-=(0,89035.28)(64,675625.48)+=(0,700737.88)-=(0,675525.48)};
        \addlegendentry{(1+1) EA};
    \end{loglogaxis}
    \begin{loglogaxis}[at={ (ax1.south east) }, xshift=1cm, width=0.4\linewidth, height=0.2\textheight,
                       every axis legend/.code={\renewcommand\addlegendentry[2][]{}},
                       cycle list name=twoconfigsandea, grid=major, log base x=2,
                       xlabel={Problem size $n$}]
        \addplot plot [error bars/.cd, y dir=both, y explicit] coordinates {(32,267973.94)+=(0,622973.04)-=(0,257973.94)(64,1453299.6)+=(0,1627890.1)-=(0,1443299.6)(128,9912644.4)+=(0,21309415)-=(0,9902644.4)};
        \addlegendentry{$\beta_{pc}=1.0, \beta_\lambda=2.0$};
        \addplot plot [error bars/.cd, y dir=both, y explicit] coordinates {(32,3.482e+05)+-(0,2.825e+05)(64,2.749e+06)+-(0,2.536e+06)(128,1.724e+07)+-(0,1.675e+07)};
        \addlegendentry{$\beta_{pc}=1.2, \beta_\lambda=2.2$};
        \addplot plot [error bars/.cd, y dir=both, y explicit] coordinates {(32,81719198)+=(0,89274148)-=(0,81709198)(64,2.7020314e+09)+=(0,2.9516463e+09)-=(0,2.7020214e+09)};
        \addlegendentry{(1+1) EA};
    \end{loglogaxis}
\end{tikzpicture}
\caption{Running times of the heavy-tailed \ollga on \jump, depending on the problem size $n$, in comparison to the \oea.
         Jump sizes are $k=3$ on the left and $k=5$ on the right. }
\label{plot:jump:comparison}
\end{figure}

Figure~\ref{plot:jump:comparison} presents the results of a comparison of the heavy-tailed \ollga with the \oea on \jump with jump parameter $k \in \{3, 5\}$. We chose two most distant distribution parameters for the heavy-tailed \ollga for presenting in this figure. However, the difference between these is negligible compared to the difference to the \oea. Such a difference aligns well with the theory, as the running time of the \oea is $\Theta(n^k)$, whereas Theorem~\ref{thm:jump} predicts much smaller running times for the heavy-tailed \ollga.

\begin{table}[!t]
\caption{The $p$-values for experimental results presented in Figure~\ref{plot:jump:comparison}}\label{stats:jump:comparison}
\centering
\begin{tabular}{llll}\hline
$k$ & $n$ & Student's t-test      & Wilcoxon rank sum test \\\hline
3   & 16  & $1.46 \cdot 10^{-3}$  & $4.14 \cdot 10^{-6}$ \\
3   & 32  & $1.88 \cdot 10^{-12}$ & $2.87 \cdot 10^{-20}$ \\
3   & 64  & $2.59 \cdot 10^{-14}$ & $5.29 \cdot 10^{-26}$ \\
5   & 32  & $9.32 \cdot 10^{-15}$ & $1.58 \cdot 10^{-34}$ \\
5   & 64  & $8.01 \cdot 10^{-15}$ & $1.58 \cdot 10^{-34}$ \\\hline
\end{tabular}
\end{table}

Due to large standard deviation, we performed statistical tests on the results presented in Figure~\ref{plot:jump:comparison} using two statistical tests: the Student's t-test as the one which checks mean values which are the subject of our theorems, and the Wilcoxon rank sum test as a non-parametric test. The results are presented in Table~\ref{stats:jump:comparison}, where for each row and each test the maximum $p$-value is shown out of two between the \oea and either of the parameterizations of the heavy-tailed \ollga. The $p$-values in all the cases are very small: except for the case $k=3, n=16$, they are all well below $10^{-10}$, which indicates a vast difference between the algorithms and hence a clear superiority of the heavy-tailed \ollga.

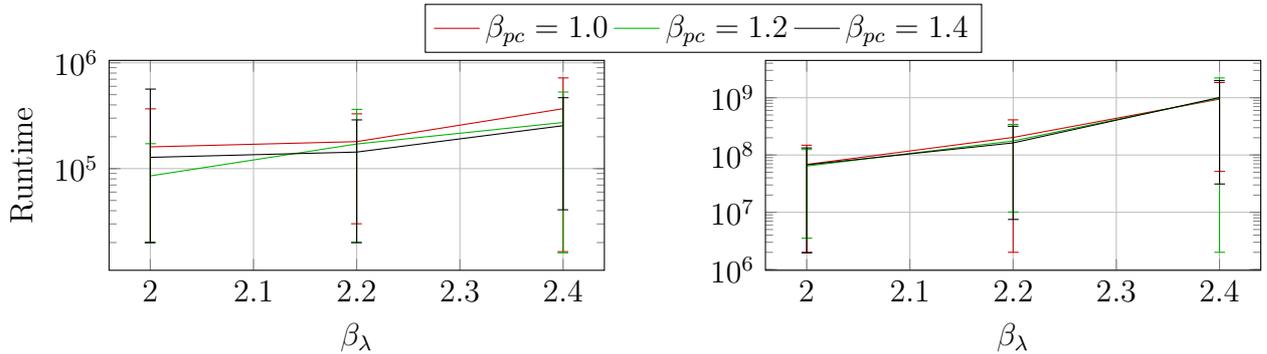
\begin{figure}[!t]
\begin{tikzpicture}
    \begin{semilogyaxis}[name=ax2, width=0.5\linewidth, height=0.2\textheight,
                       legend style={at={(1.2,1)}, anchor=south,legend columns=3, yshift=0.1cm},
                       cycle list name=threeconfigs, grid=major,
                       xlabel={$\beta_\lambda$}, ylabel={Runtime}]
        \addplot plot [error bars/.cd, y dir=both, y explicit] coordinates {(2.0,160458.46)+=(0,206772.6)-=(0,140458.46)(2.2,1.799e+05)+-(0,1.498e+05)(2.4,3.684e+05)+-(0,3.52e+05)};
        \addlegendentry{$\beta_{pc}=1.0$};
        \addplot plot [error bars/.cd, y dir=both, y explicit] coordinates {(2.0,85332.94)+=(0,86733.084)-=(0,65332.94)(2.2,170846.72)+=(0,192408.3)-=(0,150846.72)(2.4,2.729e+05)+-(0,2.569e+05)};
        \addlegendentry{$\beta_{pc}=1.2$};
        \addplot plot [error bars/.cd, y dir=both, y explicit] coordinates {(2.0,127732)+=(0,436867.86)-=(0,107732)(2.2,143298.94)+=(0,145536.47)-=(0,123298.94)(2.4,2.545e+05)+-(0,2.137e+05)};
        \addlegendentry{$\beta_{pc}=1.4$};
    \end{semilogyaxis}
    \begin{semilogyaxis}[at={ (ax1.south east) }, xshift=0.5cm, width=0.5\linewidth, height=0.2\textheight,
                       every axis legend/.code={\renewcommand\addlegendentry[2][]{}},
                       cycle list name=threeconfigs, grid=major,
                       xlabel={$\beta_\lambda$}]
        \addplot plot [error bars/.cd, y dir=both, y explicit] coordinates {(2.0,68367919)+=(0,79905811)-=(0,66367919)(2.2,2.0260088e+08)+=(0,2.0864976e+08)-=(0,2.0060088e+08)(2.4,9.465e+08)+-(0,8.948e+08)};
        \addlegendentry{$\beta_{pc}=1.0$};
        \addplot plot [error bars/.cd, y dir=both, y explicit] coordinates {(2.0,6.412e+07)+-(0,6.058e+07)(2.2,1.76e+08)+-(0,1.659e+08)(2.4,9.8176449e+08)+=(0,1.2349597e+09)-=(0,9.7976449e+08)};
        \addlegendentry{$\beta_{pc}=1.2$};
        \addplot plot [error bars/.cd, y dir=both, y explicit] coordinates {(2.0,6.737e+07)+-(0,6.541e+07)(2.2,1.628e+08)+-(0,1.553e+08)(2.4,1.011e+09)+-(0,9.798e+08)};
        \addlegendentry{$\beta_{pc}=1.4$};
    \end{semilogyaxis}
\end{tikzpicture}
\caption{Dependency of running times of the heavy-tailed \ollga
         on $\jump_k$ on $\beta_\lambda$ and $\beta_{pc}$
         for $k=3$ (on the left) and $k=6$ (on the right). Problem size $n=2^7$ is used.}
\label{plot:jump:lambda}
\end{figure}

The parameter study, presented in Figure~\ref{plot:jump:lambda}, suggests that for \jump the particular values of $\beta_{pc}$ are not very important, although larger values result in the marginally better performance. However, larger $\beta_\lambda$ tend to make the performance worse, which is more pronounced for larger jump sizes $k$. This finding agrees with upper bound proven in Corollary~\ref{cor:hyperparameters}, in which $(n/k)$ is raised to a power that is proportional to $\eps_\lambda = \beta_\lambda-1$.
Each difference is statistically significant with $p < 0.008$ using the Wilcoxon rank sum test.

\begin{figure}[!t]
\begin{tikzpicture}
    \begin{loglogaxis}[width=\linewidth, height=0.23\textheight,
                       legend pos=north west,
                       cycle list name=myplotcycle, grid=major, log base x=2,
                       xlabel={Problem size $n$}, ylabel={Runtime}]
        \addplot plot [error bars/.cd, y dir=both, y explicit] coordinates {(32,1076964.9)+=(0,1199914.1)-=(0,1076944.9)(64,8611206)+=(0,13040599)-=(0,8611186)(128,68367919)+=(0,79905811)-=(0,68367899)};
        \addlegendentry{$k=6$};
        \addplot plot [error bars/.cd, y dir=both, y explicit] coordinates {(32,267973.94)+=(0,622973.04)-=(0,267953.94)(64,1453299.6)+=(0,1627890.1)-=(0,1453279.6)(128,9912644.4)+=(0,21309415)-=(0,9912624.4)};
        \addlegendentry{$k=5$};
        \addplot plot [error bars/.cd, y dir=both, y explicit] coordinates {(16,16129.54)+=(0,19970.327)-=(0,16109.54)(32,52294.56)+=(0,53395.074)-=(0,52274.56)(64,1.799e+05)+-(0,1.635e+05)(128,1023150.8)+=(0,2015682.9)-=(0,1023130.8)};
        \addlegendentry{$k=4$};
        \addplot plot [error bars/.cd, y dir=both, y explicit] coordinates {(16,3726.5)+=(0,9569.256)-=(0,3706.5)(32,12333.34)+=(0,24515.504)-=(0,12313.34)(64,36516.26)+=(0,40366.678)-=(0,36496.26)(128,160458.46)+=(0,206772.6)-=(0,160438.46)};
        \addlegendentry{$k=3$};
        \addplot plot [error bars/.cd, y dir=both, y explicit] coordinates {(8,201.3)+-(0,174)(16,875.96)+=(0,883.30886)-=(0,855.96)(32,2345.84)+=(0,3612.3183)-=(0,2325.84)(64,5981)+-(0,4922)(128,24702.14)+=(0,36252.83)-=(0,24682.14)};
        \addlegendentry{$k=2$};                       
    \end{loglogaxis}
\end{tikzpicture}
\caption{Running times of the heavy-tailed \ollga with $\beta_{pc} = 1.0$ and $\beta_\lambda=2.0$
         on $\jump_k$ for $k\in[2..6]$
         depending on the problem size $n$.}
\label{plot:jump:all-k}
\end{figure}
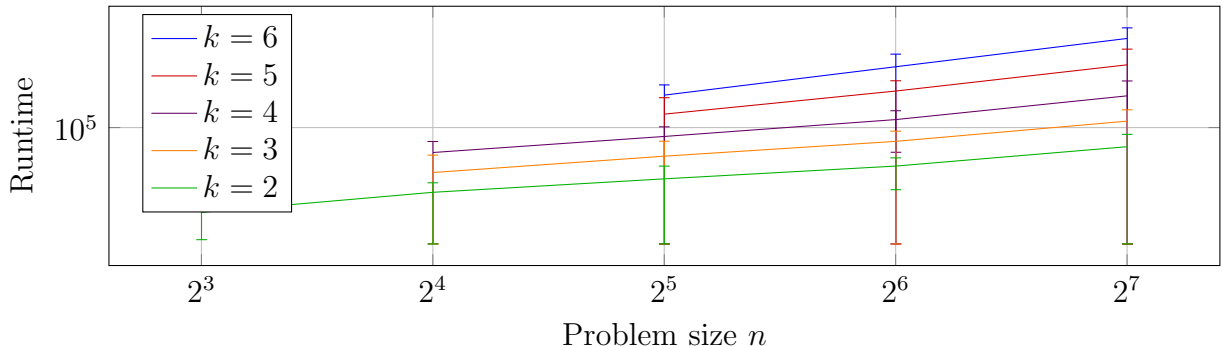

Finally, Figure~\ref{plot:jump:all-k} shows the running times of the heavy-tailed \ollga for a fixed parameterization $\beta_{pc} = 1.0$ and $\beta_\lambda = 2.0$ for all available values of $n$ and $k$, to give an impression of the typical running times of this algorithm on the \jump problem.

\section{Conclusion}

Using mathematical and experimental methods, we showed that choosing all parameters of an algorithm randomly from a power-law distribution can lead to a very good performance both on easy unimodal and on multimodal problems. This lazy approach to the parameter tuning and control problem requires very little understanding how the parameters influence the algorithm behavior. The only design choice left to the algorithm user is deciding the scaling of the parameters, but we observed that the natural choices worked out very well. Our empirical and theoretical studies show that the precise choice of the (other)  parameters of the power-law distributions does not play a significant role and they both suggest to use unbounded power-law distributions and to take a power-law exponent $2 + \eps$ for the population size (so that the expected cost of one iteration is constant) and $1 + \eps$ for other parameters (to maximize the positive effect of a heavy-tailed distribution). With these considerations, one may call our approach essentially \emph{parameter-less}. 

Surprisingly, our randomized parameter choice even yields a runtime which is better than the best proven runtime for optimal static parameters on \jump functions of some jump sizes. An interesting question (which we leave open for the further research) is whether a random parameter choice can outperform the algorithms with known optimal static parameters also on other problems. The experiments on \onemax when starting with a good solution (in distance $\sqrt{n}$ from the optimum) indicate that it is possible since there we observed a benefit from the key mechanisms of the \ollga.

\section*{Acknowledgements}
    This work was supported by RFBR and CNRS, project number 20-51-15009 and by a public grant as part of the Investissements d'avenir project, reference ANR-11-LABX-0056-LMH, LabEx LMH.

\section*{Declarations}
The authors have no relevant financial or non-financial interests to disclose.


\newcommand{\etalchar}[1]{$^{#1}$}

}

\end{document}